\pdfoutput=1
\documentclass{article}
\usepackage{arxiv}
\usepackage[T1]{fontenc}
\usepackage{microtype}
\usepackage[breakable,skins]{tcolorbox}
\makeatletter
\g@addto@macro\normalsize{%
  \setlength\abovedisplayskip{10pt plus 3pt minus 3pt}\setlength\belowdisplayskip{10pt plus 3pt minus 3pt}%
  \setlength\abovedisplayshortskip{6pt plus 3pt}\setlength\belowdisplayshortskip{8pt plus 3pt minus 2pt}}
\makeatother
\newtcolorbox{keyresult}{enhanced jigsaw, breakable, width=0.95\linewidth, center, colback=white, colframe=black, boxrule=0.8pt, sharp corners,
  left=12pt, right=12pt, top=8pt, bottom=8pt, toprule at break=0pt, bottomrule at break=0pt,
  before skip=12pt, after skip=12pt, before upper={\setlength{\topsep}{0pt}\setlength{\partopsep}{0pt}}}
\renewcommand{\shorttitle}{Stability of Outer Nesterov Momentum}

\usepackage[round]{natbib}

\usepackage{graphicx}   %
\usepackage{xurl}       %
\usepackage{algorithm}   %
\usepackage{algpseudocode}
\usepackage{tikz}   %
\usetikzlibrary{tikzmark,calc}
\algrenewcommand{\algorithmiccomment}[1]{\hfill\makebox[4cm][l]{$\triangleright$\ #1}}   %
\newcommand{\phasebracket}[4]{%
  \draw[thick] ($(pic cs:#1)+(8pt,8pt)$) -- ++(3pt,0) coordinate (t)
    -- ($(t|-{pic cs:#2})+(0,-2pt)$) coordinate (b) -- ++(-3pt,0);
  \draw[thick,-latex] ($(t)!0.5!(b)$) -- ++(9pt,0) coordinate (e);
  \node[circle,fill=black,text=white,inner sep=1pt,font=\scriptsize\bfseries,anchor=west] (c) at (e) {#3};
  \node[anchor=west,align=left,font=\small,inner sep=1pt] at ($(c.east)+(2pt,0)$) {#4};
}
\usepackage{hyperref}
\usepackage{url}
\usepackage{bm} %
\usepackage{booktabs}
\usepackage{mathtools}
\usepackage{amssymb}
\usepackage{xcolor}
\usepackage{longtable}
\usepackage{colortbl}   %
\usepackage{xltabular}   %
\newlength{\tabtermwidth}   %
\usepackage{placeins}   %
\usepackage[normalem]{ulem}   %
\definecolor{trianglered}{HTML}{D96868}   %
\definecolor{purple}{HTML}{8000C8}        %
\definecolor{uncgreen}{HTML}{00A040}      %
\colorlet{CYAN}{cyan}                      %
\definecolor{citegreen}{HTML}{064F43}
\definecolor{drawteal}{HTML}{17937C}
\definecolor{covlight}{HTML}{E4DEEF}
\definecolor{unstablered}{HTML}{A30F37}
\hypersetup{colorlinks=true,allcolors=black,citecolor=citegreen,pdftitle={Understanding Stability of Outer Nesterov Momentum on Stochastic Pseudo-Gradients},pdfauthor={Nishant Sharma, Laura Gomezjurado Gonzalez, Ganesh Talluri, Sarath Chandar, Hiroki Naganuma}}

\newcommand{\E}{\mathbb{E}}

\newcommand{\Cov}{\operatorname{Cov}}
\newcommand{\Var}{\operatorname{Var}}
\newcommand{\vecop}{\operatorname{vec}}
\newcommand{\kron}{\otimes}
\newcommand{\opnorm}[1]{\left\lVert#1\right\rVert}
\newcommand{\T}{\mathsf{T}}

\newcommand{\Nk}{\mathcal{N}_K}
\newcommand{\dcrit}{d_{2,\mathrm{crit}}}

\newtheorem{theorem}{Theorem}
\newtheorem{proposition}[theorem]{Proposition}
\newenvironment{proof}{\par\noindent\textit{Proof.}\ }{\hfill$\square$\par}
\newtheorem{corollary}[theorem]{Corollary}
\newtheorem{example}[theorem]{Example}   %
\newtheorem{assumption}{Assumption}   %
\newtheorem{definition}{Definition}
\makeatletter
\def\@opargbegintheorem#1#2#3{\trivlist
      \item[\hskip \labelsep{\bfseries #1\ #2}]{\bfseries (#3)}\hskip\labelsep\itshape}
\makeatother

\newcommand{\Lop}{\mathcal{L}}   %
\newcommand{\Vop}{\mathcal{V}}   %
\newcommand{\Gop}{\mathcal{G}}   %
\newcommand{\Qop}{\mathcal{Q}}   %
\newcommand{\Bm}{\mathsf{B}}     %
\newcommand{\Em}{\mathsf{E}}     %
\newcommand{\Sm}{\mathsf{S}}     %
\newcommand{\smom}{\mathsf{s}}   %
\newcommand{\Sop}{\mathcal{S}}   %
\newcommand{\Cop}{\mathcal{C}}   %

\begin{document}

\title{Understanding Stability of Outer Nesterov Momentum on Stochastic Pseudo-Gradients}
\author{%
  \textbf{Nishant Sharma}$^{1,2,3}$ \quad \textbf{Laura Gomezjurado Gonzalez}$^{4}$ \quad \textbf{Ganesh Talluri}$^{5}$\\
  \textbf{Sarath Chandar}$^{1,2,6,7}$ \quad \textbf{Hiroki Naganuma}$^{2}$\thanks{This work was conducted while Hiroki Naganuma was at Mila.}\\[1.5ex]
  $^{1}$Chandar Research Lab \quad $^{2}$Mila -- Quebec AI Institute \quad $^{3}$Universit\'e de Montr\'eal\\
  $^{4}$Stanford University \quad $^{5}$Arizona State University \quad $^{6}$Polytechnique Montr\'eal\\
  $^{7}$Canada CIFAR AI Chair\\[1ex]
  {\small\texttt{nishant.sharma@mila.quebec}, \texttt{lpgomez@stanford.edu}, \texttt{talluri.ganeshsai@gmail.com},}\\
  {\small\texttt{sarath.chandar@mila.quebec}, \texttt{naganuma.hiroki@mila.quebec}}
}
\date{}
\maketitle

\begin{abstract}
Large neural networks are commonly trained across many workers, and communication between them is a major cost. Workers therefore take several local optimization steps before sending their net parameter displacements, or \textit{pseudo-gradients}, to an outer optimizer, which often applies Nesterov momentum. We study how the outer momentum, the number of local steps and the averaging over workers control the training dynamics, and in particular their stability.  For random quadratics, we derive an exact mean-square stability criterion in any dimension, which reveals that the inner batch size determines whether increasing the number of local steps eventually stabilizes or destabilizes the outer iteration. Moreover, while averaging the pseudo-gradients of independent workers can make larger outer momentum values admissible, the number of workers required for stability can grow exponentially with the number of local steps. Finally, we show that the mean-square dynamics governing stability can be probed directly at intermediate neural-network checkpoints using shared-minibatch perturbations. We test our theoretical findings across networks, and measure their consequences for training loss.

\end{abstract}
\vspace{1.5em}
\begin{figure}[H]
\centering
\includegraphics[width=0.73\linewidth]{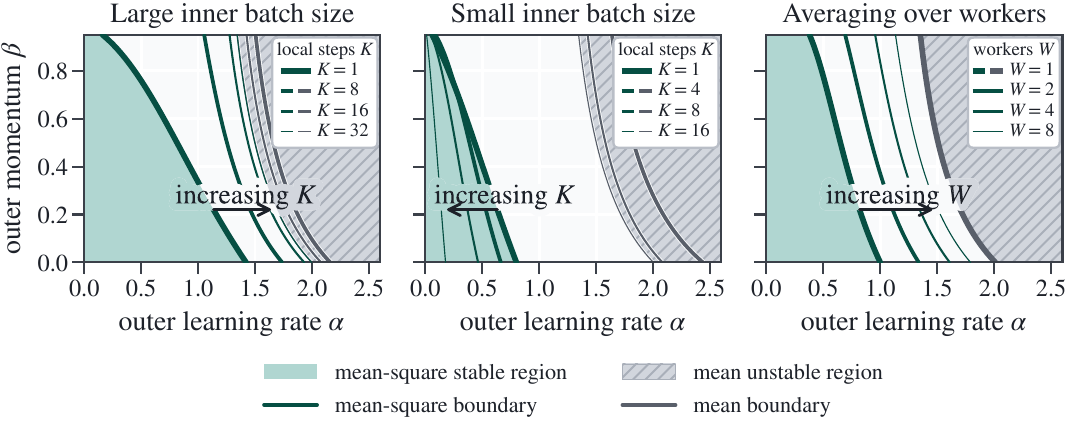}   %
\caption{\textbf{Mean-square stable regions in $(\alpha,\beta)$.} For inner batch size $b$ and inner-batch threshold $b_c$ (as defined in Section~\ref{sec:cons-K}), more local steps enlarge the mean-square stable region when $b>b_c$ (\uline{left}), and shrink it when $b<b_c$ (\uline{middle}). Averaging the pseudo-gradient over $W$ workers enlarges the mean-square stable region (\uline{right}), but never past the mean boundary (gray), which averaging leaves unchanged.}
\label{fig:phase}
\end{figure}
\clearpage

\addtocontents{toc}{\protect\setcounter{tocdepth}{-1}}

\section{Introduction}
\label{sec:intro}

Deep neural networks, and large language models (LLMs) in particular, are now trained on many workers, often spread across compute nodes or even data centers. In such distributed training, each worker holds a copy of the model, and gradients are exchanged at every step to keep the copies in sync. This communication has become one of the main drivers of training costs at scale \citep{douillard2023diloco}. A common solution is to let each worker train locally for $\boldsymbol{K}$ optimization steps and send only its displacement, the \textit{pseudo-gradient}, to an outer update \citep{mcmahan2017fedavg}. This outer update often applies Nesterov momentum \citep{nesterov1983method}, which has been found to accelerate training on one or many workers \citep{kallusky2025snoo,agarwala2026twophase} and to outperform SGD and Adam as the outer optimizer \citep{douillard2023diloco}.

Concretely, at outer step $t$, each of $\boldsymbol{W}$ workers starts from the shared parameters $\theta_t$ and runs $K$ steps of minibatch stochastic gradient descent (SGD) with batch size $b$ and inner learning rate $\eta$ (its \textit{inner trajectory}). The outer optimizer averages the workers' pseudo-gradients and applies Nesterov momentum to the average, with outer learning rate $\alpha$ and outer momentum $\beta$ (Algorithm~\ref{alg:outer}).
\begin{algorithm}[!ht]
\caption{Two-phase optimization with outer Nesterov momentum}
\label{alg:outer}

\begin{minipage}{0.83\linewidth}
\begin{algorithmic}[1]
\Require start $\theta_0$, inner rate $\eta$, inner batch size $b$, outer rate $\alpha>0$, outer momentum $0\le\beta<1$, number of local steps $K$, number of workers $W$
\State $u_{-1}\gets0$
\For{$t=0,1,2,\ldots$}
  \For{$w=1,\ldots,W$} \Comment{in parallel}\tikzmark{algOuterInA}
    \State \parbox[t]{0.66\textwidth}{run $K$ inner SGD steps at rate $\eta$ with batch size $b$ from $\theta_t$ on fresh minibatches, ending at $\theta_{t,K}^{(w)}$}
  \EndFor\hfill\tikzmark{algOuterInB}
  \State $P_t\gets\theta_t-\frac1W\sum_{w=1}^W\theta_{t,K}^{(w)}$ \Comment{pseudo-gradient}\tikzmark{algOuterOutA}
  \State $u_t\gets\beta u_{t-1}+P_t$ \Comment{buffer updated first}
  \State $\theta_{t+1}\gets\theta_t-\alpha\bigl(\beta u_t+P_t\bigr)$ \Comment{uses the updated buffer}\tikzmark{algOuterOutB}
\EndFor
\end{algorithmic}
\end{minipage}
\begin{tikzpicture}[remember picture,overlay]
\phasebracket{algOuterInA}{algOuterInB}{1}{Inner\\phase}
\phasebracket{algOuterOutA}{algOuterOutB}{2}{Outer\\step}
\end{tikzpicture}
\end{algorithm}

Several familiar training methods fit this two-phase template. Local SGD and federated averaging \citep{mcmahan2017fedavg,stich2019local,lin2020localsgd} are the simplest case, $\alpha=1$ and $\beta=0$, in which the outer step just averages the workers' parameters. Lookahead \citep{zhang2019lookahead} keeps $\beta=0$ but takes $\alpha<1$ on a single worker, and SNOO \citep{kallusky2025snoo} adds the outer Nesterov step, still on a single worker. DiLoCo \citep{douillard2023diloco} and, more recently, MuLoCo \citep{therien2025muloco} apply the outer Nesterov step across workers using AdamW \citep{kingma2015adam,loshchilov2019adamw} and Muon \citep{jordan2024muon} as inner optimizers, respectively. SlowMo \citep{wang2020slowmo} also adds momentum to the outer step, adaptive federated optimizers \citep{reddi2021adaptive} use an adaptive outer optimizer, semi-synchronous variants target high-latency links \citep{naganuma2025palsgd}, and asynchronous ones tolerate stragglers and hardware failures \citep{douillard2026decoupled}. 

These methods, which have been scaled to language models with billions of parameters \citep{charles2025scaling,douillard2025streaming}, have many interacting hyperparameters, such as the inner and outer optimizers, the outer momentum, the learning rates, the batch size and the number of local steps $K$, and today they are set by tuning. SNOO, for example, finds that outer momentum $0.9$ is more sensitive to $K$ than smaller values, and that suboptimal outer hyperparameters can substantially degrade performance \citep{kallusky2025snoo}. Among these choices, the number of local steps $K$ is special, because it influences the variance of the pseudo-gradient (Section~\ref{sec:block-moments}), and this variance grows with the error left by earlier outer steps, so it does not average out over the outer steps. As a result, the error can shrink on average while its mean square grows without bound, so an analysis of the mean alone can wrongly predict stability, as in the region between the green and grey boundaries of Figure~\ref{fig:phase}. Moreover, the stability of this two-phase setup is less understood than that of standard SGD \citep{wu2018sgdselection,ma2021linear,granziol2022batchsize,velikanov2023view,ziyin2023saddles,mulayoff2024exact}, because the outer optimizer acts on a stochastic $K$-step pseudo-gradient rather than on a single stochastic gradient. In this work, we give a theory that explains how changing the outer momentum, the number of local steps and the averaging over workers controls the training dynamics, and in particular their mean-square stability, which provides both a deeper mechanistic understanding of these methods and principled guidance for choosing their hyperparameters.

Our contributions are as follows.
\begin{enumerate}
\item For random quadratics, we derive a necessary and sufficient condition for the mean-square stability of outer Nesterov momentum on pseudo-gradients from \(K\) stochastic inner SGD steps (Theorem~\ref{thm:full}).
\item From it we derive four implications: (i) at a fixed $\alpha$, the noise budget can only shrink as $\beta$ grows (Proposition~\ref{prop:momentum}); (ii) the inner batch size decides whether large $K$ is eventually stable or unstable (Proposition~\ref{thm:blocklength}; Figure~\ref{fig:phase}); (iii) averaging $W$ workers divides the variance only by $W$, and the number of workers needed for stability can grow exponentially in $K$ (Corollary~\ref{cor:averaging}; Figure~\ref{fig:phase}); (iv) for fixed total batch size budget, splitting it across workers cannot improve mean-square stability (Proposition~\ref{prop:totalbatch}).

\item We measure stability at an intermediate neural-network checkpoint without knowing a minimizer, by tracking whether a small difference between two runs that share their minibatches grows or shrinks. We apply this measurement at several checkpoints of an MLP and a ResNet-20, both trained on CIFAR-10, and of Pythia language models from 70m to 1b parameters, and test the trends that the four implications predict (Figure~\ref{fig:ckpt-amp}). We also measure the consequences of instability for the training loss (Section~\ref{sec:checkpoint-measure}; Figure~\ref{fig:ckpt-spread-impl}).
\end{enumerate}

\section{Related Work}
\label{sec:related}

\citet{agarwala2026twophase} gives a theory of two-phase optimizers, deriving the second-moment dynamics of Lookahead and LA-DiLoCo on high-dimensional linear regression, and shows in a deterministic analysis that outer Nesterov momentum accelerates convergence on an effective spectrum set by the inner optimizer. \citet{topollai2026restart} show in language-model pretraining that the stable range of outer learning rates and momentum values changes with the communication period, and use deterministic mode dynamics to motivate restarts of the outer momentum that widen this range. \citet{khaled2025outer} prove convergence guarantees for local SGD with an outer learning rate, outer momentum and outer acceleration. However, their worst-case guarantees depend on the gradient noise only through a uniform variance bound, so two problems with the same bound can receive the same guarantee even when one is stable and the other is not (illustrated in Appendix~\ref{app:bounded-noise-example}).

Mathematically, our analysis is closest to \citet{mulayoff2024exact}, who derive an exact mean-square threshold for minibatch SGD. They connect the mean-square stability of SGD to stationary covariance and expected loss. We extend this perspective to outer momentum acting on an entire stochastic trajectory. While their analysis is around a minimum, through shared-minibatch perturbations we also investigate these dynamics at intermediate neural-network checkpoints. Their threshold continues a line of work that uses linear stability to characterize which minima SGD can reach \citep{wu2018sgdselection,ma2021linear,wu2022alignment}. Stochastic momentum is known to be sensitive to multiplicative noise \citep{gupta2024nesterov}. To our knowledge, however, no exact mean-square criterion was available for outer momentum on a stochastic pseudo-gradient. Our criterion is the spectral radius of a map on second moments, which is the form of mean-square stability for jump-linear systems \citep{costa2005mjls}.

A separate line of work studies how training behaves at its stability boundary. Under full-batch gradient descent, the sharpness of a neural network typically rises to $2/\eta$ and hovers there, at the edge of stability \citep{cohen2021eos,damian2023selfstabilization}. Minibatch SGD, with and without momentum, trains at an analogous edge of stochastic stability \citep{andreyev2024edge,liao2026sharpnessgap,andreyev2026momentum}. Among two-phase optimizers, \citet{khandelwal2026scaffold} observe edge-of-stability dynamics in federated averaging, the case $\alpha=1$ and $\beta=0$. Whether training with outer momentum also operates at the edge of stability is, to our knowledge, open (Section~\ref{sec:limitations}).

\section{Setup}
\label{sec:setup}
Here, we make the setup from the introduction precise. With a single worker per outer step (averaging over $W$ workers in Section~\ref{sec:cons-W}), the pseudo-gradient comes from $K$ steps that start at $\theta$ and end at $\theta_{K,\omega}$, where $\omega$ denotes their minibatches $\xi_0,\ldots,\xi_{K-1}$, each of $b$ samples. Written as a function of the starting point $\theta$,
\begin{equation}
P_{K,\omega}(\theta)=\theta-\theta_{K,\omega},
\label{eq:pseudo-gradient}
\end{equation}
we call this map from $\theta$ to its displacement a \textit{block}, and write $P_\omega$ for it when $K$ is fixed (Figure~\ref{fig:setup}(a)). At outer step $t$, the pseudo-gradient is this map evaluated at the current parameters, $P_t=P_{\omega_t}(\theta_t)$, where the minibatches $\omega_t$ are drawn independently at every outer step, and the outer update of Algorithm~\ref{alg:outer} is
\begin{equation}
u_t=\beta u_{t-1}+P_t,\qquad \theta_{t+1}=\theta_t-\alpha\,(\beta u_t+P_t),
\label{eq:snoo}
\end{equation}
with the momentum buffer started at $u_{-1}=0$. Throughout, $\alpha>0$ and $0\le\beta<1$.

\begin{figure}[t]
\centering
\begin{minipage}[t]{0.49\textwidth}\vspace{0pt}\centering
\includegraphics[trim=0 100.1 0 0,clip]{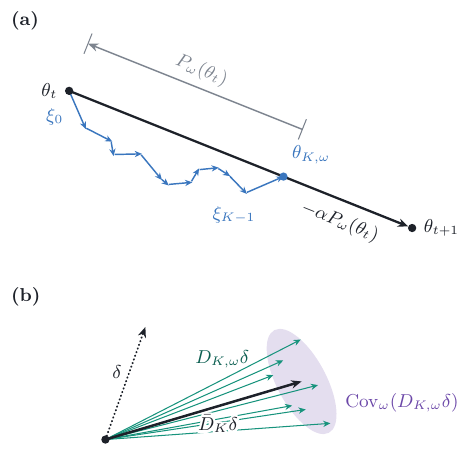}
\end{minipage}\hfill
\begin{minipage}[t]{0.49\textwidth}\vspace{0pt}\centering\setlength{\parskip}{0pt}
\includegraphics[trim=0 79 0 132.5,clip]{figures/fig_setup_arxiv.pdf}\par\vspace{14pt}
\includegraphics[trim=46.5 12.75 4.75 155,clip,scale=1.11]{figures/fig_setup_arxiv.pdf}
\end{minipage}
\caption{\textbf{The pseudo-gradient of $K$ inner SGD steps, and the block operator.} (\uline{a}) From $\theta_t$ the inner trajectory takes $K$ SGD steps on minibatches $\xi_0,\ldots,\xi_{K-1}$ and ends at $\theta_{K,\omega}$. The pseudo-gradient $P_\omega(\theta_t)$ is the displacement from $\theta_{K,\omega}$ to $\theta_t$. The outer step moves $\theta_t$ by $-\alpha P_\omega(\theta_t)$ to $\theta_{t+1}$. (\uline{b}) The block operator $D_{K,\omega}$ maps a displacement $\delta$ of the starting point to the change $D_{K,\omega}\delta$ of the pseudo-gradient. Each \textcolor{drawteal}{green} arrow is the change $D_{K,\omega}\delta$ for one draw of the minibatches $\omega$. The draws have mean $\bar D_K\delta$, and the {\setlength{\fboxsep}{1pt}\colorbox{covlight}{shaded}} ellipse marks their covariance. Mean-square stability depends on $D_{K,\omega}$ only through $\bar D_K$ and the second moment $\Sm_K$ of Section~\ref{sec:block-moments}.}
\label{fig:setup}
\end{figure}

Initially, through Section~\ref{sec:setup-model} we will assume that the minibatch losses share a minimizer $\theta^*$ (an \textit{interpolating minimum} in the terminology of \citealt{mulayoff2024exact}) and as a part of the stability analysis, we track the error $e_t=\theta_t-\theta^*$. With a shared minimizer the error obeys a homogeneous recurrence, so stability means that the error vanishes. Section~\ref{sec:checkpoint-jacobian} drops this assumption, which makes the minimizer of the full loss a \textit{regular minimum} in the terminology of \citet{mulayoff2024exact}. The error then no longer vanishes, but, importantly, the same stability criterion still decides whether it stays bounded.

\subsection{Block operator and stability}
\label{sec:setup-model}
\label{sec:setup-stability}

We model the minibatch losses as quadratics with a common minimizer,
\begin{equation}
\begin{gathered} %
\ell_\xi(\theta)=\tfrac12(\theta-\theta^*)^\top H_\xi(\theta-\theta^*),
\\ H_\xi=H_\xi^\top\succeq0,
\end{gathered}
\label{eq:quadratic-model}
\end{equation}
where $H_\xi$ is the Hessian of the minibatch loss $\ell_\xi$. Since an SGD step at rate $\eta$ sends the error $e$ to $(I-\eta H_\xi)e$, after $K$ steps the error is multiplied by $M_{K,\omega}=Q_{\xi_{K-1}}\cdots Q_{\xi_0}$ with $Q_\xi=I-\eta H_\xi$, so the pseudo-gradient is linear in the error,
\begin{equation}
\begin{gathered} %
P_{K,\omega}(\theta)=D_{K,\omega}(\theta-\theta^*),
\\
D_{K,\omega}=I-M_{K,\omega}.
\end{gathered}
\label{eq:block-linear}
\end{equation}
We call the map $D_{K,\omega}$ the \textit{block operator}. In other words, the inner trajectory replaces the curvature of the loss with the block operator, on which the outer Nesterov step acts. In the deterministic setting its spectrum is the effective spectrum of \citet{agarwala2026twophase}. Note that the block operator is random, because it depends on the minibatches $\omega$, although not on $\theta$, and for $K\ge2$ it is in general not symmetric, so the pseudo-gradient need not be the gradient of any loss. Since it does not depend on $\theta$, displacing the starting point by $\delta$ changes the pseudo-gradient by exactly $D_{K,\omega}\delta$ (Figure~\ref{fig:setup}(b)), which is the reading that Section~\ref{sec:checkpoint-jacobian} carries over to networks.

Substituting \eqref{eq:block-linear} into \eqref{eq:snoo} gives linear recurrence for $z_t=(e_t,u_{t-1})$, the error and the momentum buffer that enter step $t$,
\begin{equation}
\begin{gathered} %
z_{t+1}=\T(D_{K,\omega_t})\,z_t,
\\
\T(D)=
\begin{bmatrix}
I-\alpha(1+\beta)D & -\alpha\beta^2 I\\
D & \beta I
\end{bmatrix}.
\end{gathered}
\label{eq:transition}
\end{equation}

\begin{assumption}[i.i.d.\ block operators]
\label{ass:block}
The block operators $D_t:=D_{K,\omega_t}$, $t\ge0$, of different outer steps are jointly independent and identically distributed (i.i.d.), and independent of the initial state $z_0$. Both have finite second moments, $\E\opnorm{D_t}_F^2<\infty$ and $\E\opnorm{z_0}^2<\infty$.
\end{assumption}

In the model, the assumption holds whenever every outer step draws its $K$ minibatches afresh.\footnote{Independence of the minibatches within a block is used for the block moments of Section~\ref{sec:block-moments}. Propositions~\ref{thm:blocklength} and~\ref{prop:totalbatch} and Corollary~\ref{cor:averaging}(b) use these moments.}

\begin{definition}[Stability]
\label{def:mss}
The recurrence \eqref{eq:transition} is \emph{mean stable} if $\E[z_t]\to0$, and \emph{mean-square stable} if $\E\opnorm{z_t}^2\to0$, for every initial state admitted by Assumption~\ref{ass:block}.
\end{definition}

As one would expect, mean-square stability implies mean stability, because $\opnorm{\E[z_t]}\le(\E\opnorm{z_t}^2)^{1/2}$, but not conversely (Figure~\ref{fig:mean-vs-ms} in Appendix~\ref{app:mean-vs-ms} gives an example).\footnote{If every $H_\xi$ vanishes on a direction, every block leaves it unchanged, and an error along it neither grows nor decays. We therefore carry out the analysis on the orthogonal complement of such common flat directions.}

\subsection{Block moments and how $K$ and $b$ enter them}
\label{sec:block-moments}

Define two averages of the block operator, its mean $\bar D_K=\E[D_{K,\omega}]$ and its second moment $\Sm_K=\E[D_{K,\omega}\kron D_{K,\omega}]$, with $\kron$ the Kronecker product. We will refer to them as the \textit{block moments}. As Section~\ref{sec:theory} shows, the stability criterion depends on the block operator only through these block moments. In the model, both are built from the mean and the second moment of a single inner step (derivation in Appendix~\ref{app:proof-moments}). If the $K$ one-step maps of a block are i.i.d.\ with $\bar Q=\E[Q_\xi]$ and $\mathcal Q=\E[Q_\xi\kron Q_\xi]$, then
\begin{equation}
\begin{gathered} %
\bar D_K=I-\bar Q^K,
\\
\Sm_K=I\kron I-I\kron\bar Q^K-\bar Q^K\kron I+\mathcal Q^K.
\end{gathered}
\label{eq:block-moments}
\end{equation}
Since $(Q_\xi\kron Q_\xi)\vecop(Y)=\vecop(Q_\xi YQ_\xi)$, the matrix $\Qop$ acts on a symmetric matrix $Y$ exactly as the map $Y\mapsto\E[Q_\xi YQ_\xi]$ does, and we use both forms.

Equation~\eqref{eq:block-moments} is precisely where the number of local steps enters. The mean block operator is governed by $\bar Q^K$ alone, whereas the second moment also contains $\Qop^K$, and the two powers vary at different rates in $K$, so an analysis using only $\bar D_K$ does not account for how the noise changes with $K$.

\paragraph{The batch size.} For $b$ samples drawn with replacement from sample curvatures $h_i$ with mean $\bar H$, so that $H_\xi$ is the average of the $b$ sampled $h_i$, the one-step mean $\bar Q=I-\eta\bar H$ does not depend on $b$ and the one-step second-moment map splits as
\begin{equation}
\begin{gathered} %
\Qop_b=\Qop_\infty+\tfrac1b\,\Cop,
\qquad
\Qop_\infty(Y)=\bar QY\bar Q,
\\
\Cop(Y)=\eta^2\,\E_i\bigl[(h_i-\bar H)\,Y\,(h_i-\bar H)\bigr],
\end{gathered}
\label{eq:batch-split}
\end{equation}
with $\Qop_\infty$ the large-batch limit and $\Cop$ positive\footnote{A linear map on symmetric matrices is \emph{positive} if it preserves positive semidefiniteness, and $\mathcal M\preceq\mathcal N$ means that $\mathcal N-\mathcal M$ is positive.} (derivation in Appendix~\ref{app:toy}). The batch therefore enters only through the noise term, as the factor $1/b$.
\section{The Stability Boundary}
\label{sec:theory}

\subsection{Second-moment map and the block noise}
\label{sec:full}

Under Assumption~\ref{ass:block}, the first two moments of $z_{t+1}$ depend only on those of $z_t$. With $D_K$ drawn from the common distribution of the $D_t$, define the \textit{second-moment map} on symmetric matrices
\begin{equation}
\Lop_K(X)=\E\bigl[\T(D_K)\,X\,\T(D_K)^\top\bigr].
\label{eq:lift}
\end{equation}
Then, with $A=\T(\bar D_K)$ the \textit{mean transition} and $\Sigma_t=\E[z_tz_t^\top]$ the second-moment matrix of $z_t$,
\begin{equation}
\E[z_{t+1}]=A\,\E[z_t],
\qquad
\Sigma_{t+1}=\Lop_K(\Sigma_t),
\label{eq:moment-recurrences}
\end{equation}
hence $\Sigma_t=\Lop_K^t(\Sigma_0)$ (proof in Appendix~\ref{app:proof-full}). Crucially, the second-moment map splits as
\begin{equation}
\begin{gathered} %
\Lop_K(X)=AXA^\top+\Bm\,\Vop_K(\Em X\Em^\top)\,\Bm^\top,
\\
\Vop_K(Y)=\E\bigl[(D_K-\bar D_K)\,Y\,(D_K-\bar D_K)^\top\bigr],
\\
\Bm=\begin{bmatrix}-\alpha(1+\beta)I\\ I\end{bmatrix},
\qquad
\Em=\begin{bmatrix}I&0\end{bmatrix},
\end{gathered}
\label{eq:decomposition}
\end{equation}
with $\Vop_K$ the \textit{block noise}, the covariance map of the block operator.
A mean-stability analysis replaces $D_K$ by its mean $\bar D_K$, so it keeps only the first term of \eqref{eq:decomposition}, which propagates the second moment through the mean transition $A$. The second term adds, at every outer step, the block noise applied to the current error. Because this term is a positive map, it can only raise the \textit{spectral radius} $\rho$, so $\rho(\Lop_K)\ge\rho(A)^2$, with equality when there is no block noise. A mean-stability analysis can therefore be wrong only by predicting stability for an unstable setting (example in Appendix~\ref{app:affine-example}).

\subsection{The mean condition and the noise budget}
\label{sec:stability}

By \eqref{eq:decomposition}, $\rho(\Lop_K)$ mixes the mean dynamics and the block noise in a single number. In Theorem~\ref{thm:full} we separate them into a condition on the mean transition and a budget for the noise. When $\rho(A)<1$, let
\begin{equation}
\Gop_{\alpha,\beta}(Y)=\sum_{t\ge0}\Em A^t\Bm\;Y\;\Bm^\top(A^\top)^t\Em^\top
\label{eq:noise-gain}
\end{equation}
be the \textit{noise gain}. If the block noise injects a covariance $Y$ at one outer step, $\Gop_{\alpha,\beta}(Y)$ is the second moment of the error that the mean dynamics build from it over all later steps. It depends on the distribution of the block operator only through its mean $\bar D_K$. The error in turn feeds the noise of the next block through $\Vop_K$, so we call $\rho(\Vop_K\circ\Gop_{\alpha,\beta})$ the \textit{loop gain}. For minibatch SGD, \citet{agarwala2024conservative} split stability in the same way, into a condition on the deterministic dynamics and a noise kernel norm below one, under a high-dimensional approximation of the dynamics. Theorem~\ref{thm:full} makes this split exact for $K$-step blocks with outer momentum. We call the reciprocal of the loop gain, $1/\rho(\Vop_K\circ\Gop_{\alpha,\beta})$, the \textit{noise budget} of $(\alpha,\beta)$. It is the factor by which the block noise could grow before the criterion fails, so the setting is stable when the budget exceeds one, and a larger noise gain means a smaller budget.

\begin{keyresult}
\begin{theorem}[Exact stability criterion and noise budget]
\label{thm:full}
Under Assumption~\ref{ass:block}:
\begin{enumerate}\setlength{\itemsep}{1pt}\setlength{\parskip}{0pt}
\item[(i)] the recurrence \eqref{eq:transition} is mean-square stable if and only if $\rho(\Lop_K)<1$, \newline equivalently \mbox{$\rho(\E[\T(D_K)\kron\T(D_K)])<1$};
\item[(ii)] $\rho(\Lop_K)<1$ if and only if $\rho(A)<1$ and
\begin{equation}
\rho\bigl(\Vop_K\circ\Gop_{\alpha,\beta}\bigr)<1 ;
\label{eq:budget-criterion}
\end{equation}
\item[(iii)] $\rho(A)<1$ if and only if, for every eigenvalue $\lambda$ of $\bar D_K$, the $2\times2$ matrix $\T(\lambda)$ of \eqref{eq:transition} has spectral radius below one; when the eigenvalues of $\bar D_K$ are real it reads
\begin{equation}
0<\alpha\lambda<\frac{2(1+\beta)}{1+2\beta}.
\label{eq:mean-condition}
\end{equation}
\end{enumerate}
\end{theorem}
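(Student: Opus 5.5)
Part (i) follows from the moment recurrence \eqref{eq:moment-recurrences}. Since $\E\opnorm{z_t}^2=\operatorname{tr}\Sigma_t$ and $\Sigma_t=\Lop_K^t(\Sigma_0)$ with $\Sigma_0\succeq0$, sufficiency is immediate: $\rho(\Lop_K)<1$ gives $\Lop_K^t\to0$. For necessity I would use that $\Lop_K$ is a positive map on the cone of positive semidefinite matrices. By the Krein--Rutman theorem (the finite-dimensional Perron--Frobenius theorem for cone-preserving maps), $\rho(\Lop_K)$ is an eigenvalue with an eigenvector $X_\star\succeq0$, $X_\star\neq0$. Taking $z_0$ Gaussian with covariance $X_\star$, which is admissible under Assumption~\ref{ass:block}, gives $\operatorname{tr}\Sigma_t=\rho^t\operatorname{tr}X_\star$, and this does not vanish if $\rho\ge1$. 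The Kronecker form of (i) follows because, under $\vecop$, $\Lop_K$ is the matrix $\E[\T(D_K)\kron\T(D_K)]$ acting on all matrices. Its spectral radius equals that of the restriction to symmetric matrices, since the antisymmetric part is dominated: every $X$ is a combination of PSD matrices, so $\Lop_K^t\to0$ on PSD matrices implies $\Lop_K^t\to0$ everywhere.

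For (ii) I would write $\Lop_K=\Lop_A+\Pi$, where $\Lop_A(X)=AXA^\top$ and $\Pi(X)=\Bm\Vop_K(\Em X\Em^\top)\Bm^\top$, and both maps are positive. This is a standard splitting of a positive operator into a resolvent part and a perturbation, and I would use the cone version of the M-matrix lemma: for positive maps with $\rho(\Lop_A)<1$,
\[
\rho(\Lop_A+\Pi)<1\iff\rho\bigl(\Pi(\mathcal I-\Lop_A)^{-1}\bigr)<1.
\]
I would prove this lemma directly. If $\rho(\Lop_K)<1$, then $\rho(\Lop_A)\le\rho(\Lop_K)<1$ by monotonicity of the spectral radius for positive maps, and $\rho(A)^2=\rho(\Lop_A)$. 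Moreover $(\mathcal I-\Lop_K)^{-1}=\sum\Lop_K^t$ is positive. Applying Krein--Rutman to $\Pi(\mathcal I-\Lop_A)^{-1}$ gives a PSD eigenvector $Y$ with eigenvalue $r$; setting $X=(\mathcal I-\Lop_A)^{-1}Y\succeq0$ yields $\Lop_K X=X-(1-r)Y$, which forces $r<1$. Conversely, if $r<1$, expand
\[
(\mathcal I-\Lop_K)^{-1}=(\mathcal I-\Lop_A)^{-1}\sum_n\bigl(\Pi(\mathcal I-\Lop_A)^{-1}\bigr)^n,
\]
which is a convergent positive series. A positive map whose Neumann series converges has spectral radius below one, again via its Perron eigenvector.

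It remains to identify $\rho(\Pi(\mathcal I-\Lop_A)^{-1})$ with the loop gain. We have $\Pi(\mathcal I-\Lop_A)^{-1}=\Bm\,\Vop_K\,\Em(\cdot)\Em^\top$ composed with the series in \eqref{eq:noise-gain}. By cyclic invariance of the nonzero spectrum under $FG\leftrightarrow GF$, the nonzero spectrum equals that of the map $Y\mapsto\sum_t\Em A^t\Bm\,\Vop_K(Y)\,\Bm^\top A^{t\top}\Em^\top=\Gop_{\alpha,\beta}\circ\Vop_K$, which in turn has the same spectral radius as $\Vop_K\circ\Gop_{\alpha,\beta}$. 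I expect the main obstacle to be this cone and Perron bookkeeping: the eigenvectors must be PSD, and the case $\Vop_K=0$ or a reducible $\Vop_K$ needs care, which I would handle with a perturbation $\Vop_K+\epsilon\,\mathrm{id}$ and continuity of $\rho$.

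For (iii), I would take $\bar D_K=SJS^{-1}$ in Jordan form, so that $A=\T(\bar D_K)$ is similar to a block-triangular arrangement with diagonal $2\times2$ blocks $\T(\lambda)$. Permuting coordinates, the spectrum of $A$ is the union of the spectra of the $\T(\lambda)$. The characteristic polynomial of $\T(\lambda)$ is
\[
\mu^2-\bigl(1+\beta-\alpha(1+\beta)\lambda\bigr)\mu+\beta(1-\alpha\lambda),
\]
using $\beta(1-\alpha(1+\beta)\lambda)+\alpha\beta^2\lambda=\beta-\alpha\beta\lambda$. For real $\lambda$ I would apply the Jury (Schur--Cohn) conditions $|c|<1$ and $|b_1|<1+c$, with $c=\beta(1-\alpha\lambda)$. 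The condition $1-b_1+c>0$ gives $\alpha\lambda>0$. The condition $1+b_1+c>0$ gives $2+2\beta-\alpha\lambda(1+2\beta)>0$, which is the upper bound in \eqref{eq:mean-condition}. Finally, $|c|<1$ is implied by these two bounds, since $0<\alpha\lambda<2$ and $\beta<1$.
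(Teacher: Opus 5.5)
\paragraph{One step fails: the Kronecker form in (i).} You argue that every $X$ is a combination of PSD matrices, so $\Lop_K^t\to0$ on PSD inputs forces $\mathcal M_K^t\to0$ on all matrices, where $\mathcal M_K=\E[\T(D_K)\kron\T(D_K)]$. Real PSD matrices are symmetric, so they span only the symmetric matrices. The antisymmetric matrices form a separate invariant subspace of $X\mapsto\E[\T(D_K)X\T(D_K)^\top]$, and nothing you wrote controls $\mathcal M_K$ there.

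The conclusion is true, but it needs a domination argument. Put $Y_t=\T(D_{t-1})\cdots\T(D_0)$. Convergence on the inputs $e_ke_k^\top$ gives $\E[(Y_t)_{ik}^2]\to0$. Cauchy--Schwarz then bounds each entry of $\E[Y_tXY_t^\top]$, for arbitrary $X$, by $\sum_{k,l}|X_{kl}|\,(\E[(Y_t)_{ik}^2]\,\E[(Y_t)_{jl}^2])^{1/2}$. This is how the paper closes the step. Alternatively, you can complexify. The Hermitian PSD matrices $vv^*$ with $v=a+ib$ do span all complex matrices, and $\E[(Y_tv)(Y_tv)^*]$ is Hermitian PSD with trace $\operatorname{tr}\Lop_K^t(aa^\top+bb^\top)\to0$.

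\paragraph{The rest is sound and close to the paper.} For necessity in (i), you take a PSD Krein--Rutman eigenmatrix as the covariance of a Gaussian start. The paper instead notes that PSD second moments span the symmetric matrices. Both work.

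In (ii) you use the same splitting, but argue differently. You factor $\mathcal I-\Lop_K=(\mathcal I-\mathcal P)(\mathcal I-\Lop_A)$ with $\mathcal P=\Pi(\mathcal I-\Lop_A)^{-1}$, and use the fact that if $\mathcal I-\Lop_K$ is invertible with a positive inverse, then $\rho(\Lop_K)<1$. The paper compares eigenmatrices directly. In the forward direction it uses $\mathcal A+\rho(\mathcal P)^{-1}\mathcal N\preceq\Lop_K$; in the converse it shows $X\preceq\mathcal PX$ via $(\mu\mathcal I-\mathcal A)^{-1}\preceq(\mathcal I-\mathcal A)^{-1}$ for $\mu\ge1$. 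These are equivalent M-matrix-style arguments, and yours is slightly tidier.

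The $\epsilon$-perturbation you anticipate is unnecessary. The finite-dimensional Krein--Rutman theorem for the closed, pointed, solid PSD cone gives a PSD eigenmatrix at the spectral radius with no irreducibility assumption, including when $\Vop_K=0$. Part (iii) is the paper's argument with a Jordan form in place of a Schur form. Your Jury computation is correct once $b_1$ is read as the trace of $\T(\lambda)$.
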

\end{keyresult}

Mean stability is therefore necessary but not sufficient, because the block noise must also fit within the noise budget that the mean dynamics set. Note that the moment recurrences \eqref{eq:moment-recurrences}, the split \eqref{eq:decomposition} and Theorem~\ref{thm:full} use nothing about $D_K$ beyond Assumption~\ref{ass:block}, so they hold for any i.i.d.\ block operators, of which the quadratic model is one case. Part (i) still holds if we consider only starts with an empty buffer, $u_{-1}=0$, as in the algorithm (proof in Appendix~\ref{app:proof-full}). In one dimension the criterion has a closed form (Appendix~\ref{app:proof-scalar}).

\paragraph{Small outer steps.} Under the mean condition and for a symmetric $\bar D_K$, the criterion with $W$ workers reads, to leading order in $\alpha$,
\begin{equation}
\begin{gathered}
\frac{\alpha\,\kappa_K}{(1-\beta)\,W}<1,
\\
\kappa_K=\rho\bigl(\Vop_K\circ\Sop^{-1}\bigr),\qquad \Sop(X)=\bar D_KX+X\bar D_K,
\end{gathered}
\label{eq:noise-scale}
\end{equation}
which has the form of the noise scale of heavy-ball SGD \citep{smith2018bayesian}, with $\alpha$ as the learning rate, $W$ as the batch size and $\kappa_K$ as the training-set size (Appendix~\ref{app:smallstep}).

At $K=1$ and $(\alpha,\beta)=(1,0)$ the criterion reduces to that of \citet{mulayoff2024exact}. At $(\alpha,\beta)=(1,0)$ and any $K$ the outer step continues the inner trajectory, and the criterion becomes the mean-square stability of plain SGD observed every $K$ steps.
\begin{figure}[!ht]
\centering
\includegraphics[width=\linewidth]{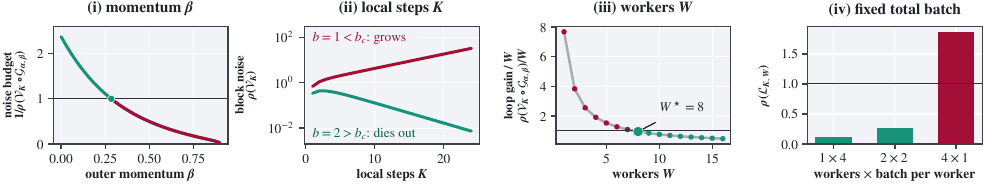}
\caption{\textbf{Mechanism behind each implication, on the model of Figure~\ref{fig:phase}.} (\uline{i}) The noise budget shrinks as $\beta$ grows (Proposition~\ref{prop:momentum}). (\uline{ii}) The block noise grows with $K$ below $b_c$ and dies out above it (Proposition~\ref{thm:blocklength}). (\uline{iii}) Averaging divides the loop gain by $W$ (Corollary~\ref{cor:averaging}). (\uline{iv}) Splitting a fixed total batch raises $\rho(\Lop_{K,W})$ (Proposition~\ref{prop:totalbatch}). \textcolor{drawteal}{Green} marks stable settings and \textcolor{unstablered}{red} marks unstable ones.}\label{fig:implications}
\end{figure}

\section{Implications for $\beta$, $K$ \&\ $W$}\label{sec:implications}

\subsection{Outer momentum shrinks the noise budget}
\label{sec:cons-beta}

While the momentum buffer can accelerate training by summing past pseudo-gradients, it also accumulates their block noise. Proposition~\ref{prop:momentum} shows how stability depends on $\beta$ at fixed $\alpha$, for any distribution of the block operator and in any dimension (Figure~\ref{fig:implications}(i)).

\begin{keyresult}
\begin{proposition}[Momentum monotonicity]
\label{prop:momentum}
Fix $\alpha$ and the distribution of the block operator, and let $\beta\le\beta'$ with $(\alpha,\beta')$ mean stable. Then $(\alpha,\beta)$ is mean stable, $\Gop_{\alpha,\beta}\preceq\Gop_{\alpha,\beta'}$, and $\rho(\Vop_K\circ\Gop_{\alpha,\beta})\le\rho(\Vop_K\circ\Gop_{\alpha,\beta'})$ for every block noise $\Vop_K$. In particular, mean-square stability at $(\alpha,\beta')$ implies mean-square stability at~$(\alpha,\beta)$.
\end{proposition}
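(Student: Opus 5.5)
The plan is to prove the three assertions in order: mean stability, then the order $\Gop_{\alpha,\beta}\preceq\Gop_{\alpha,\beta'}$, then the comparison of loop gains. The last one and the final claim should follow formally from the first two together with Theorem~\ref{thm:full}(ii).

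\emph{Mean stability.} By Theorem~\ref{thm:full}(iii) it suffices to work eigenvalue by eigenvalue of $\bar D_K$. For each eigenvalue $\lambda$ I would show that the set of $\beta\in[0,1)$ at which $\T(\lambda)$ has spectral radius below one is an interval starting at $0$. The characteristic polynomial of the $2\times2$ matrix $\T(\lambda)$ is
\[
p(z)=z^2-\bigl(1+\beta-\alpha(1+\beta)\lambda\bigr)z+\beta-\alpha\beta\lambda+\alpha\beta^2\lambda .
\]
I would write the Schur--Cohn (Jury) conditions for $p$ with complex $\lambda$ and check that each is preserved when $\beta$ decreases. For real $\lambda$ this is immediate from \eqref{eq:mean-condition}, since $2(1+\beta)/(1+2\beta)$ is decreasing in $\beta$. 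The complex case is messier, but it reduces to a one-variable check.

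\emph{Noise gain.} Write $X_\beta(Y)=\sum_t A^t\Bm Y\Bm^\top (A^\top)^t$, so that $\Gop_{\alpha,\beta}(Y)=\Em X_\beta\Em^\top$. Equivalently, $\Gop_{\alpha,\beta}(Y)=\sum_t h_t(\bar D_K)\,Y\,h_t(\bar D_K)^\top$, where $h_t$ is the impulse response from injected pseudo-gradient noise to the error. By z-transforming \eqref{eq:snoo} with $P_t=\bar D_K e_t+n_t$, its transfer function is
\[
H_\beta(z)=-\alpha c_\beta(z)\bigl(z-1+\alpha c_\beta(z)\bar D_K\bigr)^{-1},
\qquad
c_\beta(z)=\frac{1+\beta-\beta z^{-1}}{1-\beta z^{-1}} .
\]
To get $\Gop_{\alpha,\beta'}-\Gop_{\alpha,\beta}\succeq0$, I would look for a factorization $H_{\beta'}=H_\beta\,F$ together with an orthogonal decomposition of the impulse responses. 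Concretely, I want $h^{\beta'}$ to equal $h^{\beta}$ plus a component whose cross-correlation with $h^\beta$ vanishes or is itself positive. By Parseval this gives
\[
\Gop_{\alpha,\beta'}(Y)-\Gop_{\alpha,\beta}(Y)=\sum_t g_t Y g_t^\top\succeq0 .
\]
If that fails, my fallback is a differential Lyapunov argument. Differentiate the Lyapunov equation $X=AXA^\top+\Bm Y\Bm^\top$ in $\beta$. Then $\partial_\beta X$ solves a Lyapunov equation with the same $A$ and source $\partial A\,X A^\top+AX\,\partial A^\top+\partial\Bm\,Y\Bm^\top+\Bm Y\,\partial\Bm^\top$. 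I would then seek a $\beta$-dependent change of coordinates on the $(e,u)$ state that leaves $\Em$ unchanged and turns the projected source into a positive map of $Y$. Integrating from $\beta$ to $\beta'$ along the segment, which stays mean stable by the first step, then gives the order. I expect this step to be the main obstacle. The source term is not obviously positive, and $\bar D_K$ need not be symmetric, so the argument cannot just be done eigenvalue by eigenvalue.

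\emph{Loop gain and conclusion.} $\Vop_K$ is a positive map, being of the form $\E[\,\cdot\,]$ with congruences. Hence $\Gop_{\alpha,\beta}\preceq\Gop_{\alpha,\beta'}$ gives $\Vop_K\circ\Gop_{\alpha,\beta}\preceq\Vop_K\circ\Gop_{\alpha,\beta'}$ as positive maps on the PSD cone. The spectral radius of a positive map equals its Perron eigenvalue (Krein--Rutman for the self-dual cone), and this is monotone under $\preceq$. One way to see this is to compare the iterates applied to $I$ and use Gelfand's formula with $\rho(\mathcal M)=\lim\|\mathcal M^n(I)\|^{1/n}$. That gives the loop-gain inequality. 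Finally, mean-square stability at $(\alpha,\beta')$ gives, by Theorem~\ref{thm:full}(ii), $\rho(A_{\beta'})<1$ and loop gain below one at $\beta'$. The previous steps transfer both properties to $\beta$, and Theorem~\ref{thm:full}(ii) then gives mean-square stability at $(\alpha,\beta)$.
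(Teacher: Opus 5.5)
\paragraph{Verdict: genuine gap at the central step.} Your closing step matches the paper and is fine: positivity of $\Vop_K$, monotonicity of the spectral radius on the cone via Gelfand's formula, and Theorem~\ref{thm:full}(ii). Your real-eigenvalue check of mean stability is also fine. The gap is the central claim $\Gop_{\alpha,\beta}\preceq\Gop_{\alpha,\beta'}$, which you do not prove: you list two strategies and flag the second as the main obstacle.
\begin{itemize}
\item The first strategy, a factorization $H_{\beta'}=H_\beta F$ with a remainder orthogonal or positively correlated to $h^\beta$, restates the desired inequality rather than supplying a mechanism. Some $F$ always exists, but nothing makes such orthogonality hold uniformly over all $v$, $y$ and $\bar D_K$.
\item The differential Lyapunov fallback has no identified source of positivity.
\item Mean stability for complex eigenvalues of $\bar D_K$ is also left undone (the promised ``one-variable check''). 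The proposition needs this case because $\bar D_K$ need not be symmetric.
\end{itemize}

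\paragraph{The missing idea is already in your transfer function.} Substitute $w=z^{-1}$ and clear the factor $1-\beta w$:
\[
H_\beta=-\alpha P_\beta(w)\bigl[I-(I-\alpha\bar D_K)P_\beta(w)\bigr]^{-1}=\Psi\bigl(P_\beta(w)\bigr),
\qquad
P_\beta(w)=(1+\beta)w-\beta w^2 .
\]
So $\beta$ enters only through a scalar quadratic, and $\Psi$ does not depend on $\beta$. The useful relation between two momenta is therefore composition, not product.

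The paper builds the proof on this as follows.
\begin{itemize}
\item It shows $P_\beta=P_{\beta'}\circ\omega$ for an analytic $\omega:\mathbb D\to\mathbb D$ with $\omega(0)=0$. The map $\omega$ comes from a Loewner-type flow in $\tau=\log(1+\beta)$, whose vector field $-w\,p(w,\tau)$ has $\operatorname{Re}p>0$ on the disc.
\item It writes $v^\top\Gop_{\alpha,\beta}(yy^\top)v=\|v^\top\Psi(P_\beta(\cdot))\,y\|_{H^2}^2$.
\item It applies Littlewood's subordination principle, $\|f\circ\omega\|_{H^2}\le\|f\|_{H^2}$. Summing over rank-one inputs then gives the cone order.
\end{itemize}
The same inclusion $P_\beta(\mathbb D)\subseteq P_{\beta'}(\mathbb D)$ settles mean stability for complex eigenvalues without any Jury computation. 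Combine it with $\det(I-wA_\beta)=\det\bigl(I-(I-\alpha\bar D_K)P_\beta(w)\bigr)$: mean stability then says $1/(1-\alpha\lambda)\notin\overline{P_\beta(\mathbb D)}$ for each eigenvalue $\lambda$ with $\alpha\lambda\ne1$.

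Finally, the generating-function identity is polynomial in $\bar D_K$, and the $H^2$ bound is applied with fixed vectors $v,y$. Nothing ever has to be diagonalized, which removes exactly the obstacle you anticipated with non-symmetric $\bar D_K$.
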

\end{keyresult}

Consequently, if a setting is unstable because of its block noise, lowering $\beta$ can only help, and so can adding workers (Section~\ref{sec:cons-W}). Note that the noise budget is monotone in $\beta$, whereas $\rho(\Lop_K)$ itself need not be (Figure~\ref{fig:exact-beta} in Appendix~\ref{app:toy}).

\subsection{Local steps and the inner batch size}
\label{sec:cons-K}

Practitioners prefer the number of local steps to be as large as stability allows, since longer blocks reduce the frequency of communication \citep{stich2019local,wang2019adacomm,douillard2023diloco,topollai2026restart}. By \eqref{eq:block-moments} the block noise is $\Vop_K=\Qop^K-\Qop_\infty^K$, with $\Qop=\Qop_b$ and $\Qop_\infty$ as in \eqref{eq:batch-split}, while the mean block operator saturates, $\bar D_K\to I$ when $\rho(\bar Q)<1$. Whether the block noise dies out or grows with $K$ is therefore decided by $\rho(\Qop)$ alone. Since $\Qop^k$ propagates the second moment of a displacement through $k$ SGD steps, $\rho(\Qop)<1$ is mean-square stability of the inner steps themselves \citep{mulayoff2024exact}.

\begin{keyresult}
\begin{proposition}[Long blocks and the inner-batch threshold]
\label{thm:blocklength}
Let the inner steps of a block be i.i.d., let $\rho(\bar Q)<1$, and fix $(\alpha,\beta)$.
\begin{enumerate}\setlength{\itemsep}{1pt}\setlength{\parskip}{0pt}
\item[(i)] If $\rho(\Qop)<1$ and $(\alpha,\beta)$ satisfies the mean condition \eqref{eq:mean-condition} at $\lambda=1$, then $\rho(\Lop_K)\to\rho(\T(I))^2<1$, and the recurrence is mean-square stable for all large $K$.
\item[(ii)] If $\rho(\Qop)>1$, then $\rho(\Lop_K)\ge\gamma^2\bigl(\rho(\Qop)^K-\rho(\bar Q)^{2K}\bigr)\to\infty$ with $\gamma=\alpha(1+\beta)$, and the recurrence is mean-square unstable for all large $K$ at every outer rate and momentum.
\item[(iii)] With minibatches of size $b$ drawn with replacement as in \eqref{eq:batch-split},
\begin{equation}
\begin{split} %
&\rho(\Qop_b)<1
\\
&\quad\Longleftrightarrow\quad
b>b_c:=\rho\bigl(\Cop\circ(\mathcal I-\Qop_\infty)^{-1}\bigr),
\end{split}
\label{eq:critical-batch}
\end{equation}
so (i) applies when $b>b_c$ and (ii) when $b<b_c$.
\end{enumerate}
\end{proposition}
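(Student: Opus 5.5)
We work throughout with the decomposition \eqref{eq:decomposition}, $\Lop_K(X)=A_KXA_K^\top+\Bm\,\Vop_K(\Em X\Em^\top)\Bm^\top$, where $A_K=\T(\bar D_K)$ and $\Vop_K=\Qop^K-\Qop_\infty^K$. The last identity follows from \eqref{eq:block-moments}: $\Sm_K-\bar D_K\kron\bar D_K=\Qop^K-\bar Q^K\kron\bar Q^K$.

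\textbf{Part (i).} Since $\rho(\bar Q)<1$, we have $\bar D_K=I-\bar Q^K\to I$, so $A_K\to\T(I)$. Since $\rho(\Qop)<1$ and $\rho(\Qop_\infty)=\rho(\bar Q)^2<1$, we also have $\Vop_K\to0$ in operator norm. Hence $\Lop_K\to\Lop_\infty(X):=\T(I)X\T(I)^\top$ as linear maps on the finite-dimensional space of symmetric matrices. The eigenvalues of $X\mapsto \T(I)X\T(I)^\top$ are the pairwise products $\mu_i\mu_j$ of eigenvalues of $\T(I)$. This spectrum is closed under conjugation, so its spectral radius is $\rho(\T(I))^2$, also on the symmetric subspace: take $X=vv^*+\bar v\bar v^\top$ for the top eigenvector $v$. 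Continuity of the spectral radius then gives $\rho(\Lop_K)\to\rho(\T(I))^2$. This limit is $<1$ because $\T(I)$ is block $\T(1)\kron I$, and Theorem~\ref{thm:full}(iii) at $\lambda=1$ gives $\rho(\T(1))<1$. Mean-square stability for all large $K$ then follows from Theorem~\ref{thm:full}(i).

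\textbf{Part (ii).} We need a lower bound on $\rho(\Lop_K)$. Let $\Pi$ be the projection $X\mapsto \Em X\Em^\top$ onto the error block. All three maps $\Lop_K$, $\Vop_K$ and $X\mapsto AXA^\top$ are positive maps; for $\Vop_K$ this holds because it is a covariance map. Hence $\Lop_K\succeq \Bm\Vop_K(\Em\cdot\Em^\top)\Bm^\top$. Compressing to the error block gives $\Em\Lop_K(X)\Em^\top\succeq \gamma^2\Vop_K(\Em X\Em^\top)$, since $\Em\Bm=-\gamma I$.

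Now embed: for PSD $Y$ set $X=\Em^\top Y\Em$. Iterating with positivity gives $\Em\Lop_K^n(\Em^\top Y\Em)\Em^\top\succeq\gamma^{2n}\Vop_K^n(Y)$. For positive maps, the Krein--Rutman / Perron--Frobenius theorem for cone-preserving maps says the spectral radius is attained by a PSD eigenvector and equals $\lim\|\mathcal M^n\|^{1/n}$ tested on the PSD cone. Combined with monotonicity of the norm on the PSD cone, this yields $\rho(\Lop_K)\ge\gamma^2\rho(\Vop_K)$.

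It remains to show $\rho(\Vop_K)\ge\rho(\Qop)^K-\rho(\bar Q)^{2K}$. Take the Perron PSD eigenvector $Y_0$ of $\Qop$, with $\Qop(Y_0)=\rho(\Qop)Y_0$. Then $\Vop_K(Y_0)=\rho(\Qop)^KY_0-\bar Q^KY_0\bar Q^{K\top}$. Pairing with the dual PSD Perron eigenvector $Z_0$ of $\Qop^*$, normalized so that $\langle Z_0,Y_0\rangle=1$, gives
\[
\langle Z_0,\Vop_K^n(Y_0)\rangle\ge(\rho(\Qop)^K-\rho(\bar Q)^{2K})^n
\]
after suitable care, up to the bound $\|\bar Q^K\|^2$ in place of $\rho(\bar Q)^{2K}$. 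Alternatively, we can argue that a positive map $\Vop_K=\Qop^K-\Qop_\infty^K$ with $\Qop_\infty^K\preceq\Qop^K$ satisfies $\rho(\Vop_K)\ge\rho(\Qop^K)-\rho(\Qop_\infty^K)$ by the variational characterization $\rho(\mathcal M)=\max_{Y\succeq0}\inf\{c:\mathcal M Y\preceq cY\}$ (Collatz--Wielandt). I expect this step to be the main obstacle: getting exactly $\rho(\bar Q)^{2K}$ rather than a norm. I would first try the Collatz--Wielandt lower form, $\rho(\mathcal M)\ge\sup\{c:\mathcal MY\succeq cY,\ Y\succeq0\}$ applied to $Y_0$. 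This requires $\bar Q^KY_0\bar Q^{K\top}\preceq\rho(\bar Q)^{2K}Y_0$, which is not automatic. If it fails, I would fall back to the asymptotic statement, which suffices for divergence since $\rho(\Qop)^K$ dominates. The divergence itself is then immediate, and instability for all large $K$ follows from Theorem~\ref{thm:full}(i).

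\textbf{Part (iii).} Here $\Qop_b=\Qop_\infty+\tfrac1b\Cop$ with both summands positive and $\rho(\Qop_\infty)<1$. This is the standard regular-splitting argument for positive maps: $\rho(\Qop_\infty+\Cop/b)<1$ if and only if $\rho(\tfrac1b\Cop(\mathcal I-\Qop_\infty)^{-1})<1$. The key fact is that $(\mathcal I-\Qop_\infty)^{-1}=\sum_n\Qop_\infty^n$ is positive. The proof uses the identity $\mathcal I-\Qop_b=(\mathcal I-\tfrac1b\Cop(\mathcal I-\Qop_\infty)^{-1})(\mathcal I-\Qop_\infty)$ together with Perron eigenvectors in the PSD cone, in both directions. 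This gives the equivalence with $b>b_c$. For the converse statement used in (ii), namely $b<b_c\Rightarrow\rho(\Qop_b)>1$, the same splitting applies with strict inequalities.
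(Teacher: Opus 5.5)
Your part (i) is the paper's argument. Your reduction $\rho(\Lop_K)\ge\gamma^2\rho(\Vop_K)$ in part (ii), obtained by iterating $\Lop_K\succeq\Bm\Vop_K(\Em\,\cdot\,\Em^\top)\Bm^\top$ and reading the growth off the cone, is a valid substitute for the paper's remark that swapping the two factors of a composition leaves the spectral radius unchanged.

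There is a gap in part (ii). You leave open the bound $\rho(\Vop_K)\ge\rho(\Qop)^K-\rho(\bar Q)^{2K}$, which is part of the statement and is reused in Corollary~\ref{cor:averaging}(b), and none of your candidate routes closes it.
\begin{itemize}
\item The Collatz--Wielandt route needs $\bar Q^KY_0\bar Q^K\preceq\rho(\bar Q)^{2K}Y_0$. This fails as soon as $\bar Q^K$ does not map the range of a singular $Y_0$ into itself.
\item The general inequality $\rho(\mathcal N-\mathcal M)\ge\rho(\mathcal N)-\rho(\mathcal M)$ for $0\preceq\mathcal M\preceq\mathcal N$ is false. Already for nonnegative matrices, $\mathcal M=\bigl[\begin{smallmatrix}0&1\\0&0\end{smallmatrix}\bigr]$ and $\mathcal N=\bigl[\begin{smallmatrix}0&1\\1&0\end{smallmatrix}\bigr]$ give $\rho(\mathcal N-\mathcal M)=0<1$.
\item For the same reason (the spectral radius is not subadditive), your fallback ``$\rho(\Qop)^K$ dominates'' does not by itself show $\rho(\Vop_K)\to\infty$.
\end{itemize}

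The missing idea is self-adjointness. Every $Q_\xi$ is symmetric, so $\bar Q$ is symmetric, and $\Qop$, $\Qop_\infty$ and hence $\Vop_K=\Qop^K-\Qop_\infty^K$ are self-adjoint for the trace inner product. A self-adjoint map has spectral radius equal to its operator norm, which dominates every Rayleigh quotient. Take your Perron eigenmatrix $Y_0\succeq0$ of $\Qop$, normalized by $\lVert Y_0\rVert_F=1$. Then
\[
\rho(\Vop_K)\ge\langle Y_0,\Vop_KY_0\rangle=\rho(\Qop)^K-\operatorname{tr}\bigl(Y_0\bar Q^KY_0\bar Q^K\bigr)\ge\rho(\Qop)^K-\rho(\bar Q)^{2K},
\]
because the spectral norm of the symmetric matrix $\bar Q^K$ is exactly $\rho(\bar Q)^K$. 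So your worry about getting a norm of $\bar Q^K$ in place of $\rho(\bar Q)^{2K}$ disappears, and this is the paper's proof.

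A smaller point in part (iii): you only assert the strict implication $b<b_c\Rightarrow\rho(\Qop_b)>1$, which is what lets (ii) apply. The paper obtains it by applying the same splitting criterion to $\Qop_b/\mu$ for $\mu\ge1$. The relevant radius $\rho\bigl(\tfrac1b\Cop\circ(\mu\mathcal I-\Qop_\infty)^{-1}\bigr)$ equals $b_c/b>1$ at $\mu=1$, and by continuity it stays above one for some $\mu>1$, which forces $\rho(\Qop_b)\ge\mu>1$.
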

\end{keyresult}

That is, in case (i) the blocks behave in the limit like one that lands exactly on the minimizer, $D_K=I$. Figure~\ref{fig:implications}(ii) shows the block noise in the two branches, and the left and middle panels of Figure~\ref{fig:phase} show their stable regions. \citet{mulayoff2024exact} showed that larger inner batches can only help the inner steps, and we show that the threshold $b_c$ grows linearly with $\eta$ at small inner rates (proof in Appendix~\ref{app:proof-blocklength}). Given a mean-stable outer step, whether long enough blocks are stable is thus decided by the inner optimizer alone, so to communicate less often one should raise the per-worker batch size above the threshold $b_c$ or lower the inner rate.

\subsection{Averaging over workers}
\label{sec:cons-W}

Besides the inner batch size, averaging over workers also reduces the block noise, because averaging independent pseudo-gradients keeps their mean and shrinks only their spread around it (Figure~\ref{fig:implications}(iii)).

\begin{keyresult}
\begin{corollary}[Averaging over workers]
\label{cor:averaging}
Let the pseudo-gradient of each outer step be the average over $W$ workers whose inner trajectories start from the same point, are independent given that point, and have block operators as in Assumption~\ref{ass:block}.
\begin{enumerate}\setlength{\itemsep}{1pt}\setlength{\parskip}{0pt}
\item[(a)] The averaged block operator has mean $\bar D_K$ and block noise $\Vop_K/W$, and $\rho(\Lop_{K,W})$ is non-increasing in $W$. The averaged recurrence is mean-square stable if and only if the mean condition holds and $W>\rho(\Vop_K\circ\Gop_{\alpha,\beta})$, so $W^\star=\lfloor\rho(\Vop_K\circ\Gop_{\alpha,\beta})\rfloor+1$ workers are needed and enough. If the mean condition fails, no number of workers suffices.
\item[(b)] Under the hypotheses of Proposition~\ref{thm:blocklength}(ii), mean-square stability with $W$ workers requires\newline \mbox{$W>\gamma^2\bigl(\rho(\Qop)^K-\rho(\bar Q)^{2K}\bigr)$} with $\gamma=\alpha(1+\beta)$, a bound that grows exponentially in $K$.
\end{enumerate}
\end{corollary}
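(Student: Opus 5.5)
We reduce everything to Theorem~\ref{thm:full} applied to the averaged block operator $D^{(W)}=\frac1W\sum_{w=1}^W D^{(w)}$. The workers' block operators are i.i.d.\ copies of $D_K$, independent across outer steps and of $z_0$. Hence the averaged operators $D^{(W)}_t$ again satisfy Assumption~\ref{ass:block}. Finite second moments follow from $\opnorm{D^{(W)}}_F^2\le\frac1W\sum_w\opnorm{D^{(w)}}_F^2$.

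For (a), we first note that $\E[D^{(W)}]=\bar D_K$. Next, expanding $\E[(D^{(W)}-\bar D_K)Y(D^{(W)}-\bar D_K)^\top]$ as a double sum over workers, the cross terms vanish by independence and zero mean. The block noise is therefore $\Vop_K/W$. Since $A=\T(\bar D_K)$ and $\Gop_{\alpha,\beta}$ depend only on the mean, they are unchanged. Theorem~\ref{thm:full}(ii) then says the averaged recurrence is mean-square stable iff $\rho(A)<1$ and $\rho(\Vop_K\circ\Gop_{\alpha,\beta})/W<1$, i.e.\ $W>\rho(\Vop_K\circ\Gop_{\alpha,\beta})$. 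The smallest such integer is $W^\star=\lfloor\rho(\Vop_K\circ\Gop_{\alpha,\beta})\rfloor+1$. If $\rho(A)\ge1$, no $W$ helps, because $A$ does not depend on $W$ and $\rho(\Lop_{K,W})\ge\rho(A)^2\ge1$.

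For the monotonicity of $\rho(\Lop_{K,W})$ in $W$, we use the split \eqref{eq:decomposition}: $\Lop_{K,W}=\mathcal A+\frac1W\mathcal N$, where $\mathcal A(X)=AXA^\top$ and $\mathcal N(X)=\Bm\Vop_K(\Em X\Em^\top)\Bm^\top$ are both positive maps. For $W\le W'$ we have $\Lop_{K,W'}\preceq\Lop_{K,W}$ in the positive-map order. For maps that preserve the PSD cone, this order gives monotonicity of the spectral radius. Concretely, $\Lop_{K,W'}^t(X)\preceq\Lop_{K,W}^t(X)$ for PSD $X$ by induction using positivity. By Gelfand's formula, evaluated on the cone (the cone is generating, so the norm of a positive map is attained at the identity), we get $\rho(\Lop_{K,W'})\le\rho(\Lop_{K,W})$. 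This cone step is where care is needed. I would invoke the Krein--Rutman/Perron–Frobenius fact $\rho(\mathcal M)=\lim_t\|\mathcal M^t(I)\|^{1/t}$ for positive $\mathcal M$.

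For (b), we reproduce the lower bound of Proposition~\ref{thm:blocklength}(ii) with the noise scaled by $1/W$. The proof of that proposition is not yet shown, so I plan to redo it as follows. Positivity gives $\rho(\Lop_{K,W})\ge\rho(\frac1W\mathcal N)$, and we bound the spectral radius of the noise part from below. Restricting to the $(1,1)$ block, the noise part maps $X$ with $\Em X\Em^\top=Y$ to a matrix whose top-left block is $\gamma^2\Vop_K(Y)$, with $\gamma=\alpha(1+\beta)$. Since all terms are positive, the top-left block of $\Lop_{K,W}^t(X)$ dominates $(\gamma^2\Vop_K/W)^t(Y)$. This gives $\rho(\Lop_{K,W})\ge\gamma^2\rho(\Vop_K)/W$.

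It remains to bound $\rho(\Vop_K)$, where $\Vop_K=\Qop^K-\Qop_\infty^K$. We take $Y$ to be the Perron eigenvector of $\Qop$, which is PSD, with $\Qop^K Y=\rho(\Qop)^KY$. We also have $\Qop_\infty^K Y=\bar Q^KY\bar Q^{K\top}$, whose norm is at most $\rho$-like $\|\bar Q^K\|^2\|Y\|$, asymptotically $\rho(\bar Q)^{2K}$. Testing $\Vop_K$ against the PSD dual eigenvector and iterating then gives $\rho(\Vop_K)\ge\rho(\Qop)^K-\rho(\bar Q)^{2K}$. This is the main obstacle: handling $\|\bar Q^K\|$ versus $\rho(\bar Q)^K$. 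A symmetric $\bar Q$ makes it exact, and otherwise one works in an adapted norm. Combining the two bounds, stability, i.e.\ $\rho(\Lop_{K,W})<1$, forces $W>\gamma^2(\rho(\Qop)^K-\rho(\bar Q)^{2K})$. Since $\rho(\Qop)>1>\rho(\bar Q)^2$, this bound grows like $\rho(\Qop)^K$, exponentially in $K$.
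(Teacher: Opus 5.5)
Your part (a) is correct and follows the paper's argument. Your reduction in (b) to $\rho(\Lop_{K,W})\ge\gamma^2\rho(\Vop_K)/W$, by dominating the error block, is a valid alternative to the paper's route. The paper instead swaps the two factors of $X\mapsto\Bm\Vop_K(\Em X\Em^\top)\Bm^\top$ and uses $\Em\Bm=-\gamma I$, which gives its spectral radius exactly as $\gamma^2\rho(\Vop_K)$.

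The gap is the step you yourself flag as the main obstacle, $\rho(\Vop_K)\ge\rho(\Qop)^K-\rho(\bar Q)^{2K}$. Neither mechanism you sketch establishes it.
\begin{itemize}
\item A small residual $\opnorm{\Vop_KY-\rho(\Qop)^KY}$ does not bound the spectral radius of a non-normal map from below.
\item Neither does a large pairing $\langle\Phi,\Vop_KY\rangle$ against a dual eigenvector of $\Qop$ (rather than of $\Vop_K$). For example, the positive map $Y\mapsto c\,Y_{22}\operatorname{diag}(1,0)$ is nilpotent, so its spectral radius is $0$, yet pairing its value at $I$ against $\operatorname{diag}(1,0)$ gives $c$.
\item An adapted norm only yields $(\rho(\bar Q)+\varepsilon)^{2K}$, not the constant in the statement.
\end{itemize}

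The missing idea is self-adjointness. In the model every $Q_\xi=I-\eta H_\xi$ is symmetric. Hence $\Qop$ and $\Qop_\infty$, and therefore $\Vop_K=\Qop^K-\Qop_\infty^K$, are self-adjoint for the trace inner product. Moreover $\bar Q$ is symmetric, so $\opnorm{\bar Q^K}=\rho(\bar Q)^K$. Take the eigenmatrix $X\succeq0$ of $\Qop$ at $\rho(\Qop)$ with $\opnorm{X}_F=1$. A Rayleigh quotient then closes the step:
\[
\rho(\Vop_K)\ge\langle X,\Vop_KX\rangle=\rho(\Qop)^K-\operatorname{tr}\bigl(X\bar Q^KX\bar Q^K\bigr)\ge\rho(\Qop)^K-\rho(\bar Q)^{2K}.
\]
This is the paper's argument in the proof of Proposition~\ref{thm:blocklength}(ii), which Corollary~\ref{cor:averaging}(b) reuses with the noise divided by $W$. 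No dual eigenvector, iteration or adapted norm is needed. Your hedge about a non-symmetric $\bar Q$ does not arise in this setting.
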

\end{keyresult}

By Proposition~\ref{prop:momentum} the count $W^\star$ never falls as $\beta$ rises, so the largest admissible momentum at a fixed rate is non-decreasing in $W$. \citet{agarwala2026twophase} analyzes averaging over workers in the zero-momentum case, and Table~\ref{tab:toy-w} in Appendix~\ref{app:toy} checks the count $W^\star$ against the second-moment map.

\subsection{Fixed total batch}
\label{sec:cons-total}

More workers at a fixed per-worker batch size cost more gradient computation, so for a fairer comparison we hold the total batch $W\times b$ fixed and ask whether splitting it across workers helps (Figure~\ref{fig:implications}(iv)).

\begin{keyresult}
\begin{proposition}[Fixed total batch]
\label{prop:totalbatch}
Let the inner steps of a block be i.i.d.\ with minibatches drawn with replacement as in \eqref{eq:batch-split}, and let $W$ divide $b$. At fixed $(\alpha,\beta,K)$, $W$ workers with batch $b/W$ each have the same mean block operator as one worker with batch $b$ and at least as much block noise, $\tfrac1W\Vop_K(b/W)\succeq\Vop_K(b)$, with equality at $K=1$. In particular, $\rho\bigl(\Lop_{K,W}(b/W)\bigr)\ge\rho\bigl(\Lop_K(b)\bigr)$.
\end{proposition}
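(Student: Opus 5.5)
The plan is to compare the two block noises term by term, in the cone of positive maps on symmetric matrices, and then pass the comparison through the split \eqref{eq:decomposition} to the spectral radius.

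\textbf{Mean block operator.} By \eqref{eq:batch-split}, the one-step mean $\bar Q=I-\eta\bar H$ does not depend on the batch size. So by \eqref{eq:block-moments}, $\bar D_K=I-\bar Q^K$ is the same for batch $b$ and for batch $b/W$. By Corollary~\ref{cor:averaging}(a), averaging $W$ workers keeps this mean. Hence the mean transition $A=\T(\bar D_K)$, and with it the mean condition and the noise gain $\Gop_{\alpha,\beta}$, coincide for the two configurations.

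\textbf{Block noise.} I use the identity $\Vop_K(b)=\Qop_b^K-\Qop_\infty^K$ from Section~\ref{sec:cons-K}, which follows from \eqref{eq:block-moments} and $\Vop_K(Y)=\E[D_KYD_K^\top]-\bar D_KY\bar D_K^\top$. By Corollary~\ref{cor:averaging}(a), the averaged configuration has noise $\tfrac1W\Vop_K(b/W)$. Insert $\Qop_b=\Qop_\infty+\tfrac1b\Cop$ and expand the $K$-fold composition as a sum over words in the two letters $\Qop_\infty$ and $\Cop$. The word with no $\Cop$ cancels against $\Qop_\infty^K$, so
\[
\Vop_K(b)=\sum_{j=1}^{K} b^{-j}\,\Pi_j,
\]
where $\Pi_j$ is the sum of all words containing exactly $j$ copies of $\Cop$. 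Replacing $b$ by $b/W$ and dividing by $W$ gives
\[
\tfrac1W\Vop_K(b/W)=\sum_{j=1}^{K} W^{j-1}b^{-j}\,\Pi_j .
\]
Therefore
\[
\tfrac1W\Vop_K(b/W)-\Vop_K(b)=\sum_{j=2}^{K}\bigl(W^{j-1}-1\bigr)b^{-j}\,\Pi_j .
\]
Both $\Qop_\infty(Y)=\bar QY\bar Q$ and $\Cop$ are positive maps, so every word, and hence every $\Pi_j$, is positive. The coefficients are nonnegative because $W\ge1$, which proves $\tfrac1W\Vop_K(b/W)\succeq\Vop_K(b)$. At $K=1$ the sum is empty, which gives equality.

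\textbf{Spectral radius.} By \eqref{eq:decomposition}, $\Lop_{K,W}(b/W)-\Lop_K(b)$ equals $X\mapsto \Bm\,\Delta(\Em X\Em^\top)\,\Bm^\top$ with $\Delta$ the positive difference above. So $0\preceq\Lop_K(b)\preceq\Lop_{K,W}(b/W)$ as positive maps. The step I expect to need the most care is turning this order into an inequality of spectral radii. I would argue as follows.
\begin{itemize}
\item For positive maps $\mathcal M\preceq\mathcal N$, induction gives $\mathcal M^t(I)\preceq\mathcal N^t(I)$. Indeed, $\mathcal M^{t+1}(I)=\mathcal M(\mathcal M^t(I))\preceq\mathcal M(\mathcal N^t(I))\preceq\mathcal N(\mathcal N^t(I))$, using positivity of $\mathcal M$ and then $\mathcal N-\mathcal M\succeq0$.
\item For a positive map, the operator norm on symmetric matrices is comparable to $\opnorm{\mathcal M^t(I)}$. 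This holds because any symmetric $Y$ satisfies $-\opnorm{Y}I\preceq Y\preceq\opnorm{Y}I$.
\item Gelfand's formula then gives $\rho(\mathcal M)=\lim_t\opnorm{\mathcal M^t(I)}^{1/t}\le\lim_t\opnorm{\mathcal N^t(I)}^{1/t}=\rho(\mathcal N)$.
\end{itemize}
Applying this with $\mathcal M=\Lop_K(b)$ and $\mathcal N=\Lop_{K,W}(b/W)$ yields $\rho\bigl(\Lop_{K,W}(b/W)\bigr)\ge\rho\bigl(\Lop_K(b)\bigr)$. Equivalently, if the mean condition holds, one can combine $\Gop_{\alpha,\beta}$ unchanged with the same monotonicity for $\Vop\circ\Gop_{\alpha,\beta}$ and Theorem~\ref{thm:full}(ii).
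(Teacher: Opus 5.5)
Your proof is correct and follows the paper's argument essentially step for step. You use the same word expansion of $\Qop_b^K$ with difference coefficients $(W^{s-1}-1)b^{-s}$, the same ordering of the two second-moment maps through their shared mean transition, and the same cone monotonicity of the spectral radius via $\mathcal M^t(I)\preceq\mathcal N^t(I)$ and Gelfand's formula; the only difference is that you bound with the operator norm where the paper uses the trace norm, which changes nothing.
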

\end{keyresult}

Expanding $\Qop_b^K$ by \eqref{eq:batch-split} writes the block noise as a sum over powers of $1/b$ with positive coefficients, so splitting is nearly free while per-worker batches stay large (proof in Appendix~\ref{app:proof-averaging}), and since workers run in parallel, splitting the batch makes each outer step faster. \citet{agarwala2026twophase} makes the same comparison in the isotropic case without momentum.

\section{Testing the Implications on Neural Networks}
\label{sec:checkpoint}

We now test the implications of Section~\ref{sec:implications} on neural networks. A network's minibatch losses have different minimizers, and its training rarely runs to convergence. We therefore linearize the pseudo-gradient of each block at a checkpoint and measure instability with a quantity that makes no reference to a minimizer. 

\subsection{The block Jacobian}
\label{sec:checkpoint-jacobian}

To connect our model to a network, we begin by asking how the pseudo-gradient of a block changes in response to a parameter perturbation $\delta$. The first-order expansion is
\begin{equation}
P_{K,\omega}(\theta+\delta)=P_{K,\omega}(\theta)+J_\omega(\theta)\,\delta+r_\omega(\delta),
\label{eq:taylor}
\end{equation}
where the remainder $r_\omega(\delta)$ is of order $\opnorm{\delta}^2$ wherever the pseudo-gradient is smooth in $\theta$. We call $J_\omega(\theta)$ the \textit{block Jacobian}. The chain rule gives
\begin{equation}
J_\omega(\theta)=I-\prod_{k=K-1}^{0}\bigl[I-\eta\,\nabla^2\ell_{\xi_k}(\theta_{k,\omega})\bigr].
\label{eq:chain-rule}
\end{equation}

Each Hessian is taken at the point the inner iterate visits, so the block Jacobian depends on the minibatches $\omega$ and, in general, on the starting point $\theta$.

In the model of Section~\ref{sec:setup-model} the block Jacobian $J_\omega(\theta)$ becomes the block operator $D_{K,\omega}$ at every $\theta$. Theorem~\ref{thm:full}, Proposition~\ref{prop:momentum} and Corollary~\ref{cor:averaging}(a) require only that the block Jacobian does not depend on $\theta$ and that blocks are i.i.d.\ across outer steps. We refer to this assumption as the \textit{frozen linearization}. In practice, we obtain this property by taking the block Jacobian of every outer step at a checkpoint $\theta_0$, with the minibatches drawn afresh for each worker at every outer step (as outlined in Algorithm~\ref{alg:frozen}). Note that each inner trajectory still runs on the actual loss from $\theta_0$, and each Hessian in \eqref{eq:chain-rule} is taken at the point it visits, so the inner steps are not replaced by a quadratic model of the loss at $\theta_0$.

\begin{figure}[t]   %
\begin{minipage}[t]{0.485\textwidth}\vspace{0pt}
\centering
\includegraphics[width=\linewidth]{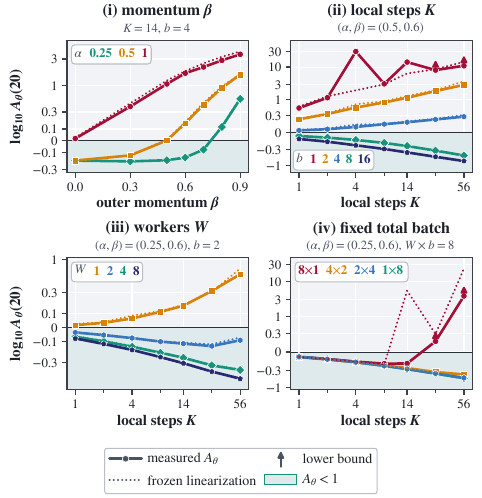}   %
\caption{\textbf{Measured amplification on the CIFAR MLP at step $120{,}000$.} Every panel shows $\log_{10}A_\theta(20)$, the amplification after $20$ outer steps, against the outer momentum in (\uline{i}) and against the number of local steps in (\uline{ii}) to (\uline{iv}). Solid lines are $A_\theta$ measured along actual runs, and dotted lines its prediction under the frozen linearization. Arrows ($\uparrow$) mark points where some runs diverged.}
\label{fig:ckpt-amp}
\end{minipage}\hfill
\begin{minipage}[t]{0.485\textwidth}\vspace{0pt}
\centering
\includegraphics[width=\linewidth]{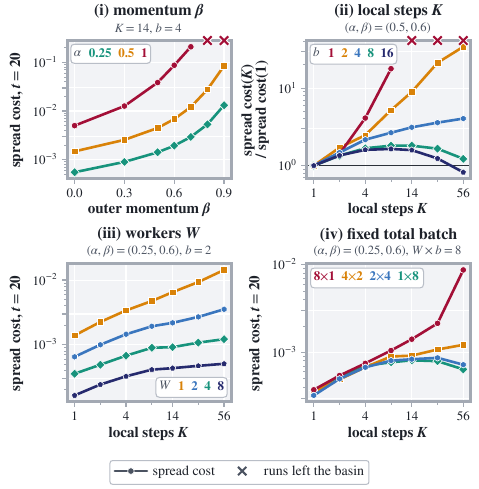}   %
\caption{\textbf{Spread cost on the CIFAR MLP at step $120{,}000$.} The spread cost \eqref{eq:loss-split} of $32$ independent runs (from different seeds) after $20$ outer steps, at the settings of Figure~\ref{fig:ckpt-amp}. A cross ($\times$) on the top edge marks a setting whose runs' average ends with a loss more than $0.1$ above the checkpoint's. Panel (\uline{ii}) shows the spread cost of each inner batch relative to $K=1$.}
\label{fig:ckpt-spread-impl}
\end{minipage}
\end{figure}

Under the frozen linearization, we can interpret the block Jacobian at the checkpoint as the block operator of Section~\ref{sec:setup-model}, and so we write $D_\omega=J_\omega(\theta_0)$. Since the minibatches are drawn afresh at every outer step, the $D_t=J_{\omega_t}(\theta_0)$ are i.i.d., so Assumption~\ref{ass:block} holds whenever they have finite second moments. The pseudo-gradient then becomes affine in $\theta$, $P_{K,\omega_t}(\theta)=p_t+D_t(\theta-\theta_0)$, as in the regular-minimum case of Section~\ref{sec:setup}, with an \textit{offset} $p_t=P_{K,\omega_t}(\theta_0)$ that need not have mean zero because $\theta_0$ is not a minimizer\label{sec:regular}. Importantly, the displacement of a run from $\theta_0$ obeys the forced recurrence (Appendix~\ref{app:proof-forced})
\begin{equation}
z_{t+1}=\T(D_t)\,z_t+\Bm\,p_t
\label{eq:forced-recurrence}
\end{equation}
exactly. The offset leaves $\Lop_K$ unchanged, so by Proposition~\ref{prop:bridge} the covariance of $\theta_t$ across independent streams of minibatches, which we call the \textit{spread} of a run, stays bounded at a noise floor when $\rho(\Lop_K)<1$ and generically grows without bound otherwise.

\paragraph{Shared-minibatch perturbations.} Consider two runs of \eqref{eq:forced-recurrence} that see the same minibatches but start from different points. Because their minibatches are the same, both runs receive the same offset $p_t$, and subtracting one recurrence from the other removes it. Their difference $z_t$ therefore obeys $z_{t+1}=\T(D_t)\,z_t$, which is \eqref{eq:transition}. Unlike the spread, this difference decays exactly when $\rho(\Lop_K)<1$. We call the difference $e_t$ in their parameters a \textit{shared-minibatch perturbation} and measure stability from it (Section~\ref{sec:checkpoint-measures}). At an actual network, the difference follows \eqref{eq:transition} only to first order (Appendix~\ref{sec:frozen}).

\subsection{Amplification and spread cost}
\label{sec:checkpoint-measures}

We read the stability at a checkpoint from the \textit{amplification} of shared-minibatch perturbations, and then measure its consequences on training loss through the \textit{spread cost}.

\paragraph{Amplification.} We define the \textit{amplification} after $T$ outer steps as
\begin{equation}\label{eq:amplification}
A_\theta(T)=\frac{\mathbb E\,\|e_T\|^2}{\|e_0\|^2},
\end{equation}
where $e_t$ is the shared-minibatch perturbation and the expectation is over streams of minibatches, and we say that a setting amplifies when $A_\theta(T)>1$. Following how a small perturbation grows is a common way to study the stability of training viewed as a dynamical system \citep{wu2018sgdselection,dieuleveut2020bridging,chemnitz2025characterizing,li2025coupling}. Notably, \citet{altintas2025butterfly} measure how far two such runs diverge in parameter distance and in functional similarity. In contrast, under the frozen linearization our amplification decays exactly when $\rho(\Lop_K)<1$, and we test whether it changes with the outer momentum, the number of local steps, the inner batch size and the number of workers in the way the four implications predict. As in Section~\ref{sec:setup-stability}, we restrict to the complement of the common flat directions, on which every minibatch Hessian vanishes. We therefore take $e_0=J_{\omega'}(\theta_0)\,g$ for a random vector $g$ and a separate block $\omega'$, which lies in that complement, with an empty buffer (Appendix~\ref{app:measures}). Without ever forming $\Lop_K$ (a map on $2p\times 2p$ matrices for a network with $p$ parameters), we estimate the amplification by propagating $e_t$ with Hessian-vector products (HVPs), along actual runs for the measured values and by Algorithm~\ref{alg:frozen} for the predictions under the frozen linearization. Each Hessian-vector product costs about as much as a gradient evaluation \citep{pearlmutter1994hessian}, so tracking $e_t$ adds about one gradient's worth of computation per inner step. Since $e_t$ evolves linearly, we rescale it to unit length after every outer step and accumulate the logarithm of the scale, as for Lyapunov exponents.
\paragraph{Mean or mean-square instability.} Varying the number of workers lets us tell whether an amplifying setting is unstable in its mean dynamics or only in mean square. This works because averaging over workers divides the block noise by $W$ but leaves the mean transition unchanged (Corollary~\ref{cor:averaging}). Moreover, under the frozen linearization the amplification is non-increasing in $W$ for every $T$ (Appendix~\ref{app:proof-averaging}). Hence, if a setting amplifies with one worker but contracts after many outer steps once enough workers are averaged, this is consistent with stable mean dynamics and mean-square instability with one worker. On the other hand, a setting that still amplifies at the largest $W$ we try is either mean unstable or needs more workers.

\paragraph{Spread cost.} The change in the mean training loss of $S$ independent runs \citep{fort2020deep,frankle2020linear} splits exactly into two parts. The first, the averaged-model change, is driven by the offset even in a stable setting, so a mean-square instability can raise the loss only through the second, the spread cost. With $\hat\mu_t$ the average of the runs, $\bar L_t$ their mean loss and $L$ the full-data loss,
\begin{equation}
\bar L_t-L(\theta_0)=
\underbrace{L(\hat\mu_t)-L(\theta_0)}_{\text{averaged-model change}}
+\underbrace{\bar L_t-L(\hat\mu_t)}_{\text{spread cost}} .
\label{eq:loss-split}
\end{equation}
The spread cost is the loss due to the disagreement between the runs, and replacing the runs by their average would recover exactly this amount. To second order it is a curvature-weighted trace of their covariance, which Proposition~\ref{prop:bridge} propagates (Appendix~\ref{app:measures}). We say that the runs left the checkpoint's basin when their average ends with a loss more than $0.1$ above the checkpoint's.

\subsection{Empirical results}
\label{sec:checkpoint-measure}

We conduct experiments at checkpoints of an MLP and a ResNet-20, both trained by plain SGD on CIFAR-10 \citep{krizhevsky2009cifar}, and of Pythia-70m (Appendix~\ref{app:net-setup}), and at one checkpoint of Pythia-160m, Pythia-410m and Pythia-1b (Appendix~\ref{app:all-larger}). Figures~\ref{fig:ckpt-amp} and~\ref{fig:ckpt-spread-impl} show a single checkpoint, the CIFAR MLP at step $120{,}000$, and experiments with different checkpoints and other models are in Appendices~\ref{app:implications-all} to~\ref{app:continuations}. The 95\% bootstrap confidence intervals over the 48 minibatch streams (from different seeds) have a median width of 0.027 in \(\log_{10} A_\theta(20)\). We take the checkpoints after the early phase of training, as it has been shown that linear approximations about an early checkpoint do not track the actual training well \citep{fort2020deep,meterez2026quadratic}.

Figure~\ref{fig:ckpt-amp} is consistent with each of the four implications. As $\beta$ grows at a fixed outer rate, a setting turns from contracting to amplifying at most once (i). As $K$ grows, a setting contracts more at inner batches $8$ and $16$ and amplifies more at batches $1$ and $2$ (ii), matching the two behaviors of Proposition~\ref{thm:blocklength}. More workers never raise the amplification (iii), and splitting a fixed total batch did not lower it in any setting of Figure~\ref{fig:ckpt-amp} (iv). The setting of panel (iii) amplifies with one worker and contracts with two or more, so this instability appears to be mean-square (Section~\ref{sec:checkpoint-measures}).

As with the amplification, the spread cost rises with $\beta$, falls with $W$ and is not lowered by splitting a fixed total batch (Figure~\ref{fig:ckpt-spread-impl}). As $K$ grows, it first rises at every inner batch, because the noise that a block injects accumulates over its $K$ steps (Proposition~\ref{prop:bridge}). The batches then separate as in Proposition~\ref{thm:blocklength}. The spread cost keeps rising at $b\le4$ and turns down at $b=8$ and $16$. On ResNet-20, on the Pythia models and at other checkpoints, the amplification is also largely consistent with the implications (Appendix~\ref{app:implications-all}).

\FloatBarrier
\section{Discussion}
\label{sec:limitations}

Our results show that the inner trajectory acts on the outer iteration as a random operator, the block operator, and that once the mean condition holds, stability is decided by whether the noise of this operator fits within the noise budget of the outer step. The number of local steps, the inner batch size and the number of workers change the block operator, while the outer learning rate and momentum shape the noise budget, so the effect of any one hyperparameter depends on the others. Stability can also be read at a checkpoint of a network from short continuations of training that share their minibatches. In distributed training these hyperparameters are often constrained, since limited bandwidth sets a minimum number of local steps, the memory of each worker caps the inner batch size, and hardware failures can reduce the number of workers during training. Under such constraints, the checkpoint measurement suggests a way to check whether a planned change keeps training stable before committing to it. However, the spread cost need not follow the trends of the amplification at every checkpoint (Appendix~\ref{app:implications-all}), as it also depends on the noise that each block injects and on the curvature of the loss (Proposition~\ref{prop:bridge}; Appendix~\ref{app:measures}).

\paragraph{Extensions.} Our criterion also restricts the minima that two-phase optimizers can settle into, since the outer iteration cannot settle in mean square at a minimum where the criterion fails, and certain desirable properties of such minima might be valuable to the community \citep{wu2023implicit}. In this direction, \citet{mulayoff2024exact} study the dynamical stability of optimization methods in the vicinity of the minima of the loss, and there is also a rich literature on which minima are dynamically accessible to an optimization algorithm \citep{wu2018sgdselection}. Although it has yet to be shown, a two-phase system with outer momentum may also display catapults \citep{lewkowycz2020catapult} or train at the edge of stability, as gradient descent \citep{cohen2021eos}, SGD \citep{andreyev2024edge}, SGD with momentum \citep{andreyev2026momentum} and federated averaging \citep{khandelwal2026scaffold} do.

\paragraph{Limitations.} The mean-square analysis focuses on stability rather than on the rate of convergence, and it does not capture the fact that training can, in the longer term, recover from a local instability \citep{lewkowycz2020catapult,damian2023selfstabilization}. Our scope is limited to SGD for the inner steps, whereas modern two-phase methods can use AdamW or Muon inside the block. Such settings are especially difficult to study when an adaptive inner optimizer carries state across blocks and preconditions the response, so its block Jacobian is not the product \eqref{eq:chain-rule} and Assumption~\ref{ass:block} then has to be re-examined.

\section*{Acknowledgments}
This research was enabled in part by compute resources provided by Mila (mila.quebec), and by support provided by Calcul Qu\'ebec (calculquebec.ca), Compute Ontario (computeontario.ca), the BC DRI Group and the Digital Research Alliance of Canada (alliancecan.ca). We also thank the Masason Foundation for providing compute resources. Sarath Chandar is supported by the Canada CIFAR AI Chairs program, the Canada Research Chair in Lifelong Machine Learning, and an NSERC Discovery Grant.

\clearpage
\appendix

\raggedbottom   %
\makeatletter
\setlength{\@fptop}{0pt}
\setlength{\@fpsep}{14pt}
\setlength{\@fpbot}{0pt plus 1fil}
\makeatother

\renewcommand{\contentsname}{Contents of the Appendix}
\addtocontents{toc}{\protect\setcounter{tocdepth}{2}}
\tableofcontents

\clearpage   %
\section{The Two-Phase Optimizer with Nesterov Momentum}
\label{app:algorithm}
Algorithm~\ref{alg:outer} gives one run of the two-phase optimizer \eqref{eq:snoo} with $W$ workers. Most of Sections~\ref{sec:setup} and~\ref{sec:theory} treat the case $W=1$, and Section~\ref{sec:cons-W} the general case. At each outer step the buffer is updated first, and the outer step then uses the updated buffer. This makes the outer step Nesterov momentum, since heavy-ball momentum would instead step along the buffer alone, $\theta_{t+1}=\theta_t-\alpha u_t$.

Algorithm~\ref{alg:frozen} gives the frozen linearization of Section~\ref{sec:checkpoint-jacobian}, which produces the dotted lines in Figure~\ref{fig:ckpt-amp} and in the appendix figures. It is Algorithm~\ref{alg:outer} applied to a perturbation $e_t$, with every inner trajectory started at the checkpoint $\theta_0$ instead of the current iterate.

\begin{algorithm}[H]   %
\caption{The frozen linearization at a checkpoint $\theta_0$}
\label{alg:frozen}
\begin{minipage}{0.83\linewidth}
\begin{algorithmic}[1]
\Require checkpoint $\theta_0$, perturbation $e_0$, inner rate $\eta$, inner batch size $b$, outer rate $\alpha$, outer momentum $\beta$, number of local steps $K$, number of workers $W$, number of outer steps $T$
\State $u_{-1}\gets0$
\For{$t=0,\ldots,T-1$}
  \For{$w=1,\ldots,W$} \Comment{per worker}\tikzmark{algFrozenInA}
    \State \parbox[t]{0.66\textwidth}{run $K$ inner SGD steps at rate $\eta$ with batch size $b$ from $\theta_0$ on fresh minibatches $\omega_t^{(w)}$}
    \State $d_t^{(w)}\gets J_{\omega_t^{(w)}}(\theta_0)\,e_t$ \Comment{\eqref{eq:chain-rule}, $K$ HVPs}
  \EndFor\hfill\tikzmark{algFrozenInB}
  \State $d_t\gets\frac1W\sum_{w=1}^W d_t^{(w)}$\hfill\tikzmark{algFrozenOutA}
  \State $u_t\gets\beta u_{t-1}+d_t$
  \State $e_{t+1}\gets e_t-\alpha\bigl(\beta u_t+d_t\bigr)$\hfill\tikzmark{algFrozenOutB}
\EndFor
\end{algorithmic}
\end{minipage}
\begin{tikzpicture}[remember picture,overlay]
\phasebracket{algFrozenInA}{algFrozenInB}{1}{Linearized\\inner phase}
\phasebracket{algFrozenOutA}{algFrozenOutB}{2}{Outer\\step}
\end{tikzpicture}
\end{algorithm}

\section{Mean and Mean-Square Stability}
\label{app:mean-vs-ms}

Here the error $e_t=\theta_t-\theta^*$ is the displacement from the minimizer $\theta^*$ of the random quadratic model of Section~\ref{sec:setup-model}, and $z_t=(e_t,u_{t-1})$ stacks it with the momentum buffer. By Definition~\ref{def:mss}, the recurrence is mean stable if $\E[z_t]\to0$ and mean-square stable if $\E\|z_t\|^2\to0$. Since $\E\|z_t\|^2=\|\E[z_t]\|^2+\operatorname{tr}\Cov(z_t)$, mean-square stability also requires the fluctuations of $z_t$ around its mean to vanish.

\begin{figure}[H]   %
\centering
\includegraphics[width=0.42\linewidth]{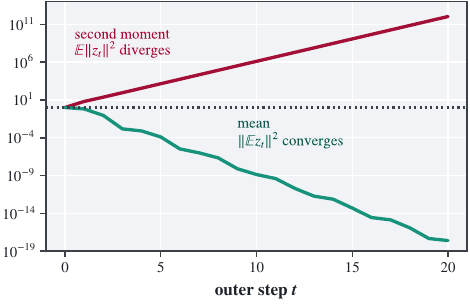}
\caption{\textbf{Mean stable but mean-square unstable.} The mean $\|\E[z_t]\|^2$ and the second moment $\E\|z_t\|^2$ for the linear-regression model above ($\eta=2.5$, $b=1$, $K=4$, $\alpha=1$, $\beta=0.5$), starting from a unit error with an empty buffer. Both curves are computed exactly from the moment recurrences \eqref{eq:moment-recurrences}. The dotted line marks the initial value.}
\label{fig:mean-vs-ms}
\end{figure}

Figure~\ref{fig:mean-vs-ms} shows a setting in which the mean of $z_t$ converges while its second moment diverges. The model used is linear regression in eight dimensions with unit-norm features, so each sample loss has curvature one along its own feature, while the average curvature is about $1/8$ in every direction (Appendix~\ref{app:toy}). With the inner rate $\eta=2.5$, a step on a single sample therefore overshoots along that sample's feature, while the average step shrinks the error by a factor of about $0.69$. The pseudo-gradient thus points towards the minimizer on average, but its fluctuations are proportional to the current error; so they compound over the outer steps while the mean vanishes. The divergence of the second moment does not come from rare runs, since every one of $4{,}000$ simulated runs diverges. In the terms of Section~\ref{sec:theory}, we see that the mean condition holds ($\rho(A)\approx0.37$) while the block noise does not fit within the noise budget ($\rho(\Lop_K)\approx3.9$, Theorem~\ref{thm:full}(ii)). An analysis of the mean dynamics alone would therefore call this setting stable.

\section{Notation and Conventions}
\label{app:notation}

The table below lists the notation of the paper and Appendix~\ref{app:glossary} lists the terms the paper defines. Symbols used inside a single proof are defined there.

\medskip
{\small
\setlength{\LTpre}{0pt}\setlength{\LTpost}{0pt}
\renewcommand{\theHtable}{notation.\arabic{table}}%
\renewcommand{\arraystretch}{1.35}
\settowidth{\tabtermwidth}{shared-minibatch perturbation}\addtolength{\tabtermwidth}{1pt}
\begin{xltabular}{\linewidth}{@{}p{\tabtermwidth}>{\raggedright\arraybackslash}X@{}}
\toprule
\textsc{Notation} & \textsc{Description} \\
\midrule
\endfirsthead
\multicolumn{2}{@{}l}{\small\itshape Notation, continued.}\\[2pt]
\toprule
\textsc{Notation} & \textsc{Description} \\
\midrule
\endhead
\midrule
\multicolumn{2}{r@{}}{\small\itshape continued on the next page}\\
\endfoot
\bottomrule
\endlastfoot
\rowcolor{gray!15}\multicolumn{2}{@{}l}{\textsc{Two-phase optimizer}} \\*\addlinespace[2pt]
$\theta$, $\theta^*$ & parameters; common minimizer of the sample losses \\
$\theta^{(w)}_{t,k}$ & parameters of worker $w$ after $k$ inner steps of outer step $t$ \\
$K$, $W$ & number of local steps; number of workers \\
$t$, $k$, $T$ & outer step; inner step; number of outer steps \\
$\alpha$, $\beta$ & outer learning rate; outer momentum \\
$\eta$, $b$ & inner learning rate; inner batch size \\
$\omega$, $\xi_k$ & minibatches of one block; minibatch of inner step $k$ \\
$P_{K,\omega}$, $P_t$ & pseudo-gradient of one block; its average over the workers \\
$u_t$ & momentum buffer \\
$e_t$ & error; at a checkpoint, the perturbation \\
$z_t$ & state $(e_t,u_{t-1})$ \\
\midrule
\rowcolor{gray!15}\multicolumn{2}{@{}l}{\textsc{Model}} \\*\addlinespace[2pt]
$\ell_\xi$, $L$ & minibatch loss; full loss \\
$H_\xi$, $h_i$, $\bar H$ & Hessian of $\ell_\xi$; of sample $i$; mean Hessian \\
$Q_\xi$, $\bar Q$ & one inner step $I-\eta H_\xi$; its mean \\
$\Qop_b$, $\Qop_\infty$, $\Cop$ & second moment of one inner step at batch $b$; its large-batch limit; the batch-noise part \\
$M_{K,\omega}$ & product of the $K$ inner steps \\
$D_{K,\omega}$ & block operator $I-M_{K,\omega}$ \\
$\bar D_K$, $\Sm_K$ & mean and second moment of the block operator \\
$b_c$ & inner-batch threshold \\
$\kron$, $\vecop$ & Kronecker product; vectorization \\
\midrule
\rowcolor{gray!15}\multicolumn{2}{@{}l}{\textsc{Stability}} \\*\addlinespace[2pt]
$\T(D)$ & one outer step on the state \\
$\Sigma_t$ & second moment of the state \\
$\Lop_K$, $A$ & second-moment map; mean transition \\
$\Vop_K$, $\Gop_{\alpha,\beta}$ & block noise; noise gain \\
$\Bm$, $\Em$ & input and output matrices \\
$\gamma$ & $\alpha(1+\beta)$ \\
$\Sop$, $\kappa_K$ & the map $X\mapsto\bar D_KX+X\bar D_K$; block noise relative to the mean block operator \\
$\Lop_{K,W}$, $W^\star$ & second-moment map with $W$ workers; workers needed \\
$\rho(\cdot)$, $\preceq$, $\mathcal I$ & spectral radius; order of positive maps and matrices; identity map \\
$\rho_K$ & growth per inner gradient \\
$d_1$, $d_2$, $\Nk$, $x$ & one-dimensional quantities of Appendix~\ref{app:proof-scalar} \\
\midrule
\rowcolor{gray!15}\multicolumn{2}{@{}l}{\textsc{Network checkpoint}} \\*\addlinespace[2pt]
$\theta_0$, $J_\omega$, $r_\omega$ & checkpoint; block Jacobian; remainder of the expansion \\
$p_t$ & offset \\
$A_\theta(T)$ & amplification over $T$ outer steps \\
$S$, $\hat\mu_t$, $\bar L_t$ & number of independent runs; their average; their mean loss \\
\end{xltabular}
}

\section{Glossary of Terms}
\label{app:glossary}

The table below lists the terms the paper defines, grouped in the same way as the notation in Appendix~\ref{app:notation}.

\medskip
{\small
\setlength{\LTpre}{0pt}\setlength{\LTpost}{0pt}
\renewcommand{\theHtable}{glossary.\arabic{table}}%
\renewcommand{\arraystretch}{1.35}
\settowidth{\tabtermwidth}{shared-minibatch perturbation}\addtolength{\tabtermwidth}{1pt}
\begin{xltabular}{\linewidth}{@{}p{\tabtermwidth}>{\raggedright\arraybackslash}X@{}}
\toprule
\textsc{Term} & \textsc{Meaning} \\
\midrule
\endfirsthead
\multicolumn{2}{@{}l}{\small\itshape Glossary, continued.}\\[2pt]
\toprule
\textsc{Term} & \textsc{Meaning} \\
\midrule
\endhead
\midrule
\multicolumn{2}{r@{}}{\small\itshape continued on the next page}\\
\endfoot
\bottomrule
\endlastfoot
\rowcolor{gray!15}\multicolumn{2}{@{}l}{\textsc{Two-phase optimizer}} \\*\addlinespace[2pt]
inner trajectory & one worker's $K$ steps of minibatch SGD \\
block & an inner trajectory, seen as a map from its start to its displacement \\
block length & the number $K$ of local steps \\
pseudo-gradient & the displacement of an inner trajectory; the outer step uses its average over the workers \\
outer step & one update of the outer optimizer \\
workers & the $W$ copies that run inner trajectories from the same parameters \\
stream & the sequence of minibatches that one run draws \\
\midrule
\rowcolor{gray!15}\multicolumn{2}{@{}l}{\textsc{Model and stability}} \\*\addlinespace[2pt]
interpolating minimum & a minimizer shared by all sample losses \citep{mulayoff2024exact} \\
regular minimum & the minimizer of the full loss when the sample losses have different minimizers \\
block operator & the random matrix that maps the error at the start of a block to its pseudo-gradient \\
common flat directions & directions that no block changes \\
mean stable & $\E[z_t]\to0$ \\
mean-square stable & $\E\opnorm{z_t}^2\to0$ \\
positive map & a linear map that keeps positive-semidefinite matrices positive semidefinite \\
second-moment map & carries the second moment of the state one outer step forward \\
mean transition & the outer step applied to the mean of the state \\
block noise & the covariance of the block operator \\
mean-stability analysis & an analysis that replaces the block operator by its mean \\
mean condition & $\rho(A)<1$ \\
noise gain & the error that the mean dynamics build up from injected noise \\
loop gain & the growth around the loop from error to block noise and back, $\rho(\Vop_K\circ\Gop_{\alpha,\beta})$ \\
noise budget & one over the loop gain \\
noise scale & the small-step quantity $\alpha\kappa_K/((1-\beta)W)$, which has the form of the noise scale of \citet{smith2018bayesian} \\
inner-batch threshold & the batch size $b_c$ above which inner SGD is mean-square stable \\
mean and mean-square boundaries & edges of the mean-stable and mean-square-stable regions in the $(\alpha,\beta)$ plane \\
\midrule
\rowcolor{gray!15}\multicolumn{2}{@{}l}{\textsc{Network checkpoint}} \\*\addlinespace[2pt]
offset & the pseudo-gradient at the reference point \\
spread & the covariance of the parameters across independent runs \\
block Jacobian & the Jacobian of the pseudo-gradient with respect to its starting point \\
frozen linearization & the block Jacobian taken at the checkpoint for every outer step \\
shared-minibatch perturbation & the difference between two runs that see the same minibatches \\
amplification & the growth of a shared-minibatch perturbation over $T$ outer steps; a setting amplifies when it exceeds one \\
contracts & said of a setting whose amplification is below one \\
spread cost & the part of the change in mean loss caused by the runs' disagreement \\
averaged-model change & the change in loss of the runs' average \\
\end{xltabular}
}

\section{Proofs}
\label{app:proofs}

Throughout these proofs, we mainly use two facts about positive maps. The first is that the spectral radius is monotone in the order of Section~\ref{sec:block-moments}, $\rho(\mathcal M)\le\rho(\mathcal N)$ whenever $0\preceq\mathcal M\preceq\mathcal N$, which is proved in Appendix~\ref{app:proof-averaging}. The second is that the spectral radius of a positive map is an eigenvalue with a positive-semidefinite eigenmatrix, which is the Krein--Rutman theorem for the cone of positive-semidefinite matrices \citep[Chapter~1, Theorem~3.2]{berman1994nonnegative}.

\subsection{Derivation of the block moments}
\label{app:proof-moments}

Let $M_0=I$ and $M_{k+1}=Q_{\xi_k}M_k$. Under independence $Q_{\xi_k}$ is independent of $M_k$, so $\E[M_{k+1}]=\bar Q\,\E[M_k]$, and since $M_{k+1}\kron M_{k+1}=(Q_{\xi_k}\kron Q_{\xi_k})(M_k\kron M_k)$, also $\E[M_{k+1}\kron M_{k+1}]=\mathcal Q\,\E[M_k\kron M_k]$. A simple induction then gives $\E[M_K]=\bar Q^K$ and $\E[M_K\kron M_K]=\mathcal Q^K$, and expanding $(I-M_K)\kron(I-M_K)$ gives \eqref{eq:block-moments}.\footnote{If instead one minibatch is drawn per block and reused for all $K$ steps, then $M_K=Q_\xi^K$. This gives $\bar D_K=I-\E[Q_\xi^K]$ and $\Sm_K=\E[(I-Q_\xi^K)\kron(I-Q_\xi^K)]$. The moment condition of Assumption~\ref{ass:block} must then be imposed on the block directly, because a finite second moment of $Q_\xi$ does not imply one for $Q_\xi^K$.}

\subsection{Proof of the moment recurrences, the decomposition and Theorem~\ref{thm:full}(i)}
\label{app:proof-full}

\paragraph{The moment recurrences.} Recall that under Assumption~\ref{ass:block} the block operator $D_t$ is independent of $z_t$. Conditioning on $z_t$ in $z_{t+1}=\T(D_t)z_t$ gives $\E[z_{t+1}\mid z_t]=\T(\bar D_K)z_t$, and conditioning in $z_{t+1}z_{t+1}^\top=\T(D_t)z_tz_t^\top\T(D_t)^\top$ gives $\E[z_{t+1}z_{t+1}^\top\mid z_t]=\Lop_K(z_tz_t^\top)$. Taking expectations on both sides then proves \eqref{eq:moment-recurrences}, since $\Lop_K$ is linear.

\paragraph{Part (i).} As a first step, column-wise vectorization turns the second recurrence of \eqref{eq:moment-recurrences} into
\[
\vecop\Sigma_{t+1}=\mathcal M_K\vecop\Sigma_t,
\qquad
\mathcal M_K=\E[\T(D_K)\kron\T(D_K)],
\]
because $\vecop(\T X\T^\top)=(\T\kron\T)\vecop X$. If $\rho(\mathcal M_K)<1$ then $\mathcal M_K^t\to0$, so $\Sigma_t\to0$ and $\E\opnorm{z_t}^2=\operatorname{tr}\Sigma_t\to0$ for every admissible initial state. Conversely, suppose $\E\opnorm{z_t}^2\to0$ for every admissible $z_0$. Every positive-semidefinite matrix is the second moment of some admissible $z_0$, so $\Lop_K^t(\Sigma_0)\to0$ for every positive-semidefinite $\Sigma_0$; these matrices span the symmetric matrices, hence $\Lop_K^t\to0$ on symmetric matrices and $\rho(\Lop_K)<1$.

It remains to check that the condition $\rho(\Lop_K)<1$ is the same as $\rho(\mathcal M_K)<1$. One direction is immediate, since $\Lop_K$ is the restriction of $\mathcal M_K$ to the invariant subspace of symmetric matrices. For the other, let $Y_t=\T(D_{t-1})\cdots\T(D_0)$, so that $\mathcal M_K^t\vecop X=\vecop\E[Y_tXY_t^\top]$ for every matrix $X$. If $\rho(\Lop_K)<1$ then $\E[Y_te_ke_k^\top Y_t^\top]\to0$ for every basis vector $e_k$, so every $\E[(Y_t)_{ik}^2]\to0$, and the Cauchy--Schwarz inequality bounds each entry of $\E[Y_tXY_t^\top]$ by $\sum_{k,l}|X_{kl}|\,(\E[(Y_t)_{ik}^2]\,\E[(Y_t)_{jl}^2])^{1/2}\to0$. Hence $\mathcal M_K^t\to0$ and $\rho(\mathcal M_K)<1$. This is the standard mean-square stability criterion for linear recurrences with i.i.d.\ random coefficients \citep{costa2005mjls}.

\paragraph{The decomposition.} Note that the operator $D$ enters $\T(D)$ only through its first block column, so
\[
\T(D)=\T(\bar D_K)+\Bm(D-\bar D_K)\Em,
\qquad
\Bm=\begin{bmatrix}-\alpha(1+\beta)I\\ I\end{bmatrix},
\quad
\Em=\begin{bmatrix}I&0\end{bmatrix}.
\]
Substituting into \eqref{eq:lift} with $A=\T(\bar D_K)$ and $\Delta=D_K-\bar D_K$,
\[
\Lop_K(X)=AXA^\top+AX\Em^\top\E[\Delta]^\top\Bm^\top+\Bm\E[\Delta]\Em XA^\top+\Bm\,\E[\Delta\,\Em X\Em^\top\Delta^\top]\,\Bm^\top,
\]
and the two middle terms vanish simply because $\E[\Delta]=0$. The last term is $\Bm\Vop_K(\Em X\Em^\top)\Bm^\top$. Under vectorization, $\vecop\Vop_K(Y)=\E[\Delta\kron\Delta]\vecop Y=(\Sm_K-\bar D_K\kron\bar D_K)\vecop Y$, so this is exactly where the block moments \eqref{eq:block-moments} enter the criterion.

\paragraph{Starting from an empty buffer.} The necessity argument above lets the initial buffer be arbitrary, while in practice the algorithm starts with $u_{-1}=0$. Fortunately, the conclusion still holds when the buffer starts empty. Let $\Lop_K^*(\Phi)=\E[\T(D_K)^\top\Phi\,\T(D_K)]$ be the adjoint of $\Lop_K$ for the trace inner product $\langle X,Y\rangle=\operatorname{tr}(X^\top Y)$; it is positive, has the same spectral radius $\rho$, and therefore has an eigenmatrix $\Phi\succeq0$ at $\rho$. Write $\Phi$ in the blocks of $z=(e,u)$. Now suppose that the error block $\Phi_{ee}$ vanished. Then positive semidefiniteness would force $\Phi_{eu}=0$, so $\Phi=\operatorname{diag}(0,\Phi_{uu})$, and a direct multiplication gives
\[
\T(D)^\top\begin{bmatrix}0&0\\0&\Phi_{uu}\end{bmatrix}\T(D)
=\begin{bmatrix}D^\top\Phi_{uu}D&\beta D^\top\Phi_{uu}\\ \beta\Phi_{uu}D&\beta^2\Phi_{uu}\end{bmatrix},
\]
whose buffer block is $\beta^2\Phi_{uu}$ for every $D$. The eigenvalue equation on that block reads $\beta^2\Phi_{uu}=\rho\,\Phi_{uu}$ with $\Phi_{uu}\ne0$, so $\rho=\beta^2<1$. Hence if $\rho\ge1$ then $\Phi_{ee}\ne0$, and for $\Sigma_0=\operatorname{diag}(I,0)$, an isotropic error with an empty buffer, $\langle\Phi,\Sigma_0\rangle=\operatorname{tr}\Phi_{ee}>0$ and
\[
\langle\Phi,\Lop_K^t(\Sigma_0)\rangle=\langle(\Lop_K^*)^t\Phi,\Sigma_0\rangle=\rho^t\langle\Phi,\Sigma_0\rangle
\]
does not vanish. Since $\langle\Phi,\Sigma_t\rangle\le\lambda_{\max}(\Phi)\operatorname{tr}\Sigma_t$, neither does $\E\opnorm{z_t}^2=\operatorname{tr}\Sigma_t$. So an isotropic initial error with an empty buffer is already not mean-square stable whenever $\rho(\Lop_K)\ge1$.

\paragraph{Almost-sure and mean-square stability.} If $\rho(\Lop_K)<1$ then $\E\opnorm{z_t}^2=\operatorname{tr}\Lop_K^t(\Sigma_0)$ decays geometrically, so $\E\sum_t\opnorm{z_t}^2<\infty$, the sum is finite almost surely, and $z_t\to0$ almost surely. The converse, however, fails in general.

\subsection{Proof of Theorem~\ref{thm:full}(ii) and (iii)}
\label{app:proof-budget}

\paragraph{Part (ii).} We start by writing the two terms of \eqref{eq:decomposition} as maps on symmetric matrices,
\[
\mathcal A(X)=AXA^\top,
\qquad
\mathcal N(X)=\Bm\,\Vop_K(\Em X\Em^\top)\,\Bm^\top ,
\]
so that $\Lop_K=\mathcal A+\mathcal N$ and both are positive. Since $\mathcal A\preceq\Lop_K$, $\rho(A)^2=\rho(\mathcal A)\le\rho(\Lop_K)$, so mean stability is necessary and we may assume $\rho(\mathcal A)<1$, in which case $(\mathcal I-\mathcal A)^{-1}=\sum_{t\ge0}\mathcal A^t$ is a positive map. Let $\mathcal P=\mathcal N(\mathcal I-\mathcal A)^{-1}$.

Suppose first $\rho(\Lop_K)<1$ and, for contradiction, $\rho(\mathcal P)\ge1$. The spectral radius of a positive map is an eigenvalue with a positive-semidefinite eigenmatrix, so $\mathcal PX=\rho(\mathcal P)X$ for some $X\succeq0$, $X\neq0$. Put $Y=(\mathcal I-\mathcal A)^{-1}X\succeq0$, which is nonzero. Then $\mathcal NY=\rho(\mathcal P)X=\rho(\mathcal P)(\mathcal I-\mathcal A)Y$, hence $\bigl(\mathcal A+\rho(\mathcal P)^{-1}\mathcal N\bigr)Y=Y$. As $\rho(\mathcal P)\ge1$ we have $\mathcal A+\rho(\mathcal P)^{-1}\mathcal N\preceq\Lop_K$, and the spectral radius is monotone on the cone, so $\rho(\Lop_K)\ge1$, a contradiction.

For the converse, let $\rho(\mathcal A)<1$ and $\rho(\mathcal P)<1$, and suppose $\mu=\rho(\Lop_K)\ge1$ with eigenmatrix $Y\succeq0$. Then $(\mu\mathcal I-\mathcal A)Y=\mathcal NY$, and $\mu\mathcal I-\mathcal A$ is invertible with positive inverse because $\mu\ge1>\rho(\mathcal A)$. Set $X=\mathcal NY\succeq0$; it is nonzero, since $X=0$ would give $\mathcal AY=\mu Y$ against $\rho(\mathcal A)<1$. Comparing the Neumann series termwise gives $(\mu\mathcal I-\mathcal A)^{-1}\preceq(\mathcal I-\mathcal A)^{-1}$ for $\mu\ge1$, so
\[
X=\mathcal N(\mu\mathcal I-\mathcal A)^{-1}X\preceq\mathcal N(\mathcal I-\mathcal A)^{-1}X=\mathcal PX .
\]
Then $\mathcal P^tX\succeq X$ for every $t$, so $\mathcal P^t$ does not converge to zero and $\rho(\mathcal P)\ge1$, a contradiction.

It remains to identify $\mathcal P$. Because $\mathcal N=X\mapsto\Bm\Vop_K(\Em X\Em^\top)\Bm^\top$ factors through the error block, and swapping the two factors of a composition leaves its spectral radius unchanged,
\[
\rho(\mathcal P)
=\rho\Bigl(\Vop_K\circ\bigl[Y\mapsto\Em(\mathcal I-\mathcal A)^{-1}(\Bm Y\Bm^\top)\Em^\top\bigr]\Bigr)
=\rho\bigl(\Vop_K\circ\Gop_{\alpha,\beta}\bigr),
\]
since $\Em\sum_tA^t(\Bm Y\Bm^\top)(A^\top)^t\Em^\top=\sum_t(\Em A^t\Bm)Y(\Em A^t\Bm)^\top$, which is \eqref{eq:noise-gain}.

\paragraph{Part (iii).} For the mean condition, let $\bar D_K=URU^{*}$ be a Schur form, with $U$ unitary and $R$ upper triangular with the eigenvalues $\lambda_i$ on its diagonal. Every block of $A=\T(\bar D_K)$ is a polynomial in $\bar D_K$, so $(U\oplus U)^{*}A\,(U\oplus U)=\T(R)$ has upper-triangular blocks, and reordering the coordinates into the pairs $(e_i,u_i)$ makes it block upper triangular with the $2\times2$ diagonal blocks $\T(\lambda_i)$. The eigenvalues of $A$ are therefore those of the $\T(\lambda_i)$, and $\rho(A)<1$ if and only if every $\T(\lambda_i)$ is stable. The characteristic polynomial of $\T(\lambda)$ is $\mu^2-(1+\beta)(1-\alpha\lambda)\mu+\beta(1-\alpha\lambda)$; for real $\lambda$ its stability is the Jury computation of Appendix~\ref{app:proof-scalar} at $d_1=\lambda$, which gives \eqref{eq:mean-condition}.

\subsection{The criterion in one dimension}
\label{app:proof-scalar}

Write $d_1=\E[D]$ and $d_2=\E[D^2]$, $x=\alpha d_1$ for the outer step in units of the mean block operator, and $\Nk=(d_2-d_1^2)/d_1^2$ for the relative variance of the block operator. In one dimension \eqref{eq:budget-criterion} can be solved explicitly. Under \eqref{eq:mean-condition} the recurrence is mean-square stable if and only if
\begin{equation}
\Nk<r(x,\beta):=\frac{[\,2(1+\beta)-x(1+2\beta)\,]\,(1-\beta+\beta x)}{x\,[\,(1+\beta)+\beta(1+2\beta)x\,]}.
\label{eq:budget-general}
\end{equation}
At $x=1$ the bound is $r_\beta:=r(1,\beta)=1/(1+2\beta+2\beta^2)$, which falls from $1$ at $\beta=0$ towards $1/5$ as $\beta\to1$. This is \eqref{eq:budget-criterion} written out in one dimension. \citet{andreyev2026momentum} write the same scalar recurrence for a single SGD step. The rest of this subsection proves the criterion \eqref{eq:budget-general}.

\paragraph{The second-moment matrix.} In one dimension \eqref{eq:transition} reads
\[
\begin{bmatrix}e_{t+1}\\u_t\end{bmatrix}
=\T(D_t)
\begin{bmatrix}e_t\\u_{t-1}\end{bmatrix},
\qquad
\T(D)=
\begin{bmatrix}
1-\alpha(1+\beta)D&-\alpha\beta^2\\
D&\beta
\end{bmatrix}.
\]
To keep the formulas short, let $\gamma=\alpha(1+\beta)$ and $h=-\alpha\beta^2$. Expanding $e_{t+1}^2$, $e_{t+1}u_t$ and $u_t^2$ and taking expectations, independence of $D_t$ from the past gives $\smom_{t+1}=G\smom_t$ for the second-moment vector $\smom_t=(\E[e_t^2],\E[e_tu_{t-1}],\E[u_{t-1}^2])^\top$, with
\begin{equation}
G=
\begin{bmatrix}
1-2\gamma d_1+\gamma^2d_2 & 2h(1-\gamma d_1) & h^2\\
d_1-\gamma d_2 & \beta(1-\gamma d_1)+hd_1 & h\beta\\
d_2 & 2\beta d_1 & \beta^2
\end{bmatrix},
\label{eq:snoo-second-moment-matrix}
\end{equation}
the matrix of $\Lop_K$ in the coordinates of a symmetric $2\times2$ matrix, so that $\rho(G)<1$ is Theorem~\ref{thm:full} in one dimension.

\paragraph{The mean condition.} The deterministic mean transition $\T(d_1)$ has trace
$(1+\beta)(1-\alpha d_1)$ and determinant $\beta(1-\alpha d_1)$. The three
Jury inequalities reduce to
\[
\alpha d_1>0,
\qquad
2(1+\beta)-\alpha d_1(1+2\beta)>0,
\qquad
1-\beta+\alpha\beta d_1>0.
\]
The remaining condition, $\beta(1-\alpha d_1)>-1$, follows from the second. For $0\leq\beta<1$ these are equivalent to \eqref{eq:mean-condition}, which is therefore necessary and sufficient for mean stability. Mean-square stability implies mean stability, so \eqref{eq:mean-condition} is necessary for it as well, and we assume it from now on.

\paragraph{The critical second moment.} Write $D=d_1+\varepsilon$, where $\E[\varepsilon]=0$ and
$\E[\varepsilon^2]=d_2-d_1^2$. Since $\T(D)$ is affine in $D$,
\[
\T(D)=\T(d_1)+\varepsilon C,
\qquad
C=\Bm\Em=\begin{bmatrix}-\alpha(1+\beta)&0\\1&0\end{bmatrix}.
\]
The second-moment map \eqref{eq:decomposition} is therefore the positive linear map
\begin{equation}
\mathcal L_{d_2}(X)
=\T(d_1)X\T(d_1)^\top
+(d_2-d_1^2)CXC^\top.
\label{eq:positive-lift}
\end{equation}
Note that the map is continuous
and monotone in $d_2$ on the positive-semidefinite cone. At $d_2=d_1^2$ its
spectral radius is $\rho(\T(d_1))^2<1$. The spectral radius of a positive finite-dimensional covariance map is a
real eigenvalue with a positive-semidefinite eigenmatrix. Hence its first
unit-circle crossing as $d_2$ increases occurs at eigenvalue $+1$, where $\det(I-G)=0$. A direct, if slightly tedious, determinant calculation
from \eqref{eq:snoo-second-moment-matrix} gives
\begin{align}
\det(I-G)=\alpha\bigl[&2d_1(1-\beta^2)
+2\alpha\beta(1+2\beta)d_1^2\nonumber\\
&-\alpha d_2\{(1+\beta)
+\alpha\beta(1+2\beta)d_1\}\bigr].
\label{eq:det-one-minus-G}
\end{align}
Setting it to zero and solving for $d_2$ gives the critical value
\begin{equation}
\dcrit:=\frac{2d_1\,[\,1-\beta^2+\alpha\beta d_1(1+2\beta)\,]}{\alpha\,[\,1+\beta+\alpha\beta d_1(1+2\beta)\,]},
\label{eq:d2crit}
\end{equation}
and writing $x=\alpha d_1$, direct rearrangement gives
$(\dcrit-d_1^2)/d_1^2=r(x,\beta)$ with $r$ as in \eqref{eq:budget-general}.
The mean condition makes every factor of $r(x,\beta)$ positive, so the
critical value is feasible and strictly exceeds $d_1^2$.

Putting the pieces together, equation~\eqref{eq:det-one-minus-G} is affine and strictly decreasing in
$d_2$, so it has the unique root \eqref{eq:d2crit}; below the root $\rho(G)<1$, and above it monotonicity in $d_2$ keeps $\rho(G)\ge1$. This proves
that, under \eqref{eq:mean-condition}, the recurrence is mean-square stable if and only if $d_2<\dcrit$, which is the statement $\Nk<r(x,\beta)$ of \eqref{eq:budget-general}.

Setting $\alpha d_1=1$ in the critical value and subtracting $d_1^2$ gives
$r(1,\beta)=r_\beta=1/(1+2\beta+2\beta^2)$.

\paragraph{The bound as an impulse-response sum.} With the scalar impulse response $\varphi_t(d_1)=\Em\T(d_1)^t\Bm$, the bound reads $r(x,\beta)^{-1}=d_1^2\sum_{t\ge0}\varphi_t(d_1)^2$, since in one dimension $\rho(\Vop_K\circ\Gop_{\alpha,\beta})=(d_2-d_1^2)\sum_t\varphi_t(d_1)^2$ and the boundary of Theorem~\ref{thm:full}(ii) is the same root.

\subsection{The criterion at small outer steps}
\label{app:smallstep}

At a small outer step the criterion of Theorem~\ref{thm:full} takes the form of a noise scale; Section~\ref{sec:stability} states it informally.

\begin{proposition}[Small outer steps]
\label{prop:smallstep}
Let the pseudo-gradient of each outer step be the average over $W\ge1$ workers of inner trajectories that start from the same point, are independent given that point, and have block operators as in Assumption~\ref{ass:block}, let the mean condition hold, and set
\begin{equation}
\nu:=\frac{\alpha}{(1-\beta)\,W}\,\kappa_K,
\qquad
\kappa_K=\rho\bigl(\Vop_K\circ\Sop^{-1}\bigr),
\qquad
\Sop(X)=\bar D_KX+X\bar D_K .
\label{eq:smallstep}
\end{equation}
\begin{enumerate}\setlength{\itemsep}{1pt}\setlength{\parskip}{0pt}
\item[(a)] If $\bar D_K$ is symmetric, then on the complement of the common flat directions the criterion \eqref{eq:budget-criterion} reads $\nu<1$ to leading order in $\alpha$.
\item[(b)] In one dimension $\kappa_K=\Var(D_K)/(2\E[D_K])$, and the recurrence \eqref{eq:transition} is mean-square stable if and only if
\begin{equation}
\nu<F(x,\beta):=\frac{x\,r(x,\beta)}{2(1-\beta)},
\qquad
x=\alpha\E[D_K],
\label{eq:noise-scale-criterion}
\end{equation}
with $r$ the bound \eqref{eq:budget-general}. The factor satisfies $F(0,\beta)=1$; it is below $1$ at every mean-stable $x>0$ when $\beta\le\beta_0\approx0.603$, and above $1$ on an interval $0<x<x_c(\beta)$ when $\beta>\beta_0$.
\end{enumerate}
\end{proposition}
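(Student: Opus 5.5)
The plan is to reduce everything to the exact criterion \eqref{eq:budget-criterion}, applied to the $W$-worker average. By Corollary~\ref{cor:averaging}(a), averaging keeps $\bar D_K$, and hence $A$ and $\Gop_{\alpha,\beta}$, unchanged, and replaces the block noise by $\Vop_K/W$. The criterion therefore reads $\rho(\Vop_K\circ\Gop_{\alpha,\beta})<W$. What remains is an expansion of the noise gain $\Gop_{\alpha,\beta}$ in $\alpha$.

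For (a), diagonalize the symmetric $\bar D_K=U\Lambda U^\top$. Because every block of $A=\T(\bar D_K)$ is a polynomial in $\bar D_K$, the impulse response $\Em A^t\Bm$ equals $U\operatorname{diag}(\varphi_t(\lambda_i))U^\top$, with the scalar response $\varphi_t(\lambda)=\Em\T(\lambda)^t\Bm$. Hence, in the eigenbasis, $\Gop_{\alpha,\beta}$ acts as a Schur multiplier:
\[
\Gop_{\alpha,\beta}(Y)_{ij}=g(\lambda_i,\lambda_j)\,Y_{ij},\qquad g(\lambda,\mu)=\sum_{t\ge0}\varphi_t(\lambda)\varphi_t(\mu).
\]
I would compute $g$ exactly as the $(1,1)$ entry of the solution $X$ of the $2\times2$ Stein equation
\[
X=\T(\lambda)X\T(\mu)^\top+\Bm\Bm^\top ,
\]
which is a linear system with four unknowns. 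I would then expand the solution in $\alpha$, aiming to show
\[
g(\lambda,\mu)=\frac{\alpha}{(1-\beta)(\lambda+\mu)}\bigl(1+O(\alpha)\bigr).
\]
The scalar case gives a consistency check: $r(x,\beta)\sim2(1-\beta)/x$ together with $r^{-1}=d_1^2\sum_t\varphi_t^2$ yields $g(\lambda,\lambda)\approx\alpha/(2(1-\beta)\lambda)$.

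Since $\Sop^{-1}$ is exactly the Schur multiplier $1/(\lambda_i+\lambda_j)$, this expansion gives $\Gop_{\alpha,\beta}=\frac{\alpha}{1-\beta}\Sop^{-1}(\mathcal I+O(\alpha))$. On the complement of the common flat directions every $\lambda_i$ is bounded below by some $\lambda_{\min}>0$ (the mean condition forces $\lambda_i>0$), and the dimension is finite, so the $O(\alpha)$ term is uniform in $i,j$. Continuity of the spectral radius then gives
\[
\rho(\Vop_K\circ\Gop_{\alpha,\beta})/W=\nu\bigl(1+O(\alpha)\bigr),
\]
so the criterion reads $\nu<1$ to leading order. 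I expect the main obstacle to be the Stein-equation algebra and its uniform expansion. Momentum has a transient of order $1/(1-\beta)$ outer steps, during which the buffer-to-error coupling $-\alpha\beta^2$ must be tracked, so the expansion should be done on the exact rational expression rather than by a heuristic effective rate.

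For (b), in one dimension $\Sop^{-1}(y)=y/(2d_1)$ and $\Vop_K(y)=\Var(D_K)\,y$, so $\kappa_K=\Var(D_K)/(2\E[D_K])$. The exact criterion \eqref{eq:budget-general}, with the variance divided by $W$, is $\Nk/W<r(x,\beta)$. Substituting $\Nk/W=2(1-\beta)\nu/x$ turns this into $\nu<F(x,\beta)$. Letting $x\to0$ in $x\,r(x,\beta)$ gives $2(1-\beta)$, hence $F(0,\beta)=1$. For the sign of $F-1$, I would clear the positive denominator $2(1-\beta)[(1+\beta)+\beta(1+2\beta)x]$. The numerator should come out as
\[
x\bigl(c(\beta)-\beta(1+2\beta)x\bigr),\qquad c(\beta)=4\beta^3+2\beta^2-\beta-1 .
\]
The cubic $c$ is increasing on $[0,1)$, is negative at $0$ and positive at $1$, and has its root at $\beta_0\approx0.603$. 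So $F<1$ for every $x>0$ when $\beta\le\beta_0$, and $F>1$ exactly on $0<x<x_c(\beta)=c(\beta)/(\beta(1+2\beta))$ when $\beta>\beta_0$. If this interval overshoots the mean-stable range, $x_c$ should be intersected with the mean boundary \eqref{eq:mean-condition}.
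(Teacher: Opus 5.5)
Your proposal is correct and follows the paper's proof essentially step for step. For (a) you use the entrywise kernel $\sum_t\varphi_t(\lambda)\varphi_t(\mu)$ read off the $2\times2$ Stein equation, expanded to $\alpha/((1-\beta)(\lambda+\mu))$, followed by continuity of the spectral radius. For (b) you multiply the exact scalar criterion $\mathcal N_K/W<r(x,\beta)$ by $x/(2(1-\beta))$ and reach the cubic $4\beta^3+2\beta^2-\beta-1$ by the common-denominator computation.

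Three small fixes are needed:
\begin{itemize}
\item The cubic is not increasing on $[0,1)$, since its derivative $(2\beta+1)(6\beta-1)$ is negative on $[0,1/6)$. It stays negative there, falling from $-1$ to $-59/54$, so your sign conclusion still holds.
\item Your hedge about intersecting with the mean boundary is unnecessary, because $\frac{2(1+\beta)}{1+2\beta}-x_c(\beta)=\frac{(1-\beta)(1+2\beta)}{\beta}>0$.
\item Continuity of the spectral radius gives only a relative error $o(1)$, not $O(\alpha)$. This is still all that ``to leading order'' requires.
\end{itemize}
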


The quantity $\nu$ has the form of the noise scale $g\approx\epsilon N/(B(1-m))$ that \citet{smith2018bayesian} derive for one heavy-ball SGD step and that \citet{smith2018increase} use to trade learning rate, batch size and momentum against one another, with the outer step $\alpha$ in place of $\epsilon$, the number of workers $W$ in place of the batch size $B$, the outer momentum $\beta$ in place of $m$, and the block constant $\kappa_K$ in place of the training-set size $N$. The quantity $\nu$ matches $g$ only in form, since it is a rewriting of Theorem~\ref{thm:full}(ii) for a $K$-step block under outer Nesterov momentum. Part (b) says when the familiar rule $\nu<1$ is right, and shows that below $\beta_0$ the rule admits settings that are mean-square unstable while above $\beta_0$ it rejects settings that are stable at small outer steps. The closed form of $F-1$, the root $\beta_0$, the interval $x_c(\beta)$ and the case $\beta=0$ are derived below, but the results of Section~\ref{sec:implications} use the exact criterion in every dimension. The paragraphs below prove the proposition, part (a) from the scalar series, the kernel and the matrix statement, and part (b) from the exact scalar factor.

\paragraph{The one-dimensional bound at small steps.} In \eqref{eq:budget-general} the numerator tends to $2(1+\beta)(1-\beta)$ and the bracket of the denominator to $1+\beta$ as $x\to0$, so
\[
r(x,\beta)=\frac{2(1-\beta)}{x}+c_0(\beta)+O(x),
\qquad
c_0(\beta)=\frac{4\beta^3+2\beta^2-\beta-1}{1+\beta} .
\]
The criterion $\Nk/W<r(x,\beta)$ therefore reads $\Nk\,x/\bigl(2(1-\beta)W\bigr)<1$ to leading order, which is \eqref{eq:smallstep} since $x=\alpha\E[D_K]$ and $\Nk=\Var(D_K)/\E[D_K]^2$, and the relative correction to the threshold is $c_0(\beta)\,x/(2(1-\beta))$.

\paragraph{The noise gain in an eigenbasis.} For two eigenvalues $\lambda,\mu$ of a symmetric $\bar D_K$ the gain acts entrywise in an eigenbasis through $k(\lambda,\mu)=\sum_{t\ge0}\varphi_t(\lambda)\varphi_t(\mu)$, which is the error-block entry of the solution $X$ of the Stein equation $X-\T(\lambda)X\T(\mu)^\top=\Bm\Bm^\top$ with the scalar $\T$ and $\Bm$ of Appendix~\ref{app:proof-scalar}. Solving it and expanding in $\alpha$,
\[
k(\lambda,\mu)=\frac{\alpha}{(1-\beta)(\lambda+\mu)}\bigl(1+O(\alpha)\bigr).
\]
The leading term is the kernel of $\tfrac{\alpha}{1-\beta}\Sop^{-1}$, since $\Sop$ acts entrywise by $\lambda+\mu$.

\paragraph{The loop gain at small outer steps.} For a symmetric $\bar D_K$ the expansion holds entry by entry in an eigenbasis, so on the complement of the common flat directions, where $\Sop$ is invertible, $\Gop_{\alpha,\beta}=\tfrac{\alpha}{1-\beta}\Sop^{-1}(1+o(1))$ and, by the continuity of the spectral radius, $\rho(\Vop_K\circ\Gop_{\alpha,\beta})=\tfrac{\alpha}{1-\beta}\kappa_K(1+o(1))$.

\paragraph{The exact criterion in one dimension.} Averaging over $W$ workers leaves $d_1$ unchanged and replaces $d_2-d_1^2$ by $(d_2-d_1^2)/W$ (Appendix~\ref{app:proof-averaging}), so under \eqref{eq:mean-condition} the recurrence is mean-square stable if and only if $\Nk/W<r(x,\beta)$. The mean condition gives $x>0$, and $0\le\beta<1$, so multiplying both sides by $x/(2(1-\beta))>0$ is an equivalence and gives $\nu<F(x,\beta)$ with $F=xr/(2(1-\beta))$, since $x\Nk/(2(1-\beta)W)=\alpha(d_2-d_1^2)/(2d_1(1-\beta)W)=\nu$.

\paragraph{The sign of $F-1$.} To find the closed form of $F-1$, we put $F$ and $1$ over the common denominator $2(1-\beta)[(1+\beta)+\beta(1+2\beta)x]$; the numerator is
\[
\bigl[2(1+\beta)-(1+2\beta)x\bigr](1-\beta+\beta x)-2(1-\beta)\bigl[(1+\beta)+\beta(1+2\beta)x\bigr],
\]
whose constant term vanishes, whose coefficient of $x$ is $2\beta(1+\beta)-(1+2\beta)(1-\beta)-2\beta(1-\beta)(1+2\beta)=p(\beta)$, and whose coefficient of $x^2$ is $-\beta(1+2\beta)$. Hence
\begin{equation}
F(x,\beta)-1=\frac{x\,\bigl[\,p(\beta)-\beta(1+2\beta)\,x\,\bigr]}{2(1-\beta)\bigl[(1+\beta)+\beta(1+2\beta)x\bigr]},
\qquad
p(\beta)=4\beta^3+2\beta^2-\beta-1,
\label{eq:F-minus-one}
\end{equation}
and its first-order term $p(\beta)x/(2(1-\beta^2))$ is the correction $c_0(\beta)x/(2(1-\beta))$ of the one-dimensional expansion above.

Since $p'(\beta)=(2\beta+1)(6\beta-1)$, $p$ decreases on $[0,1/6]$ from $p(0)=-1$ to $p(1/6)=-59/54$ and increases on $[1/6,1]$ to $p(1)=4$, so it has one root $\beta_0$ in $[0,1)$, at $\beta_0=0.60278\ldots$, and is negative on $[0,\beta_0)$ and positive on $(\beta_0,1)$. Consider first $\beta\le\beta_0$. Then the bracket $p(\beta)-\beta(1+2\beta)x$ is negative for every $x>0$, so $F<1$ and mean-square stability forces $\nu<F<1$. For $\beta>\beta_0$ the bracket is positive if and only if $x<x_c(\beta)=p(\beta)/(\beta(1+2\beta))$, and
\[
\frac{2(1+\beta)}{1+2\beta}-x_c(\beta)=\frac{(1-\beta)(1+2\beta)}{\beta}>0 ,
\]
so this range lies inside the mean-stable range, $F>1$ on it, and $\nu<1<F$ gives mean-square stability. At $\beta=0$, \eqref{eq:F-minus-one} gives $F=1-x/2$, and $\nu<1-x/2$ with $\nu=\alpha(d_2-d_1^2)/(2d_1W)$ and $x=\alpha d_1$ rearranges at $W=1$ to $\alpha d_2<2d_1$, the threshold \eqref{eq:d2crit} at $\beta=0$.

\paragraph{$\kappa_K$ along $K$ in one dimension.} With $\bar q=\E[Q_\xi]$ and $s_q=\E[Q_\xi^2]$, \eqref{eq:block-moments} gives $\E[D_K]=1-\bar q^K$ and $\Var(D_K)=s_q^K-\bar q^{2K}$, so
\[
\kappa_K=\frac{s_q^K-\bar q^{2K}}{2\,(1-\bar q^K)},
\qquad
\frac{\kappa_2}{\kappa_1}=\frac{s_q+\bar q^2}{1+\bar q},
\]
and $\kappa_2<\kappa_1$ if and only if $s_q<1+\bar q(1-\bar q)$. With $\lvert\bar q\rvert<1$ the limits are $\kappa_K\to0$ when $s_q<1$, $\kappa_K\to\tfrac12$ when $s_q=1$ and $\kappa_K\to\infty$ when $s_q>1$; the middle case is the marginal case $\rho(\Qop)=1$ of Appendix~\ref{app:proof-blocklength} at small $\alpha$, where $1/r(\alpha,\beta)=\tfrac{\alpha}{2(1-\beta)}\,(1+O(\alpha))$.

\subsection{Proof of Proposition~\ref{prop:momentum}}
\label{app:proof-momentum}

Throughout, $\alpha$ and the block mean $\bar D:=\bar D_K$ are fixed, and $A_\beta=\T(\bar D)$, $\Bm_\beta$ and $\Gop_{\alpha,\beta}$ carry the momentum in their index. The proof first sets up a generating function in which $\beta$ enters through one quadratic polynomial, then relates two such polynomials by subordination, applies Littlewood's principle, and finally passes from rank-one matrices to the cone.

\paragraph{The generating function.} Let $\Phi_t=\Em A_\beta^t\Bm_\beta$, so that $\Gop_{\alpha,\beta}(Y)=\sum_{t\ge0}\Phi_tY\Phi_t^\top$ whenever the series converges. For $|w|<1/\rho(A_\beta)$,
\begin{equation}
\sum_{t\ge0}\Phi_t\,w^{t+1}=w\,\Em\,(I-wA_\beta)^{-1}\Bm_\beta=\Psi\bigl(P_\beta(w)\bigr),
\label{eq:transfer}
\end{equation}
where
\[
\Psi(\sigma)=-\alpha\sigma\,\bigl[I-(I-\alpha\bar D)\sigma\bigr]^{-1},
\qquad
P_\beta(w)=(1+\beta)w-\beta w^2 .
\]
The key observation is that every block of $A_\beta$ and of $\Bm_\beta$ is a polynomial in $\bar D$, so both sides are power series in $w$ whose coefficients are polynomials in $\bar D$, and it suffices to verify the identity when $\bar D$ is a scalar $\lambda$. There it is the resolvent of the $2\times2$ matrix $\T(\lambda)$, and a direct computation gives
\[
w\,\Em\,\bigl(I-w\T(\lambda)\bigr)^{-1}\Bm_\beta=\frac{-\alpha\,P_\beta(w)}{1-(1-\alpha\lambda)P_\beta(w)},
\qquad
\det\bigl(I-w\T(\lambda)\bigr)=1-(1-\alpha\lambda)P_\beta(w).
\]
The second identity, applied eigenvalue by eigenvalue through a Schur form as in Appendix~\ref{app:proof-budget}, gives $\det(I-wA_\beta)=\det\bigl(I-(I-\alpha\bar D)P_\beta(w)\bigr)$. Hence $w\mapsto\Psi(P_\beta(w))$ is analytic on a neighborhood of the closed unit disc if and only if no eigenvalue of $A_\beta$ has modulus at least one, that is, if and only if $(\alpha,\beta)$ is mean stable; the coefficients $\Phi_t$ then decay geometrically and $\Gop_{\alpha,\beta}$ is well defined.

\paragraph{The gain as an $H^2$ norm.} Fix real vectors $v$ and $y$ and put $f_\beta(w)=v^\top\Psi(P_\beta(w))\,y=\sum_t(v^\top\Phi_ty)\,w^{t+1}$. Its coefficients are real, so
\[
v^\top\Gop_{\alpha,\beta}(yy^\top)\,v=\sum_{t\ge0}(v^\top\Phi_ty)^2=\lVert f_\beta\rVert_{H^2}^2 ,
\]
the squared norm of the Hardy space of the unit disc. Writing $g(\sigma)=v^\top\Psi(\sigma)\,y$, which does not depend on $\beta$, we have $f_\beta=g\circ P_\beta$.

\paragraph{Relating a lower momentum to a higher one.} This is the main step of the proof. Let $0\le\beta\le\beta'<1$. We show that $P_\beta=P_{\beta'}\circ\omega$ for an analytic $\omega:\mathbb D\to\mathbb D$ with $\omega(0)=0$. Parametrize the family by $\tau=\log(1+\beta)$ and write $P(w,\tau)=e^\tau w-(e^\tau-1)w^2$. Then, with $\beta=e^\tau-1$,
\[
\partial_\tau P=(1+\beta)\,w(1-w)=w\,\partial_wP\cdot p(w,\tau),
\qquad
p(w,\tau)=\frac{(1+\beta)(1-w)}{1+\beta-2\beta w},
\]
since $\partial_wP=(1+\beta)-2\beta w$. The map $p(\cdot,\tau)$ is a M\"obius transformation with real coefficients whose pole $(1+\beta)/(2\beta)$ lies outside the closed disc for $\beta<1$; it sends $1$ to $0$ and $-1$ to $2(1+\beta)/(1+3\beta)>0$, so it maps the unit circle onto the circle with that real diameter and the open disc onto its interior, and $\operatorname{Re}p>0$ on $\mathbb D$. Let $s=\log(1+\beta)$ and $s'=\log(1+\beta')$, and for $z\in\mathbb D$ let $w(\cdot)$ solve
\[
\dot w(\tau)=-w(\tau)\,p\bigl(w(\tau),\tau\bigr),
\qquad w(s)=z,
\qquad \tau\in[s,s'] .
\]
Along a solution, $\tfrac{d}{d\tau}|w|^2=-2|w|^2\operatorname{Re}p(w,\tau)\le0$, so $|w(\tau)|\le|z|<1$ and the solution stays in the disc, where the right-hand side is analytic in $w$ and continuous in $\tau$, uniformly on $[s,s']$ because the pole of $p$ stays outside the closed disc for every $\beta\le\beta'$; hence the solution exists on the whole interval and depends analytically on $z$. Moreover
\[
\frac{d}{d\tau}P\bigl(w(\tau),\tau\bigr)=\partial_wP\cdot\dot w+\partial_\tau P=-w\,\partial_wP\,p+w\,\partial_wP\,p=0 ,
\]
so $P(w(\tau),\tau)$ is constant along the solution. Setting $\omega(z)=w(s')$ gives an analytic self-map of the disc with $\omega(0)=0$, since $w\equiv0$ is a solution, and $P_\beta(z)=P(z,s)=P(\omega(z),s')=P_{\beta'}(\omega(z))$. Note that no univalence of $P_\beta$ is used.

\paragraph{The gain does not decrease with momentum.} Suppose $(\alpha,\beta')$ is mean stable. The inclusion $P_\beta(\mathbb D)=P_{\beta'}(\omega(\mathbb D))\subseteq P_{\beta'}(\mathbb D)$ shows that $(\alpha,\beta)$ is then mean stable as well. By the determinant identity, mean stability at $\beta'$ says that $1/(1-\alpha\lambda)$ lies outside the closure of $P_{\beta'}(\mathbb D)$ for every eigenvalue $\lambda$ of $\bar D$ with $\alpha\lambda\ne1$, hence outside the closure of $P_\beta(\mathbb D)$, which is mean stability at $\beta$. In particular $\Gop_{\alpha,\beta}$ is well defined. Moreover, $f_{\beta'}=g\circ P_{\beta'}$ is analytic on a neighborhood of the closed disc, hence in $H^2$, and $f_\beta=g\circ P_\beta=f_{\beta'}\circ\omega$ is analytic on the disc. Littlewood's subordination principle states that composition with an analytic self-map of the disc fixing the origin does not increase the $H^2$ norm \citep{duren1970hp}, so
\[
\lVert f_\beta\rVert_{H^2}=\lVert f_{\beta'}\circ\omega\rVert_{H^2}\le\lVert f_{\beta'}\rVert_{H^2}
\qquad\text{for every real } v \text{ and } y .
\]
By the $H^2$ identity above, the two sides are $v^\top\Gop_{\alpha,\beta}(yy^\top)v$ and $v^\top\Gop_{\alpha,\beta'}(yy^\top)v$.

\paragraph{From rank-one inputs to all inputs.} The inequality says $v^\top\bigl[\Gop_{\alpha,\beta'}-\Gop_{\alpha,\beta}\bigr](yy^\top)\,v\ge0$ for all $v$, that is, $\bigl[\Gop_{\alpha,\beta'}-\Gop_{\alpha,\beta}\bigr](yy^\top)\succeq0$ for every $y$. Every positive-semidefinite $Y$ is a sum of such rank-one matrices and the maps are linear, so $\Gop_{\alpha,\beta}\preceq\Gop_{\alpha,\beta'}$. Since $\Vop_K$ is positive, $0\preceq\Vop_K\circ\Gop_{\alpha,\beta}\preceq\Vop_K\circ\Gop_{\alpha,\beta'}$, and the monotonicity of the spectral radius on the cone (Appendix~\ref{app:proof-averaging}) gives $\rho(\Vop_K\circ\Gop_{\alpha,\beta})\le\rho(\Vop_K\circ\Gop_{\alpha,\beta'})$. With Theorem~\ref{thm:full}, mean-square stability at $(\alpha,\beta')$ implies it at $(\alpha,\beta)$. The argument does not diagonalize $\bar D$ or require its spectrum to be real, since the identity \eqref{eq:transfer} is a polynomial identity in $\bar D$ and the determinant identity handles complex eigenvalues.

In one dimension the same conclusion also follows from a much simpler sign computation, since differentiating \eqref{eq:budget-general} gives
\[
\partial_\beta\,r(x,\beta)^{-1}
=\frac{2x\bigl[(1+\beta)^2-\beta^2x\bigr]}
{\bigl[2(1+\beta)-(1+2\beta)x\bigr]^2\bigl[1-\beta+\beta x\bigr]^2} ,
\]
whose denominator is a square and whose numerator is positive because $x<2\le(1+\beta)^2/\beta^2$ under \eqref{eq:mean-condition}.

\subsection{Proof of Proposition~\ref{thm:blocklength}}
\label{app:proof-blocklength}

By \eqref{eq:block-moments}, $\Sm_K=I\kron I-I\kron\bar Q^K-\bar Q^K\kron I+\Qop^K$ and $\bar D_K\kron\bar D_K$ expands the same way with $\bar Q^K\kron\bar Q^K$ in place of $\Qop^K$, so
\[
\Vop_K=\Qop^K-\Qop_\infty^K,
\qquad
\Qop(Y)=\E[Q_\xi YQ_\xi],
\quad
\Qop_\infty(Y)=\bar QY\bar Q .
\]
Both are self-adjoint for the trace inner product because $Q_\xi$ is symmetric.

\paragraph{Part (i).} If $\rho(\Qop)<1$ then $\lVert\Vop_K\rVert\le\lVert\Qop^K\rVert+\lVert\Qop_\infty^K\rVert\to0$, while $\bar D_K=I-\bar Q^K\to I$ because $\rho(\bar Q)<1$. Hence $\Lop_K$ converges to $X\mapsto\T(I)X\T(I)^\top$, and the spectral radius is continuous, so $\rho(\Lop_K)\to\rho(\T(I))^2$, which is below one exactly under \eqref{eq:mean-condition} at $\lambda=1$, that is $0<\alpha<2(1+\beta)/(1+2\beta)$.

\paragraph{Part (ii).} If $\rho(\Qop)>1$, use $\Lop_K\succeq X\mapsto\Bm\Vop_K(\Em X\Em^\top)\Bm^\top$ and $\Em\Bm=-\gamma I$. Since swapping the two factors of a composition leaves its spectral radius unchanged, these give $\rho(\Lop_K)\ge\gamma^2\rho(\Vop_K)$. Let $X\succeq0$ with $\lVert X\rVert_F=1$ be the eigenmatrix of $\Qop$ at $\rho(\Qop)$. Then
\[
\langle X,\Vop_KX\rangle=\rho(\Qop)^K-\langle X,\bar Q^KX\bar Q^K\rangle\ge\rho(\Qop)^K-\rho(\bar Q)^{2K},
\]
and $\Vop_K$ is self-adjoint, so its spectral radius is at least this Rayleigh quotient.

\paragraph{Part (iii).} By \eqref{eq:batch-split}, $\Qop_b=\Qop_\infty+\tfrac1b\Cop$ with both terms positive, and $\rho(\Qop_\infty)=\rho(\bar Q)^2<1$ on the complement of the flat directions. The criterion for positive maps in the proof of Theorem~\ref{thm:full}(ii) (Appendix~\ref{app:proof-budget}) gives $\rho(\Qop_b)<1$ if and only if $\rho\bigl(\tfrac1b\Cop\circ(\mathcal I-\Qop_\infty)^{-1}\bigr)<1$, that is, $b>b_c$. For $b<b_c$ the inequality is strict, since the same criterion applied to $\Qop_b/\mu$ for $\mu\ge1$ gives $\rho(\Qop_b)<\mu$ if and only if $\rho\bigl(\tfrac1b\Cop\circ(\mu\mathcal I-\Qop_\infty)^{-1}\bigr)<1$, and this spectral radius is $b_c/b>1$ at $\mu=1$, hence above $1$ for some $\mu>1$ by continuity, so $\rho(\Qop_b)>1$. The last clause of (iii) is then (i) and (ii) with $\Qop=\Qop_b$.

\paragraph{Sampling without replacement.} The form \eqref{eq:batch-split} is for sampling with replacement; for batches of $b$ distinct samples out of $n$ the same split holds with $1/b$ replaced by $(n-b)/(b(n-1))$, the weight in the second-moment matrix of \citet{mulayoff2024exact}, and (iii) reads $(n-b)/(b(n-1))<1/b_c$.

\paragraph{Small inner rates.} For small $\eta$ write $\Cop=\eta^2\Cop_1$ with $\Cop_1(Y)=\E_i[(h_i-\bar H)Y(h_i-\bar H)]$; since $\mathcal I-\Qop_\infty=\eta\,\mathcal H-\eta^2\,\bar H(\cdot)\bar H$ with $\mathcal H(Y)=\bar HY+Y\bar H$, which is invertible on the complement of the flat directions, $\Cop\circ(\mathcal I-\Qop_\infty)^{-1}=\eta\,\Cop_1\circ\mathcal H^{-1}+O(\eta^2)$ and, the spectral radius being continuous, $b_c=\eta\,\rho(\Cop_1\circ\mathcal H^{-1})+o(\eta)$. The threshold $b_c$ therefore grows linearly with the inner rate for small rates.

\paragraph{Growth per inner gradient.} Since a block of length $K$ spends $K$ gradients, the growth per inner gradient is
\begin{equation}
\rho_K=\rho(\Lop_K)^{1/(2K)},
\label{eq:rho-k}
\end{equation}
the square root turning the factor of the second moment into a factor of the error itself and the $K$-th root converting the factor per block into a factor per gradient. Because $\rho(\Lop_K)<1$ exactly when $\rho_K<1$, the growth per inner gradient $\rho_K$ and the spectral radius $\rho(\Lop_K)$ agree on stability at every $K$, but only $\rho_K$ counts the $K$ gradients that a stable block spends. In terms of $\rho_K$, the two cases of Proposition~\ref{thm:blocklength} become $\rho_K\to1$ in case (i) when $\alpha\ne1$ and $\rho_K\to\rho(\Qop)^{1/2}$ in case (ii). Per gradient, lengthening a stable block stops helping once the mean block operator has saturated, because once $\bar D_K\approx I$ the contraction per block is fixed at $\rho(\T(I))^2$ while each block costs $K$ gradients, so $\rho_K$ returns towards one and a finite block length is best (Figure~\ref{fig:toy-blocklength}).

In the expanding case the lower bound gives $\liminf_K\rho_K\ge\rho(\Qop)^{1/2}$. For the matching upper bound, $\rho(\Lop_K)\le\lVert\Lop_K\rVert\le\lVert A_K\rVert^2+\lVert\Bm\rVert^2\lVert\Em\rVert^2\lVert\Vop_K\rVert$ with $A_K=\T(\bar D_K)$ bounded, since $\bar D_K\to I$, and $\lVert\Vop_K\rVert\le\lVert\Qop^K\rVert+\lVert\bar Q^K\rVert^2$; Gelfand's formula gives $\lVert\Qop^K\rVert\le C_\varepsilon(\rho(\Qop)+\varepsilon)^K$ for every $\varepsilon>0$, so $\limsup_K\rho_K\le\rho(\Qop)^{1/2}$. In the contracting case $\rho(\Lop_K)\to\rho(\T(I))^2$, and $\T(I)$ has trace $(1+\beta)(1-\alpha)$ and determinant $\beta(1-\alpha)$, so $\rho(\T(I))>0$ whenever $\alpha\ne1$; then $\log\rho(\Lop_K)$ converges to a finite limit and $\log\rho_K=\log\rho(\Lop_K)/(2K)\to0$. At $\alpha=1$ the matrix $\T(I)$ is nilpotent and the limit of $\rho_K$ is set by how fast $\Vop_K$ vanishes rather than by the mean transition. At $\beta=0$, for example, the outer step only groups the $K$ inner steps, so $\rho(\Lop_K)=\rho(\Qop)^K$ and $\rho_K=\rho(\Qop)^{1/2}$ at every $K$.

The expanding case can occur even when a single inner step contracts in mean square along every fixed direction, as the following example shows. Let $H_\xi=c\,uu^\top$ with $u$ uniform on the unit sphere of $\mathbb R^n$ and write $a=\eta c$. Using $\E[uu^\top]=I/n$ and $\E[(u^\top Yu)uu^\top]=(2Y+\operatorname{tr}(Y)I)/(n(n+2))$,
\[
\Qop(I)=\Bigl(1-\frac{2a}{n}+\frac{a^2}{n}\Bigr)I,
\qquad
\Qop(Y)=\Bigl(1-\frac{2a}{n}+\frac{2a^2}{n(n+2)}\Bigr)Y\ \ \text{for }\operatorname{tr}Y=0,
\]
so $\rho(\Qop)=1-2a/n+a^2/n$, which exceeds one for $a>2$. Along a fixed direction, $\E[(v^\top Q_\xi v)^2]=1-2a/n+3a^2/(n(n+2))$, which stays below one for $a<2(n+2)/3$. Finally, $\rho(\bar Q)=\lvert1-a/n\rvert<1$ keeps the mean inner step contracting.

\paragraph{The marginal case $\rho(\Qop)=1$.} Let $\rho(\Qop)=1$ with $1$ the only eigenvalue of $\Qop$ on the unit circle, let $\rho(\bar Q)<1$ and $0<\alpha<2(1+\beta)/(1+2\beta)$, and restrict everything to the complement of the common flat directions. Since $Q_\xi$ is symmetric, $\Qop$ is self-adjoint for the trace inner product, hence diagonalizable with real eigenvalues, so the eigenvalue $1$ is semisimple and the hypothesis excludes only $-1$. Let $\Pi$ be the orthogonal projector onto the eigenspace of $\Qop$ at $1$. Then $\Qop^K=\Pi+(\Qop-\Pi)^K\to\Pi$, because $\rho(\Qop-\Pi)<1$, while $\Qop_\infty^K\to0$ because $\rho(\bar Q)<1$; hence $\Vop_K=\Qop^K-\Qop_\infty^K\to\Pi$, a nonzero projector, with $\rho(\Pi)=1$. Meanwhile $\bar D_K\to I$, and $\T(I)$ is stable under $0<\alpha<2(1+\beta)/(1+2\beta)$, so $\T(\bar D_K)$ is stable for all large $K$ and $\Gop_{\alpha,\beta}$, which depends continuously on $\bar D_K$ near $I$ because its Neumann series converges uniformly there, tends to its value at $\bar D_K=I$. At $\bar D_K=I$ every $\Phi_t=\Em\T(I)^t\Bm$ is the scalar $\varphi_t(1)$ times the identity, so that value is multiplication by $\sum_t\varphi_t(1)^2=1/r(\alpha,\beta)$, by the identity of Appendix~\ref{app:proof-scalar} at $d_1=1$. Therefore $\Vop_K\circ\Gop_{\alpha,\beta}\to r(\alpha,\beta)^{-1}\Pi$ in norm, and the spectral radius is continuous, so $\rho(\Vop_K\circ\Gop_{\alpha,\beta})\to1/r(\alpha,\beta)$. Theorem~\ref{thm:full} then gives mean-square stability for all large $K$ when $r(\alpha,\beta)>1$ and instability for all large $K$ when $r(\alpha,\beta)<1$. With $W$ workers $\Vop_K$ is divided by $W$ (Corollary~\ref{cor:averaging}) and the limit is $1/(W\,r(\alpha,\beta))$. The scalar model gives the same limit through \eqref{eq:k-law} below, since at $s_q=1$ one has $\Nk\to1$ while $r(x(K),\beta)\to r(\alpha,\beta)$.

\paragraph{The scalar case along $K$.} In one dimension, with $\bar q=\E[Q_\xi]$ and $s_q=\E[Q_\xi^2]$, \eqref{eq:block-moments} gives
\begin{equation}
d_1(K)=1-\bar q^{\,K},
\qquad
d_2(K)=1-2\bar q^{\,K}+s_q^{\,K},
\qquad
\Nk=\frac{s_q^{\,K}-\bar q^{\,2K}}{(1-\bar q^{\,K})^2}.
\label{eq:k-law}
\end{equation}
The mean block operator saturates, so $x(K)=\alpha(1-\bar q^{\,K})$ tends to $\alpha$ when $|\bar q|<1$, while $\Nk$ tends to $0$, to $1$ or to infinity according as $s_q$ is below, equal to or above one, the two cases of Proposition~\ref{thm:blocklength} and the marginal case between them.

\subsection{Proof of Corollary~\ref{cor:averaging} and Proposition~\ref{prop:totalbatch}}
\label{app:proof-averaging}

\paragraph{Monotonicity of the spectral radius.} For two positive maps $0\preceq\mathcal M\preceq\mathcal N$ one has $0\preceq\mathcal M^t(I)\preceq\mathcal N^t(I)$ for every $t$, and since every symmetric $X$ with $\opnorm{X}\le1$ satisfies $-I\preceq X\preceq I$, the trace norm of $\mathcal M^t(X)$ is at most $2\operatorname{tr}\mathcal M^t(I)\le2\operatorname{tr}\mathcal N^t(I)$; Gelfand's formula then gives $\rho(\mathcal M)\le\rho(\mathcal N)$.

\paragraph{Proof of Corollary~\ref{cor:averaging}(a).} Let $D^{(1)},\ldots,D^{(W)}$ be independent copies of the block operator and $\widehat D=W^{-1}\sum_iD^{(i)}$ their average; the averaged pseudo-gradient is $\widehat De$ because all inner trajectories start at the same point. Then $\E[\widehat D]=\bar D_K$, and with $\Delta_i=D^{(i)}-\bar D_K$,
\[
\Vop_{K,W}(Y)=\E\bigl[(\widehat D-\bar D_K)Y(\widehat D-\bar D_K)^\top\bigr]
=\frac1{W^2}\sum_{i,j}\E[\Delta_iY\Delta_j^\top]
=\frac1W\Vop_K(Y),
\]
since the terms with $i\neq j$ vanish by independence and $\E[\Delta_i]=0$. The averaged operators are i.i.d.\ across outer steps with finite second moments, so the decomposition \eqref{eq:decomposition} and Theorem~\ref{thm:full} apply to $\widehat D$, and the decomposition gives $\Lop_{K,W}$, which is \eqref{eq:decomposition} with $\Vop_K$ replaced by $\Vop_K/W$. For $W'\ge W$ and $X\succeq0$,
\[
\Lop_{K,W}(X)-\Lop_{K,W'}(X)=\Bigl(\frac1W-\frac1{W'}\Bigr)\Bm\Vop_K(\Em X\Em^\top)\Bm^\top\succeq0,
\]
so $0\preceq\Lop_{K,W'}\preceq\Lop_{K,W}$, and by the monotonicity of the spectral radius above, $\rho(\Lop_{K,W'})\le\rho(\Lop_{K,W})$. In the scalar case $d_1$ is unchanged and $d_2$ becomes $d_1^2+(d_2-d_1^2)/W$, so $\Nk$ becomes $\Nk/W$, and the scalar form \eqref{eq:budget-general} gives the stated criterion. The exact count of Corollary~\ref{cor:averaging}(a) follows from Theorem~\ref{thm:full}, since $\Gop_{\alpha,\beta}$ depends on the distribution of the block operator only through $\bar D_K$, which averaging leaves unchanged. The criterion with $W$ workers is therefore $\rho(\Vop_K\circ\Gop_{\alpha,\beta})/W<1$.

\paragraph{Proof of Corollary~\ref{cor:averaging}(b).} On the cone, $\Lop_{K,W}\succeq X\mapsto\tfrac1W\Bm\Vop_K(\Em X\Em^\top)\Bm^\top$, and as in the proof of Proposition~\ref{thm:blocklength}(ii), $\Em\Bm=-\gamma I$ gives $\rho(\Lop_{K,W})\ge\gamma^2\rho(\Vop_K)/W\ge\gamma^2\bigl(\rho(\Qop)^K-\rho(\bar Q)^{2K}\bigr)/W$. Mean-square stability needs $\rho(\Lop_{K,W})<1$, which is the bound of Corollary~\ref{cor:averaging}(b).

\paragraph{Proof of Proposition~\ref{prop:totalbatch}.} With replacement, the one-step mean $\bar Q=I-\eta\bar H$ does not depend on the batch, so $\bar D_K$, $A$ and the mean condition are the same in both configurations. Write $\Qop_b=\Qop_\infty+\tfrac1b\Cop$ and expand its $K$-th power by distributivity, keeping the order of the factors because $\Qop_\infty$ and $\Cop$ need not commute. Collecting the terms in which $\Cop$ appears $s$ times gives
\[
\Qop_b^K=\Qop_\infty^K+\sum_{s=1}^Kb^{-s}\,\mathcal M_s,
\]
where $\mathcal M_s$ is the sum of the $\binom{K}{s}$ compositions of $K$ maps in which $\Cop$ appears $s$ times and $\Qop_\infty$ appears $K-s$ times. Each such composition is positive because its factors are, so each $\mathcal M_s$ is positive, and none depends on $b$. By the proof of Proposition~\ref{thm:blocklength}, $\Vop_K(b)=\Qop_b^K-\Qop_\infty^K=\sum_{s=1}^Kb^{-s}\mathcal M_s$. An inner trajectory with batch $b/W$ has $\Vop_K(b/W)=\sum_sW^sb^{-s}\mathcal M_s$, and averaging $W$ such trajectories divides the map by $W$ (Corollary~\ref{cor:averaging}), so
\[
\tfrac1W\,\Vop_K(b/W)-\Vop_K(b)=\sum_{s=2}^K\bigl(W^{s-1}-1\bigr)b^{-s}\mathcal M_s\succeq0 ,
\]
with equality at $K=1$, where only $s=1$ occurs. For $K\ge2$ the difference is nonzero whenever $\Cop\ne0$ and $\bar Q$ is invertible, because the composition $\Cop\circ\Cop\circ\Qop_\infty^{K-2}$ applied to $I$ has trace $\eta^4\operatorname{tr}\bigl(\bar Q^{2(K-2)}\,\E_j[(h_j-\bar H)S(h_j-\bar H)]\bigr)$ with $S=\E_i[(h_i-\bar H)^2]\ne0$, and the matrix in the expectation is positive semidefinite with trace $\operatorname{tr}(S^2)>0$. The two second-moment maps share $\mathcal A$ and have ordered noise terms, so $\Lop_K(b)\preceq\Lop_{K,W}(b/W)$, and the monotonicity of the spectral radius on the cone gives the conclusion. The isotropic zero-momentum case, with an outer relaxation in place of the Nesterov step, corresponds to a result of \citet{agarwala2026twophase} for a high-dimensional limit of linear regression, where the expansion in powers of the inverse batch also appears.

\paragraph{The second moment decreases with $W$ at every horizon.} Write $\Lop_{K,W}=\mathcal A+\tfrac1W\mathcal N$ with $\mathcal A(X)=AXA^\top$ and $\mathcal N(X)=\Bm\Vop_K(\Em X\Em^\top)\Bm^\top$, both positive. Expanding, as before, the $T$-fold composition and collecting the terms in which $\mathcal N$ appears $j$ times,
\[
\Lop_{K,W}^T=\sum_{j=0}^TW^{-j}\,\mathcal R_j ,
\]
each $\mathcal R_j$ a positive map that does not depend on $W$. For positive-semidefinite $\Psi$ and $\Sigma_0$, $\langle\Psi,\Lop_{K,W}^T(\Sigma_0)\rangle=\sum_jW^{-j}\langle\Psi,\mathcal R_j(\Sigma_0)\rangle$ is a polynomial in $1/W$ with non-negative coefficients, so it is non-increasing in $W$ and convex in $1/W$. With $\Psi=I$ it is $\E\opnorm{z_T}^2$; with $\Psi=\Em^\top\Em$ it is $\E\opnorm{e_T}^2$, the numerator of \eqref{eq:amplification}. For the ordering of Proposition~\ref{prop:totalbatch} at a finite horizon, let $0\preceq\mathcal M\preceq\mathcal N$ be positive maps; then
\[
\mathcal N^T-\mathcal M^T=\sum_{j=0}^{T-1}\mathcal N^{\,j}\,(\mathcal N-\mathcal M)\,\mathcal M^{\,T-1-j}\succeq0 ,
\]
a telescoping sum of compositions of positive maps, so $\mathcal M^T\preceq\mathcal N^T$ and every readout $\langle\Psi,\cdot\,(\Sigma_0)\rangle$ with $\Psi,\Sigma_0\succeq0$ is ordered at every horizon $T$.

\subsection{Proof of Proposition~\ref{prop:bridge} (no common minimizer)}
\label{app:proof-forced}

\paragraph{Sample losses with different minimizers.} The common minimizer of Section~\ref{sec:setup-model} enters only through the definition of the error, since the block operator does not depend on it. Let each sample loss keep its curvature but have its own minimizer, a \textit{regular minimum} in the terminology of \citet{mulayoff2024exact}. With $\bar\theta$ the minimizer of the full loss and $e_t=\theta_t-\bar\theta$, the pseudo-gradient is affine in $\theta$,
\begin{equation}
P_{K,\omega}(\theta)=p_\omega+D_{K,\omega}(\theta-\bar\theta),
\qquad
p_\omega=P_{K,\omega}(\bar\theta),
\label{eq:regular-block}
\end{equation}
with the same block operator as in \eqref{eq:block-linear}, because the response of the pseudo-gradient to a displacement depends on the curvatures alone. Hence $\bar D_K$, $\Vop_K$ and $\Lop_K$ are unchanged, and so is every statement of Sections~\ref{sec:theory} and \ref{sec:implications} about $\rho(\Lop_K)$. The offset has mean zero at $\bar\theta$ (the last paragraph of this subsection), so by Proposition~\ref{prop:bridge}(b) below, when $\rho(\Lop_K)<1$, the covariance across streams converges to $C_\infty=(\mathcal I-\Lop_K)^{-1}(\Bm S_p\Bm^\top)$ with $S_p=\E[p_\omega p_\omega^\top]$. With a common minimizer the offset vanishes and the error is that of Section~\ref{sec:setup-model}.

Section~\ref{sec:regular} uses the following statement.
\begin{proposition}[The second-moment map without a common minimizer]
\label{prop:bridge}
Let the pairs $(D_t,p_t)$ of \eqref{eq:forced-recurrence} be i.i.d.\ across outer steps and independent of $z_0$, with $\E\opnorm{D_t}_F^2$, $\E\opnorm{p_t}^2$ and $\E\opnorm{z_0}^2$ finite.
\begin{enumerate}\setlength{\itemsep}{1pt}\setlength{\parskip}{0pt}
\item[(a)] $\rho(\Lop_K)<1$ if and only if every shared-minibatch perturbation is mean-square stable (Definition~\ref{def:mss}).
\item[(b)] If $\rho(\Lop_K)<1$, then \eqref{eq:forced-recurrence} is mean-square bounded, $\sup_t\E\opnorm{z_t}^2<\infty$, and the covariance of the state across independent streams converges to $C_\infty=(\mathcal I-\Lop_K)^{-1}(F_\infty)$, with $F_\infty=\Bm\Cov(D\Em m_\infty+p)\Bm^\top$ and $m_\infty=(I-A)^{-1}\Bm\E[p]$; when $\E[p]=0$, $C_\infty=(\mathcal I-\Lop_K)^{-1}(\Bm S_p\Bm^\top)$ with $S_p=\E[p_\omega p_\omega^\top]$.
\item[(c)] Let $\Phi\succeq0$ be an eigenmatrix of the adjoint $\Lop_K^*$ at $\rho(\Lop_K)$. If $\rho(\Lop_K)\ge1$ and the forcing has weight on it, $\liminf_t\langle\Phi,F_t\rangle>0$, then $\operatorname{tr}C_t\to\infty$, at least linearly when $\rho(\Lop_K)=1$ and at least geometrically above; when the mean is stable this condition reads $\langle\Phi,F_\infty\rangle>0$.
\end{enumerate}
\end{proposition}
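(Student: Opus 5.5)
For (a) I would use the cancellation of the offset already noted in Section~\ref{sec:regular}. Two runs of \eqref{eq:forced-recurrence} driven by the same pairs $(D_t,p_t)$ have difference $z_t-z_t'$, which obeys $z_{t+1}=\T(D_t)z_t$ exactly. The initial difference is independent of the stream and has finite second moment. So Theorem~\ref{thm:full}(i), applied to this homogeneous recurrence, gives that every shared-minibatch perturbation is mean-square stable if and only if $\rho(\Lop_K)<1$. For the necessity direction, the initial difference must range over all admissible $z_0$. This is fine, because the second start can be chosen freely, for example as a fixed point plus an arbitrary independent perturbation.

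For (b) I would first propagate the mean, $m_{t+1}=Am_t+\Bm\E[p]$, which converges to $m_\infty=(I-A)^{-1}\Bm\E[p]$ because $\rho(A)^2\le\rho(\Lop_K)<1$. Next I would write the centered state $\tilde z_t=z_t-m_t$. Using $\T(D_t)=A+\Bm(D_t-\bar D_K)\Em$ as in the decomposition \eqref{eq:decomposition}, I get
\[
\tilde z_{t+1}=\T(D_t)\tilde z_t+\Bm\bigl[(D_t-\bar D_K)\Em m_t+p_t-\E p\bigr].
\]
Here $(D_t,p_t)$ is independent of $\tilde z_t$, and $\E[\tilde z_t]=0$. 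So the cross term vanishes when the covariance recursion is formed, giving $C_{t+1}=\Lop_K(C_t)+F_t$ with $F_t=\Bm\Cov(D\Em m_t+p)\Bm^\top$. The forcing $F_t$ is bounded and converges to $F_\infty$ because $m_t\to m_\infty$ and the second moments are finite. With $\rho(\Lop_K)<1$, a standard convergent-Neumann-series argument for a contracting linear recursion with convergent forcing gives $C_t\to(\mathcal I-\Lop_K)^{-1}F_\infty$. Then $\sup_t\E\opnorm{z_t}^2=\sup_t(\operatorname{tr}C_t+\opnorm{m_t}^2)<\infty$. When $\E p=0$ we have $m_\infty=0$ and $F_\infty=\Bm S_p\Bm^\top$.

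For (c) I would pair the covariance recursion with the positive-semidefinite eigenmatrix $\Phi$ of $\Lop_K^*$, which exists by Krein--Rutman as in Appendix~\ref{app:proof-full}. This gives
\[
\langle\Phi,C_{t+1}\rangle=\rho\langle\Phi,C_t\rangle+\langle\Phi,F_t\rangle,
\]
with every term non-negative because $C_t,F_t\succeq0$. If $\langle\Phi,F_t\rangle\ge c>0$ for all large $t$, the sequence grows at least like $ct$ when $\rho=1$ and at least like $c\rho^{t}$ times a constant when $\rho>1$. The bound $\langle\Phi,C_t\rangle\le\lambda_{\max}(\Phi)\operatorname{tr}C_t$ then transfers this growth to $\operatorname{tr}C_t$. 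When the mean is stable, $m_t\to m_\infty$, so $F_t\to F_\infty$ and the liminf condition reduces to $\langle\Phi,F_\infty\rangle>0$.

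The main obstacle I expect is in (c) when the mean is unstable. There $F_t$ need not converge, so I would keep the hypothesis in its liminf form and check only that $F_t\succeq0$, which holds as the second moment of a centered vector. A smaller point in (b) is that $\E[\tilde z_t\,\cdot\,]$ cross terms vanish only because $p_t$ and $D_t$ are jointly independent of the past, which is exactly what the i.i.d.\ pair assumption provides.
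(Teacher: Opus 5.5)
Your proposal is correct and follows the paper's proof essentially step for step. Part (a) cancels the shared offset and invokes Theorem~\ref{thm:full}(i). Part (b) uses the mean recursion, the covariance recursion $C_{t+1}=\Lop_K(C_t)+F_t$ and a dominated Neumann-series limit. Part (c) pairs with the adjoint eigenmatrix $\Phi$. The only difference is cosmetic bookkeeping. You center as $\tilde z_{t+1}=\T(D_t)\tilde z_t+\Bm[\cdots]$, so $\Lop_K(C_t)$ appears directly. The paper instead writes $A\zeta_t+\Bm\chi_t$ with $\chi_t=\Delta_t\Em z_t+p_t-\E p$ and reassembles $\Lop_K$ through the decomposition \eqref{eq:decomposition}.
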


\paragraph{The recurrence with an offset.} Substituting $P_{\omega_t}(\theta_t)=p_t+D_te_t$ into \eqref{eq:snoo} gives
\[
u_t=\beta u_{t-1}+p_t+D_te_t,
\qquad
e_{t+1}=\bigl(I-\alpha(1+\beta)D_t\bigr)e_t-\alpha\beta^2u_{t-1}-\alpha(1+\beta)p_t,
\]
that is, $z_{t+1}=\T(D_t)z_t+\Bm p_t$, which is \eqref{eq:forced-recurrence}. Write $\T(D_t)=A+\Bm\Delta_t\Em$ with $A=\T(\bar D)$ and $\Delta_t=D_t-\bar D$, so that $z_{t+1}=Az_t+\Bm(\Delta_t\Em z_t+p_t)$. The pair $(D_t,p_t)$ is independent of $z_t$, hence $\E[\Delta_t\Em z_t]=0$ and $m_{t+1}=Am_t+\Bm\E[p]$.

For the covariance put $\zeta_t=z_t-m_t$ and $\chi_t=\Delta_t\Em z_t+(p_t-\E[p])$, so that $z_{t+1}-m_{t+1}=A\zeta_t+\Bm\chi_t$. The cross term vanishes, since $\E[\zeta_t\chi_t^\top]=\E[\zeta_tz_t^\top]\Em^\top\E[\Delta_t]^\top+\E[\zeta_t]\,\E[p_t-\E p]^\top=0$, so $C_{t+1}=AC_tA^\top+\Bm\E[\chi_t\chi_t^\top]\Bm^\top$. Using $\E[z_tz_t^\top]=C_t+m_tm_t^\top$ and the independence of $z_t$ from $(D_t,p_t)$,
\begin{align*}
\E[\chi_t\chi_t^\top]
&=\Vop\bigl(\Em C_t\Em^\top\bigr)
+\E\bigl[(\Delta_t\Em m_t+p_t-\E p)(\Delta_t\Em m_t+p_t-\E p)^\top\bigr]\\
&=\Vop\bigl(\Em C_t\Em^\top\bigr)+\Cov\bigl(D\,\Em m_t+p\bigr),
\end{align*}
and combining with $AC_tA^\top$ through \eqref{eq:decomposition} gives, with $m_t=\E[z_t]$ and $C_t=\Cov(z_t)$,
\begin{equation}
m_{t+1}=A\,m_t+\Bm\,\E[p],
\qquad
C_{t+1}=\Lop_K(C_t)+F_t,
\qquad
F_t=\Bm\,\Cov\bigl(D_t\Em m_t+p_t\bigr)\,\Bm^\top\succeq0 .
\label{eq:forced}
\end{equation}
The forcing $F_t$ is a covariance conjugated by $\Bm$, hence positive semidefinite.

\paragraph{Part (a).} A shared-minibatch perturbation obeys \eqref{eq:transition}, and Theorem~\ref{thm:full}(i) says that it is mean-square stable for every admissible start if and only if $\rho(\Lop_K)<1$.

\paragraph{Part (b).} Let $\rho(\Lop_K)<1$. Then $\rho(A)^2\le\rho(\Lop_K)<1$ (Appendix~\ref{app:proof-budget}), so $m_t\to m_\infty=(I-A)^{-1}\Bm\E[p]$ and $m_t$ is bounded. Expanding the covariance in the forcing, $\Cov(D\Em m+p)=\Vop_K(\Em mm^\top\Em^\top)+\E[\Delta\Em m\,(p-\E p)^\top]+\E[(p-\E p)\,m^\top\Em^\top\Delta^\top]+\Cov(p)$ is quadratic in $m$ with coefficients that are finite by the moment clause, so $F_t$ is bounded and $F_t\to F_\infty=\Bm\Cov(D\Em m_\infty+p)\Bm^\top$. Iterating $C_{t+1}=\Lop_K(C_t)+F_t$,
\[
C_t=\Lop_K^t(C_0)+\sum_{j=0}^{t-1}\Lop_K^{\,j}(F_{t-1-j}),
\]
and since $\lVert\Lop_K^{\,j}\rVert\le c\,\vartheta^{\,j}$ for some $\vartheta<1$, the first term vanishes, the sum is bounded by $c\sup_s\lVert F_s\rVert/(1-\vartheta)$, and it converges to $\sum_{j\ge0}\Lop_K^{\,j}(F_\infty)=(\mathcal I-\Lop_K)^{-1}(F_\infty)$, each term converging and the series being dominated. So $\Sigma_t=C_t+m_tm_t^\top$ is bounded, which is the boundedness of Proposition~\ref{prop:bridge}(b), and $C_t\to C_\infty$. In the model \eqref{eq:regular-block} with i.i.d.\ inner steps, $\E[p]=0$ (below), so $m_\infty=0$ and $F_\infty=\Bm\Cov(p)\Bm^\top=\Bm S_p\Bm^\top$.

\paragraph{Part (c).} Let $\rho=\rho(\Lop_K)\ge1$ and let $\Phi\succeq0$ be an eigenmatrix of the adjoint $\Lop_K^*$ at $\rho$, which exists because $\Lop_K^*$ is positive with the same spectral radius (Appendix~\ref{app:proof-full}). Taking the inner product of $C_{t+1}=\Lop_K(C_t)+F_t$ with $\Phi$ gives $\langle\Phi,C_{t+1}\rangle=\rho\langle\Phi,C_t\rangle+\langle\Phi,F_t\rangle$, and induction gives
\[
\langle\Phi,C_t\rangle=\rho^t\langle\Phi,C_0\rangle+\sum_{s<t}\rho^{\,t-1-s}\langle\Phi,F_s\rangle ,
\]
every term of which is non-negative because $C_0,F_s\succeq0$. If $\rho>1$ and $\langle\Phi,C_0\rangle>0$ the first term grows geometrically. If $\liminf_t\langle\Phi,F_t\rangle=2\epsilon>0$, then $\langle\Phi,F_s\rangle\ge\epsilon$ for $s\ge s_0$ and the sum is at least $\epsilon\sum_{s_0\le s<t}\rho^{t-1-s}\ge\epsilon\,(t-s_0)$, which is linear growth at $\rho=1$ and geometric growth above it. In either case $\operatorname{tr}C_t\ge\langle\Phi,C_t\rangle/\lambda_{\max}(\Phi)\to\infty$, and $\E\opnorm{z_t}^2=\operatorname{tr}\Sigma_t\ge\operatorname{tr}C_t$. When the mean is stable, $F_t\to F_\infty$ and the condition on the forcing is $\langle\Phi,F_\infty\rangle>0$, that is, the stationary forcing has weight on the unstable mode.

\paragraph{The offset and its second moment.} In the model \eqref{eq:regular-block} let an inner trajectory start at $\bar\theta$ and write $w_k=\theta_k-\bar\theta$, $w_0=0$. One inner step on the minibatch $\xi_k$ is $w_{k+1}=Q_{\xi_k}w_k-\eta\,g_{\xi_k}$ with $g_\xi=\nabla\ell_\xi(\bar\theta)=H_\xi(\bar\theta-\theta^*_\xi)$, whose mean $\E_\xi[g_\xi]=\nabla\E_\xi\ell_\xi(\bar\theta)$ vanishes because $\bar\theta$ minimizes the full loss. With i.i.d.\ inner steps, $\xi_k$ is independent of $w_k$, so $\E[w_{k+1}]=\bar Q\,\E[w_k]$ and $\E[w_k]=0$ for all $k$; hence $\E[p_\omega]=-\E[w_K]=0$. For the second moment $W_k=\E[w_kw_k^\top]$ the cross terms $\E[Q_{\xi_k}w_kg_{\xi_k}^\top]$ vanish for the same reason, and
\[
W_{k+1}=\Qop_b(W_k)+\eta^2\Gamma,
\qquad \Gamma=\E_\xi[g_\xi g_\xi^\top],
\qquad\text{so}\qquad
S_p=W_K=\sum_{j=0}^{K-1}\Qop_b^{\,j}\bigl(\eta^2\Gamma\bigr),
\]
where $\Gamma$ is the second moment of the minibatch gradient at $\bar\theta$, equal to $\tfrac1b\E_i[g_ig_i^\top]$ for $b$ samples drawn with replacement. The formula for $S_p$ is the analogue of \eqref{eq:block-moments} for the offset, and together with $\bar D_K$ and $\Sm_K$ it determines the floor of part (b) from the one-step law.

\paragraph{Common flat directions without a common minimizer.} In the quadratic model the forcing has no component on directions that every block leaves unchanged, and the analysis is carried out on their complement, as in Section~\ref{sec:setup-stability}. The affine construction of Example~\ref{ex:counterexample} is an instance of Proposition~\ref{prop:bridge}(b) in one dimension.

\section{The Exact Model Used in the Figures}
\label{app:toy}

The figures of Sections~\ref{sec:intro} and~\ref{sec:implications} and of Appendices~\ref{app:mean-vs-ms} and~\ref{app:toy} evaluate the criterion of Theorem~\ref{thm:full} on a model in which every expectation is a finite sum. The sample losses are $\tfrac12(a_i^\top(\theta-\theta^*))^2$ over a fixed set of $N$ feature vectors, so they share a minimizer and each sample curvature is $a_ia_i^\top$, which is the model of Section~\ref{sec:setup-model}. Inner SGD draws minibatches of size $b$ with replacement, so that
\[
\E[H_\xi\kron H_\xi]=\frac1b\,\E_i[h_i\kron h_i]+\Bigl(1-\frac1b\Bigr)\bar H\kron\bar H ,
\]
and \eqref{eq:block-moments} turns this into $\bar D_K$ and $\Sm_K$. The second-moment map is then formed and its spectral radius computed by dense eigenvalues, so nothing in the figures is estimated. We use Gaussian features with a power-law covariance, for which $\bar D_K$ is strongly anisotropic, in Figure~\ref{fig:exact-beta} and Table~\ref{tab:toy-w}, and features of equal norm, for which each sample curvature is a rank-one projector, in Figure~\ref{fig:toy-blocklength} at inner rates $\eta=1.5$ and $3$ and in Figures~\ref{fig:phase}, \ref{fig:implications} and~\ref{fig:mean-vs-ms} at $\eta=2.5$ ($\eta=2$ in the right panel of Figure~\ref{fig:phase}). All three use the eight-dimensional rank-one model with $1{,}024$ samples. In Figure~\ref{fig:phase}, the left panel has $b=2$ and $K\in\{1,8,16,32\}$, the middle panel $b=1$ and $K\in\{1,4,8,16\}$, both with one worker, and the right panel $b=1$, $K=16$ and $W\in\{1,2,4,8\}$. In Figure~\ref{fig:implications}, panel (i) has $(\alpha,K,b)=(1.4,8,2)$; panel (ii) has $b\in\{1,2\}$; panel (iii) has $(\alpha,\beta,K,b)=(1,0.5,8,1)$; and panel (iv) has $(\alpha,\beta,K)=(1,0.5,8)$ with $(W,b)\in\{(1,4),(2,2),(4,1)\}$.

\begin{figure}[t]
\centering
\includegraphics{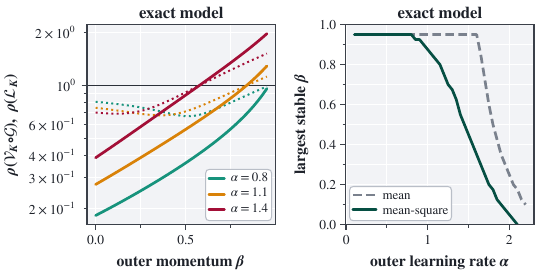}
\caption{\textbf{Proposition~\ref{prop:momentum} on the exact model.} Realizable linear regression in $12$ dimensions with a power-law feature covariance, $\eta=0.2$, $b=1$, $K=8$. \uline{Left}: the loop gain $\rho(\Vop_K\circ\Gop_{\alpha,\beta})$ (solid) rises with $\beta$ at every outer rate, but $\rho(\Lop_K)$ (dotted) does not, so Proposition~\ref{prop:momentum} is stated for the loop gain. At each rate the stable momenta run from zero to the value where the loop gain crosses one, $\beta=0.85$ for $\alpha=1.1$ and $\beta=0.575$ for $\alpha=1.4$. \uline{Right}: the largest stable momentum against the outer rate, for mean and for mean-square stability.}
\label{fig:exact-beta}
\end{figure}

\begin{figure}[t]
\centering
\includegraphics[width=\linewidth]{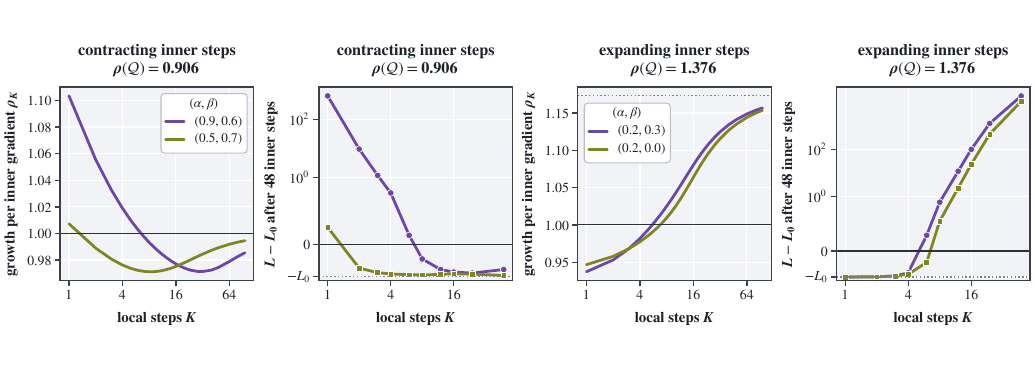}   %
\caption{\textbf{Long blocks are eventually stable or unstable according to the mean-square stability of the inner steps.} Two eight-dimensional models with rank-one sample curvatures and $b=1$, each at two outer settings whose stability changes with $K$. Each pair of panels shows the growth per inner gradient $\rho_K$ of \eqref{eq:rho-k}, and beside it the change in expected population loss over a budget of $48$ inner steps at the block lengths dividing it, from the loss $L_0=1/16$, with the largest possible fall dotted. \uline{Left pair}: $\rho(\Qop)=0.906$; longer blocks make both settings stable, and $\rho_K$ returns towards one past its best value, since the contraction per block saturates while each block spends more gradients. \uline{Right pair}: $\rho(\Qop)=1.376$; longer blocks make both settings unstable, and $\rho_K$ rises towards $\rho(\Qop)^{1/2}=1.173$ (dotted). The mean dynamics stay stable throughout, so these changes come from the block noise, and on a grid of outer settings no cell turns the other way.}
\label{fig:toy-blocklength}
\end{figure}

\paragraph{Limits of the exact model.} Inner steps drawn i.i.d.\ within a block make $\bar D_K$ symmetric, so the block means in the figures have real spectra. The figures do not include blocks whose draws are correlated, for which $\bar D_K$ can have complex eigenvalues. The proof of Proposition~\ref{prop:momentum} covers these blocks, because it does not need a real spectrum.

\paragraph{Checking the worker count $W^\star$.} Table~\ref{tab:toy-w} evaluates the count $W^\star$ of Corollary~\ref{cor:averaging}(a) on the model of Figure~\ref{fig:exact-beta}, against the second-moment map.

\begin{table}[h]
\centering
\small
\caption{\textbf{The number of workers predicted from the distribution of a single inner trajectory, on the model of Figure~\ref{fig:exact-beta}.} $W^\star$ is the count of Corollary~\ref{cor:averaging}(a), computed from the loop gain of a single worker, and the last two columns give $\rho(\Lop_{K,W})$ at $W^\star-1$ and $W^\star$ workers.}
\label{tab:toy-w}
\setlength{\tabcolsep}{5pt}
\begin{tabular}{@{}cccccc@{}}
\toprule
& & & & \multicolumn{2}{c}{$\rho(\Lop_{K,W})$} \\
\cmidrule(lr){5-6}
$\alpha$ & $\beta$ & $\rho(\Vop_K\circ\Gop_{\alpha,\beta})$ & $W^\star$ & $W^\star-1$ & $W^\star$ \\
\midrule
$1.3$ & $0.9$ & $1.486$ & $2$ & $1.274$ & $0.877$ \\
$1.5$ & $0.9$ & $2.386$ & $3$ & $1.090$ & $0.908$ \\
$1.6$ & $0.8$ & $3.391$ & $4$ & $1.036$ & $0.959$ \\
\bottomrule
\end{tabular}
\end{table}

\section{The Measures at a Checkpoint}
\label{app:measures}

\paragraph{Frozen predictions and actual runs.}\label{sec:frozen} The frozen linearization only approximates an actual run, because it uses the block Jacobian at the checkpoint $\theta_0$ for every outer step, whereas an actual run uses the block Jacobian at the point $\theta_t$ it has reached. We therefore compare the frozen linearization, to which Theorem~\ref{thm:full} applies exactly, with measurements along actual runs, the dotted and solid lines of Figure~\ref{fig:ckpt-amp}. The frozen linearization and the actual runs agree as long as the block Jacobians change little along the run, at least over short horizons. Neglecting the remainder of \eqref{eq:taylor} causes no error in the measured ratio, because the ratio is computed from a tangent along the run, an infinitesimal perturbation for which the remainder vanishes.

\paragraph{Which results hold at a network.} The results of Sections~\ref{sec:theory} and~\ref{sec:implications} fall into two groups. The moment recurrences \eqref{eq:moment-recurrences}, the decomposition \eqref{eq:decomposition}, Theorem~\ref{thm:full}, Proposition~\ref{prop:momentum} and Corollary~\ref{cor:averaging}(a) need only that the block operators of different outer steps are i.i.d. Under the frozen linearization the block Jacobians $D_t=J_{\omega_t}(\theta_0)$ are i.i.d., so these results hold exactly for shared-minibatch perturbations at a network, with $\Vop_K$ the covariance of $D_\omega$. Proposition~\ref{thm:blocklength}, Proposition~\ref{prop:totalbatch} and Corollary~\ref{cor:averaging}(b) also use the block moments \eqref{eq:block-moments}, which require the block operator to be a product of $K$ i.i.d.\ one-step maps. This holds when the Hessians are constant, as in the quadratic model, but not for the block Jacobian of a network, whose one-step factors take the Hessian at the point the inner trajectory has reached and so depend on the earlier minibatches. At a network we therefore test these three results rather than derive them.

\paragraph{Amplification of the full state.} Theorem~\ref{thm:full} concerns the state $z_t=(e_t,u_{t-1})$, and the finite-horizon ratio on it is
\begin{equation}
A(T)=\frac{\E\opnorm{z_T}^2}{\opnorm{z_0}^2}=\frac{\operatorname{tr}\Lop_K^T(z_0z_0^\top)}{\opnorm{z_0}^2},
\label{eq:amplification-full}
\end{equation}
while the parameter ratio \eqref{eq:amplification} is $\operatorname{tr}\bigl(\Em\Lop_K^T(z_0z_0^\top)\Em^\top\bigr)/\opnorm{e_0}^2$ with $z_0=(e_0,0)$; at a network both identities hold only under the frozen linearization. If $\rho(\Lop_K)<1$, both ratios tend to zero for large $T$ from every start, by Theorem~\ref{thm:full}(i). If $\rho(\Lop_K)\ge1$, they fail to decay whenever the start excites a mode that does not decay, as an isotropic start does by the empty-buffer lemma of Appendix~\ref{app:proof-full}. With momentum $A$ is not normal, so the ratio can rise for a while before it decays. We therefore always report the horizon $T$. We use the term amplification for growth over $T$ outer steps, which differs from the noise amplification of momentum methods, the steady-state variance of the iterates \citep{mohammadi2021robustness}.

\paragraph{Finite-horizon interpretation.} The stability criterion of Theorem~\ref{thm:full} is asymptotic, whereas $A_\theta(T)$ is measured over a finite continuation. Finite-horizon separation of coupled or perturbed trajectories is a common way to study training dynamics \citep{li2025coupling,altintas2025butterfly}, and we use $A_\theta(T)$ in the same way, as a finite-horizon measure of the mean-square dynamics. The empirical conclusions are largely unchanged when the horizon is varied from $T=10$ to $T=20$. Reading the amplification at $T=10$ instead of $T=20$ changes whether a setting amplifies or contracts at only $16$ of the $1{,}404$ settings we measured, and at $T=15$ at only $5$. More importantly for the implications, the predicted orderings are already present at $T=10$: $241$ of the $244$ orderings that the four implications predict hold at both $T=10$ and $T=20$.

\paragraph{Choice of the initial perturbation.} We draw the perturbation in the range of a block operator, $e_0\propto J_{\omega'}(\theta_0)\,g$ for a block $\omega'$ disjoint from the stream and an isotropic $g$, with an empty buffer.

\paragraph{Spread cost to second order.} To second order the spread cost in \eqref{eq:loss-split} is $\tfrac12\operatorname{tr}(H_t\hat C_t)$, where $\hat C_t$ is the covariance of the runs about their average, normalized by $S$, and $H_t$ is the full-data Hessian at $\hat\mu_t$. In expectation this is proportional to $\tfrac12\operatorname{tr}(H_t\Cov(\theta_t))$, and $\Cov(\theta_t)$ is the covariance that Proposition~\ref{prop:bridge} propagates (derivation in the next paragraph). Each direction of the spread therefore costs loss in proportion to the curvature along it. The spread cost is the loss analogue of the interpolation gap of \citet{wortsman2022soups}, and $L(\hat\mu_t)$ is meaningful because the runs share a start \citep{frankle2020linear}.

\paragraph{Derivation of the second-order spread cost.} For $S$ independent runs $\theta_1,\ldots,\theta_S$ from a common start, expand the full-data loss to second order about $\mu=\E[\theta]$, with $H_\mu$ the full-data Hessian there and $\hat\mu_S$ the average of the runs. The first-order terms cancel and
\[
\tfrac1S\sum_sL(\theta_s)-L(\hat\mu_S)\approx\tfrac12\Bigl[\tfrac1S\sum_s(\theta_s-\mu)^\top H_\mu(\theta_s-\mu)-(\hat\mu_S-\mu)^\top H_\mu(\hat\mu_S-\mu)\Bigr],
\]
whose expectation is $\tfrac12\bigl(1-\tfrac1S\bigr)\operatorname{tr}(H_\mu\Cov(\theta))$ because $\Cov(\hat\mu_S)=\Cov(\theta)/S$ for independent runs. Expanding about $\hat\mu_S$ instead makes the first-order term vanish exactly and gives $\tfrac12\operatorname{tr}(H_t\hat C_t)$ with $H_t$ the Hessian at $\hat\mu_S$, the form stated above; the two Hessians differ at higher order.

\section{Setup of the Network Experiments}
\label{app:net-setup}

Section~\ref{sec:checkpoint-measure} and Appendices~\ref{app:implications-all} and~\ref{app:continuations} measure an MLP, a ResNet-20 and Pythia language models at several checkpoints of their training. This appendix describes the networks and the checkpoints, and the table below summarizes them.

\begin{center}
\small
\setlength{\tabcolsep}{4.5pt}
\begin{tabular}{@{}llll@{}}
\toprule
network & training & checkpoints (step) & inner steps \\
\midrule
CIFAR MLP & SGD, rate $0.01$, batch $4$, $150{,}000$ steps & $60$k, $90$k, $120$k, $150$k & SGD at rate $0.01$ \\
ResNet-20 & SGD, rate $0.005$, batch $4$, $150{,}000$ steps & $90$k, $120$k, $150$k & SGD at rate $0.005$ \\
Pythia-70m & Adam, $143{,}000$ steps & $50$k, $100$k, $143$k & \begin{tabular}[t]{@{}l@{}}SGD at rate $10^{-3}$,\\ $3\times10^{-5}$, $6\times10^{-6}$\end{tabular} \\
\bottomrule
\end{tabular}
\end{center}

\paragraph{The networks.} The CIFAR MLP is a fully connected network of widths $3{,}072$, $512$, $512$ and $10$. The ResNet is the CIFAR ResNet-20 of \citet{he2016deep}, with $269{,}722$ parameters, in which every batch normalization is replaced by a group normalization of eight groups \citep{wu2018group}, so that the output for an image does not depend on the rest of its minibatch. Both are trained by plain SGD with batch $4$ and the squared loss on the same $8{,}192$ CIFAR-10 images \citep{krizhevsky2009cifar}. Pythia-70m is the language model of \citet{biderman2023pythia} at its public checkpoints. It was trained with Adam, and our inner steps are plain SGD. Its transformer blocks and final layer norm move, $18.9$ of its $70.4$ million parameters, while the input embedding and the output layer stay fixed. A minibatch is $b$ sequences of $2{,}048$ tokens drawn from one million sequences of its training data, the Pile \citep{gao2020pile}, and the loss is the mean next-token cross-entropy on $128$ held-out sequences. All experiments ran on NVIDIA H100 GPUs. The Pythia models are released under the Apache 2.0 license, CIFAR-10 asks only to be cited, and each component of the Pile keeps its own license.

\paragraph{Settings.} Unless a panel states otherwise, $K=14$, $b=4$ and $W=1$. Section~\ref{sec:checkpoint-measure} shows the MLP at step $120{,}000$, and Figures~\ref{fig:pair-mlp60k} to~\ref{fig:pair-pythia143k} show the other checkpoints in the same panels. On Pythia the momentum panel uses $b=8$, and the workers and split panels use $(\alpha,\beta)=(0.5,0.6)$.

\paragraph{Runs that diverge.} When some runs diverge, we average over the streams that survive and mark these points with an arrow. In Figure~\ref{fig:ckpt-amp} this happens within the $20$ outer steps for $b=1$ at $K=28$ and $56$ in panel (ii), where $41$ and $37$ of the $48$ runs survive, and for $(W,b)=(8,1)$ in panel (iv), where $47$ runs survive at $K=28$ and $39$ at $K=56$. For the frozen linearization we redrew the inner trajectories that diverged, at $b=1$ with $K=56$ and at $(8,1)$ with $K\ge28$, so these points average only over the inner trajectories that stayed bounded. In Figure~\ref{fig:ckpt-spread-impl}, runs diverged only at settings where they also left the basin, whose spread cost the figure does not plot, and at $(W,b)=(8,1)$ with $K=56$ in panel (iv), where $4$ of the $32$ runs diverged and we report the spread cost of the other $28$.

\section{Experiments across Networks and Checkpoints}
\label{app:implications-all}

Figures~\ref{fig:pair-mlp60k} to~\ref{fig:pair-pythia143k} show the other checkpoints as Figures~\ref{fig:ckpt-amp} and~\ref{fig:ckpt-spread-impl} show the MLP at step $120{,}000$.

\subsection{The CIFAR MLP}\label{app:all-mlp}

\begin{figure}[!htbp]
\centering
\includegraphics[scale=0.96]{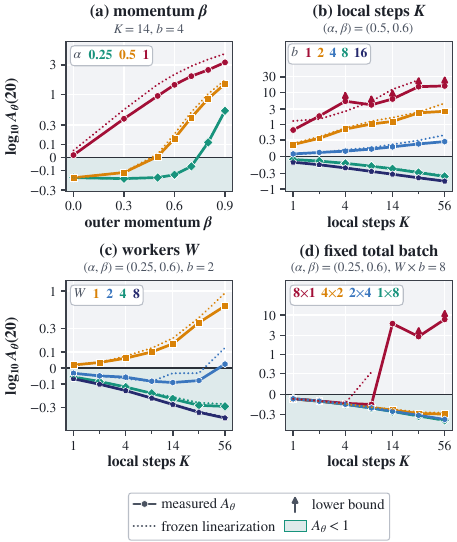}\hfill
\includegraphics[scale=0.96]{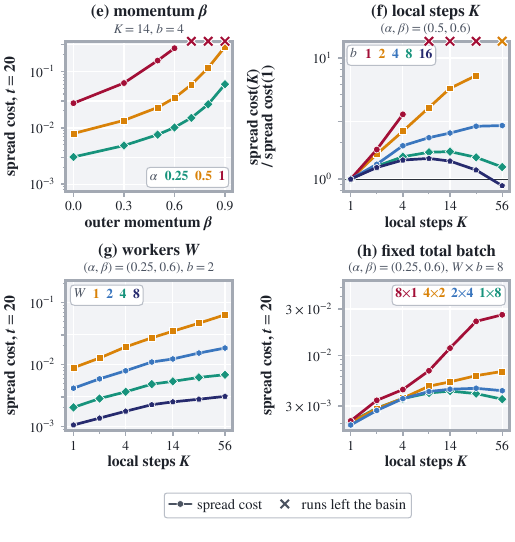}
\caption{\textbf{The CIFAR MLP at step $60{,}000$.} \uline{Left}: the amplification, as Figure~\ref{fig:ckpt-amp}. \uline{Right}: the spread cost, as Figure~\ref{fig:ckpt-spread-impl}.}
\label{fig:pair-mlp60k}
\end{figure}

\begin{figure}[tbp]
\centering
\includegraphics[scale=0.96]{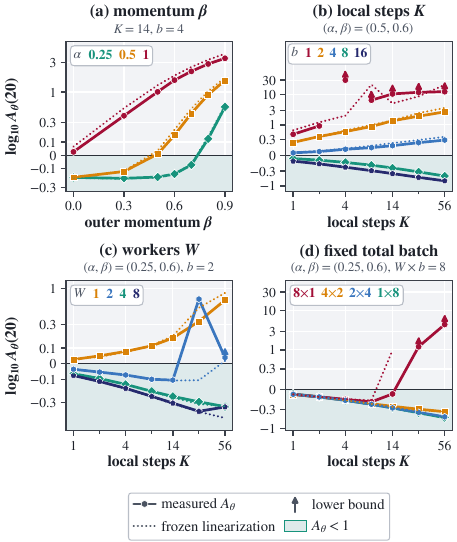}\hfill
\includegraphics[scale=0.96]{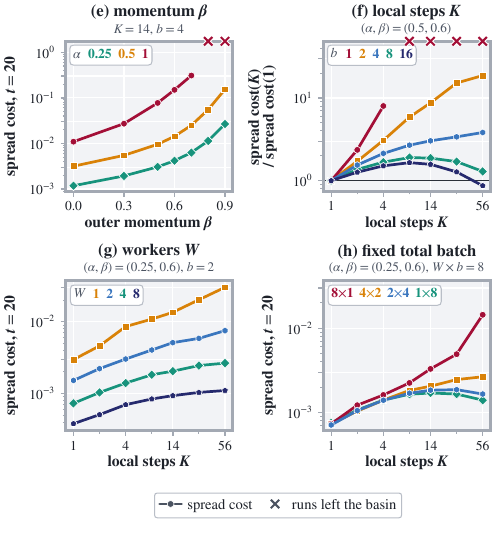}
\caption{\textbf{The CIFAR MLP at step $90{,}000$.} \uline{Left}: the amplification, as Figure~\ref{fig:ckpt-amp}. \uline{Right}: the spread cost, as Figure~\ref{fig:ckpt-spread-impl}.}
\label{fig:pair-mlp90k}
\end{figure}

\begin{figure}[tbp]
\centering
\includegraphics[scale=0.96]{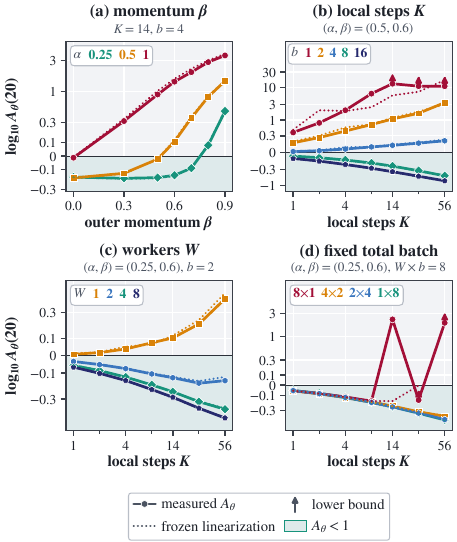}\hfill
\includegraphics[scale=0.96]{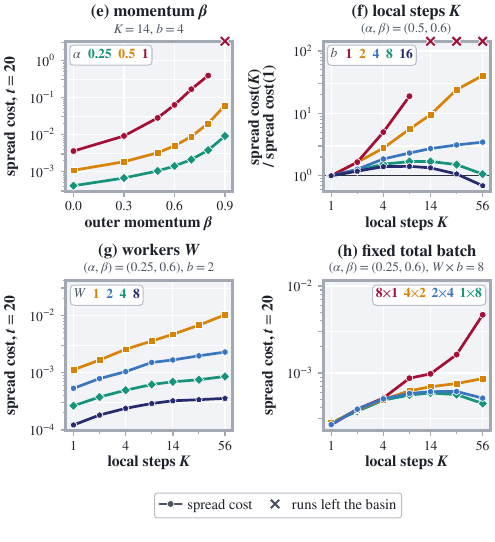}
\caption{\textbf{The CIFAR MLP at step $150{,}000$.} \uline{Left}: the amplification, as Figure~\ref{fig:ckpt-amp}. \uline{Right}: the spread cost, as Figure~\ref{fig:ckpt-spread-impl}.}
\label{fig:pair-mlp150k}
\end{figure}

\FloatBarrier

\subsection{ResNet-20}\label{app:all-resnet}

\begin{figure}[!htbp]
\centering
\includegraphics[scale=0.96]{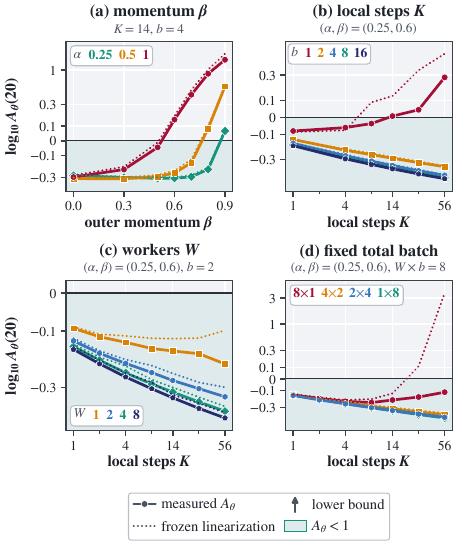}\hfill
\includegraphics[scale=0.96]{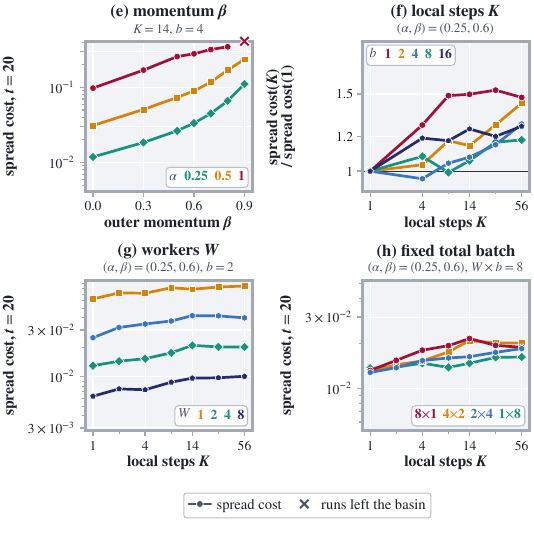}
\caption{\textbf{ResNet-20 at step $90{,}000$.} \uline{Left}: the amplification, as Figure~\ref{fig:ckpt-amp}. \uline{Right}: the spread cost, as Figure~\ref{fig:ckpt-spread-impl}.}
\label{fig:pair-resnet90k}
\end{figure}

\begin{figure}[tbp]
\centering
\includegraphics[scale=0.96]{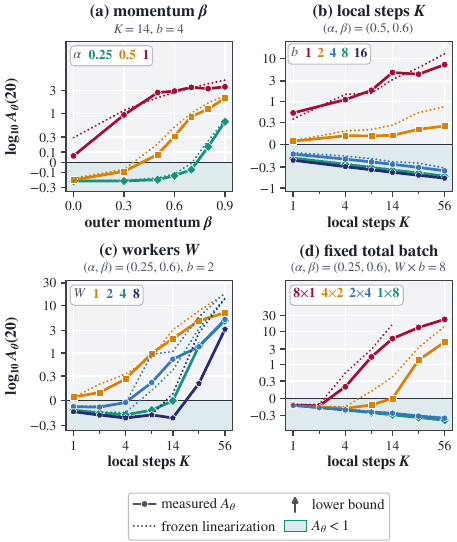}\hfill
\includegraphics[scale=0.96]{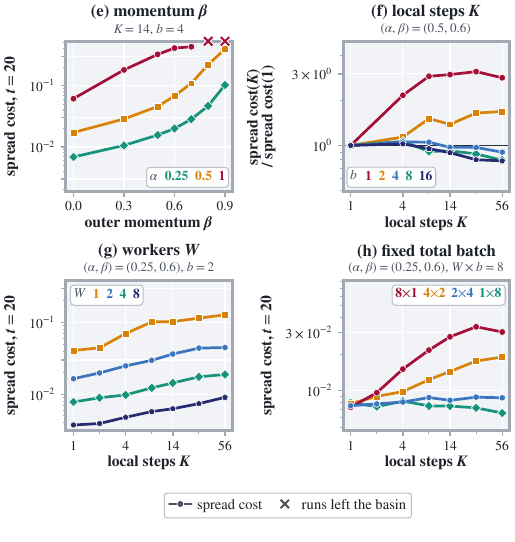}
\caption{\textbf{ResNet-20 at step $120{,}000$.} \uline{Left}: the amplification, as Figure~\ref{fig:ckpt-amp}. \uline{Right}: the spread cost, as Figure~\ref{fig:ckpt-spread-impl}.}
\label{fig:pair-resnet120k}
\end{figure}

\begin{figure}[tbp]
\centering
\includegraphics[scale=0.96]{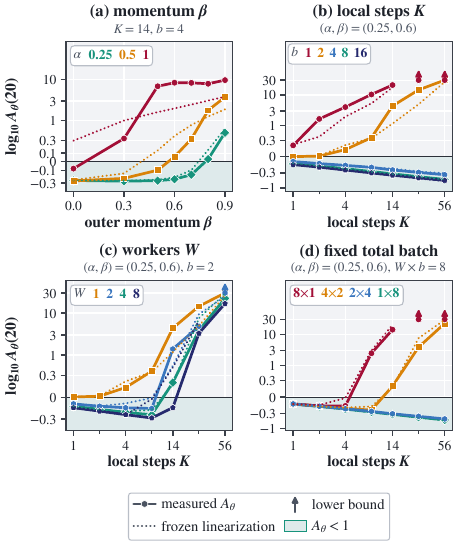}\hfill
\includegraphics[scale=0.96]{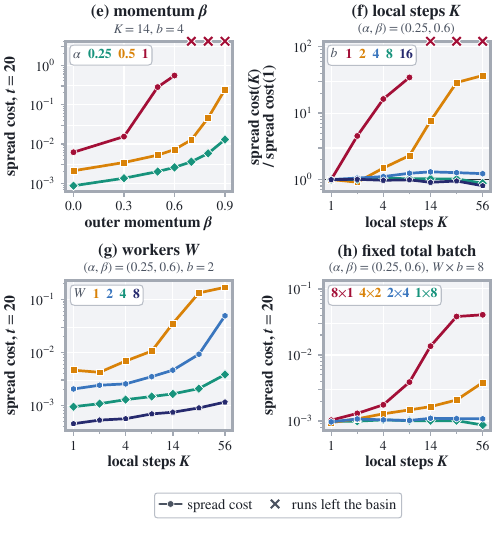}
\caption{\textbf{ResNet-20 at step $150{,}000$.} \uline{Left}: the amplification, as Figure~\ref{fig:ckpt-amp}. \uline{Right}: the spread cost, as Figure~\ref{fig:ckpt-spread-impl}.}
\label{fig:pair-resnet150k}
\end{figure}

\FloatBarrier

\subsection{Pythia-70m}\label{app:all-pythia}
On Pythia-70m, unlike on the other two networks, the spread cost does not separate the inner batches (panels (f) and (h) of Figures~\ref{fig:pair-pythia50k} to~\ref{fig:pair-pythia143k}). Two runs that share their minibatches do not differ in the noise of their draws, and their loss gap, the spread cost \eqref{eq:loss-split} of such a pair, isolates the effect of the amplification. The loss gap follows the amplification (Figures~\ref{fig:gap-pythia50k} to~\ref{fig:gap-pythia143k}). It grows with $K$ at $b=1$ and shrinks at $b=4$, $8$ and $16$, and it grows for the split $8\times1$ while it shrinks for $1\times8$, as the amplification does in panels (b) and (d).

\begin{figure}[!htbp]
\centering
\includegraphics[scale=0.96]{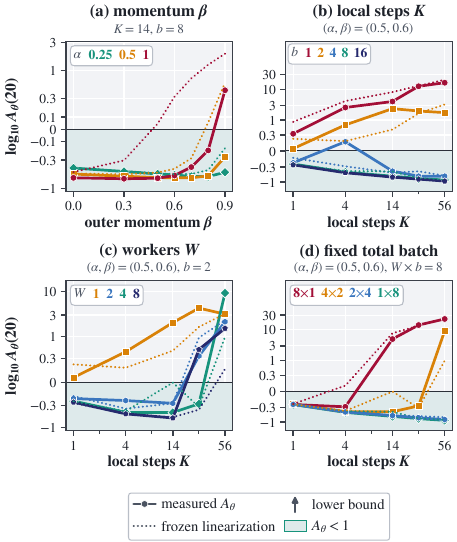}\hfill
\includegraphics[scale=0.96]{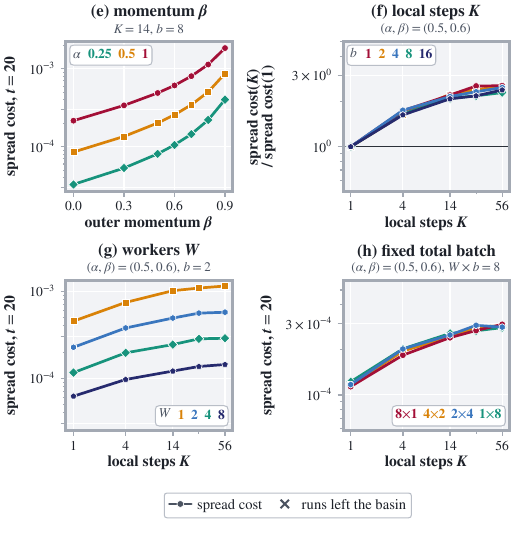}
\caption{\textbf{Pythia-70m at step $50{,}000$.} \uline{Left}: the amplification, as Figure~\ref{fig:ckpt-amp}. \uline{Right}: the spread cost, as Figure~\ref{fig:ckpt-spread-impl}.}
\label{fig:pair-pythia50k}
\end{figure}

\begin{figure}[tbp]
\centering
\includegraphics[scale=0.96]{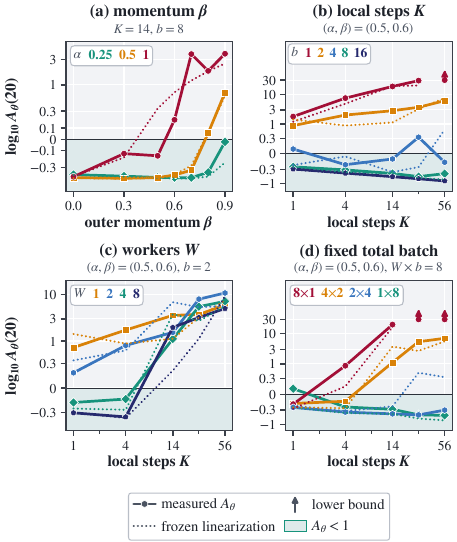}\hfill
\includegraphics[scale=0.96]{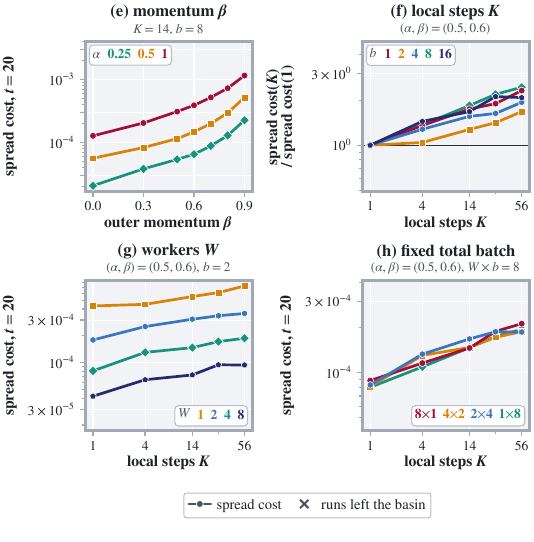}
\caption{\textbf{Pythia-70m at step $100{,}000$.} \uline{Left}: the amplification, as Figure~\ref{fig:ckpt-amp}. \uline{Right}: the spread cost, as Figure~\ref{fig:ckpt-spread-impl}.}
\label{fig:pair-pythia100k}
\end{figure}

\begin{figure}[tbp]
\centering
\includegraphics[scale=0.96]{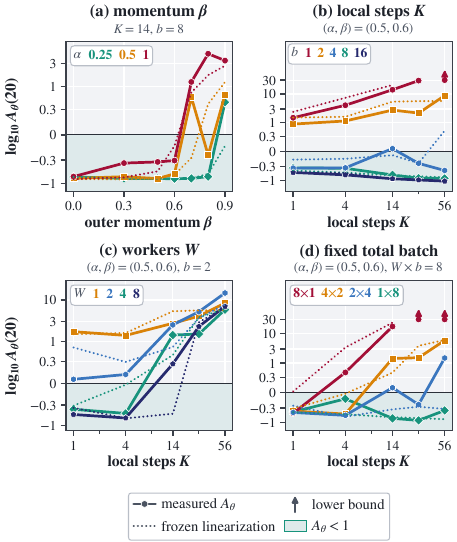}\hfill
\includegraphics[scale=0.96]{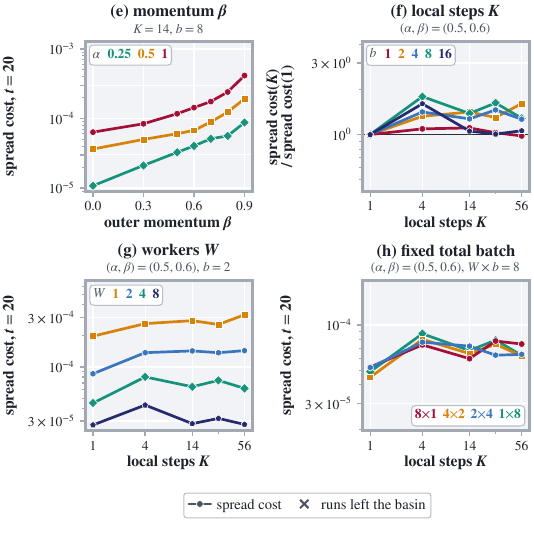}
\caption{\textbf{Pythia-70m at step $143{,}000$.} \uline{Left}: the amplification, as Figure~\ref{fig:ckpt-amp}. \uline{Right}: the spread cost, as Figure~\ref{fig:ckpt-spread-impl}.}
\label{fig:pair-pythia143k}
\end{figure}

\begin{figure}[tbp]
\centering
\includegraphics[width=0.49\linewidth]{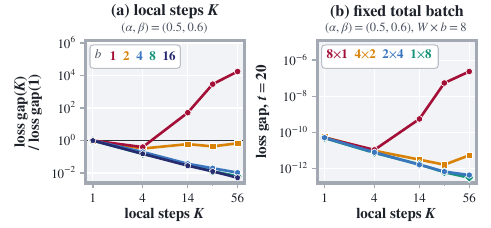}
\caption{\textbf{The loss gap of pairs of runs that share their minibatches, Pythia-70m at step $50{,}000$.} Two runs start $10^{-5}$ apart and see the same minibatches, and the loss gap is their mean loss minus the loss at their midpoint after $20$ outer steps (median over pairs).}
\label{fig:gap-pythia50k}
\end{figure}

\begin{figure}[tbp]
\centering
\includegraphics[width=0.49\linewidth]{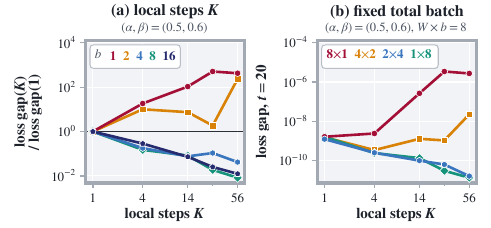}
\caption{\textbf{The loss gap of pairs of runs that share their minibatches, Pythia-70m at step $100{,}000$.} As Figure~\ref{fig:gap-pythia50k}.}
\label{fig:gap-pythia100k}
\end{figure}

\begin{figure}[tbp]
\centering
\includegraphics[width=0.49\linewidth]{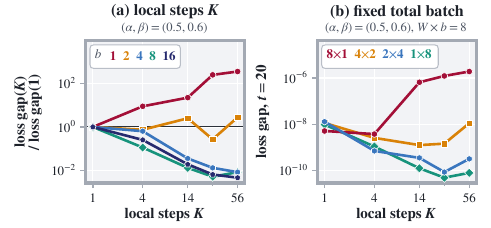}
\caption{\textbf{The loss gap of pairs of runs that share their minibatches, Pythia-70m at step $143{,}000$.} As Figure~\ref{fig:gap-pythia50k}.}
\label{fig:gap-pythia143k}
\end{figure}

\FloatBarrier
\subsection{Larger language models}\label{app:all-larger}
Figure~\ref{fig:pythia-larger} repeats the amplification panels on Pythia-160m, Pythia-410m and Pythia-1b at step $100{,}000$, with inner SGD at rates $1.8\times10^{-5}$, $2.4\times10^{-4}$ and $1.2\times10^{-2}$. As on the smaller networks, small inner batches amplify more as $K$ grows and large ones less (panels (b), (f) and (j)), and the most split total batch amplifies most at long blocks (panels (d), (h) and (l)).

\begin{figure}[!htbp]
\centering
\includegraphics[scale=0.96]{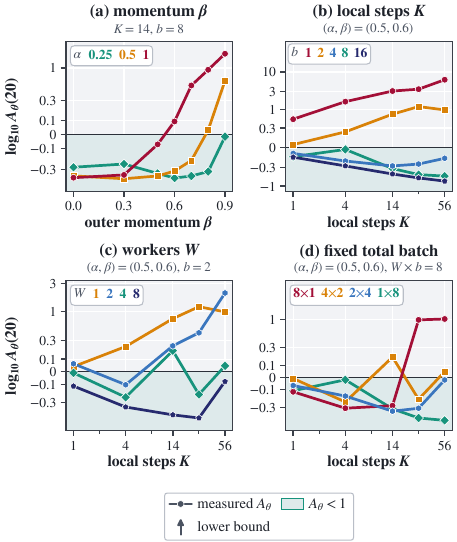}\hspace{2em}
\includegraphics[scale=0.96]{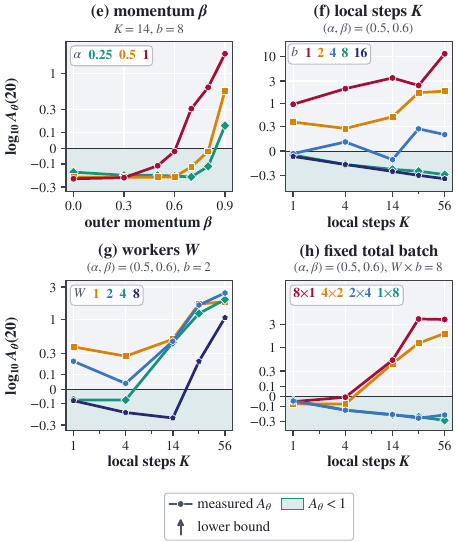}\\[6pt]
\includegraphics[scale=0.96]{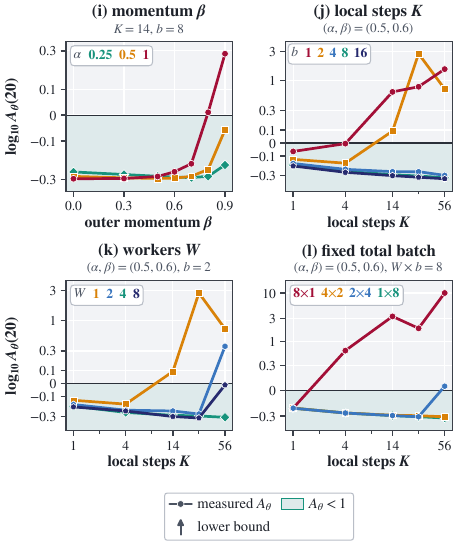}
\caption{\textbf{Pythia-160m, Pythia-410m and Pythia-1b at step $100{,}000$.} \uline{Top left}: Pythia-160m. \uline{Top right}: Pythia-410m. \uline{Bottom}: Pythia-1b. The amplification, as Figure~\ref{fig:ckpt-amp}.}
\label{fig:pythia-larger}
\end{figure}

\FloatBarrier
\section{Phase Diagrams at a Checkpoint}\label{app:phase-diagrams}

Figure~\ref{fig:phase} draws the mean-square stable region of the exact model in the $(\alpha,\beta)$ plane. Figures~\ref{fig:phase-mlp} and~\ref{fig:phase-batch} measure the same plane at the CIFAR MLP checkpoint, as the amplification $\log_{10}A_\theta(20)$ at each setting, and Figure~\ref{fig:phase-mlp} also shows the spread cost \eqref{eq:loss-split} after the same $20$ outer steps.

\begin{figure}[!htbp]
\centering
\includegraphics[width=\linewidth]{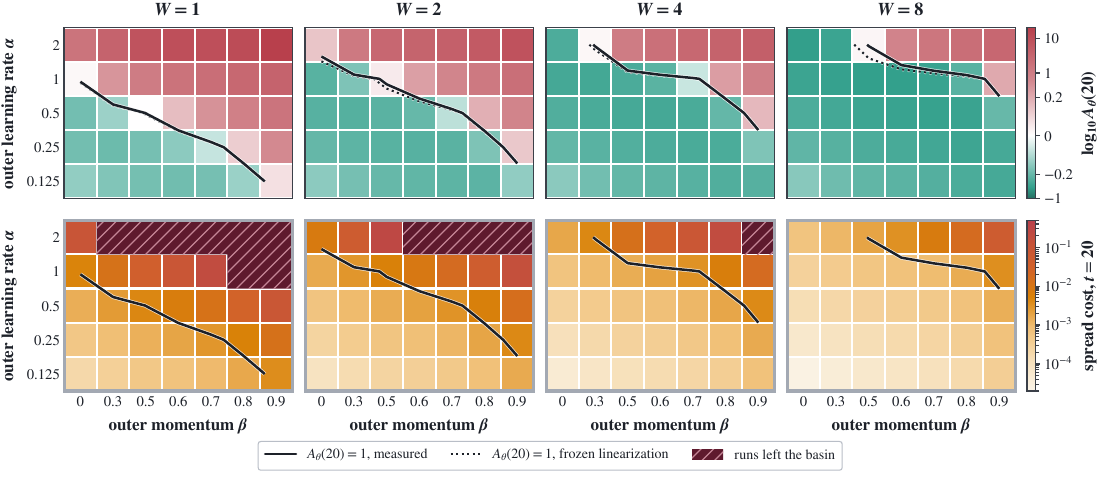}   %
\caption{\textbf{The $(\alpha,\beta)$ plane at the CIFAR MLP checkpoint.} $K=14$, $b=4$, step $120{,}000$, with $W=1,2,4,8$ workers. \uline{Top}: $\log_{10}A_\theta(20)$. The solid line is the measured boundary $A_\theta(20)=1$, below which a setting contracts, and the dotted line is that of the frozen linearization. \uline{Bottom}: the spread cost \eqref{eq:loss-split} after $20$ outer steps, with the boundary of the top row drawn over it. Hatching marks the settings whose runs left the checkpoint's basin.}
\label{fig:phase-mlp}
\end{figure}

\begin{figure}[tbp]
\centering
\includegraphics[width=\linewidth]{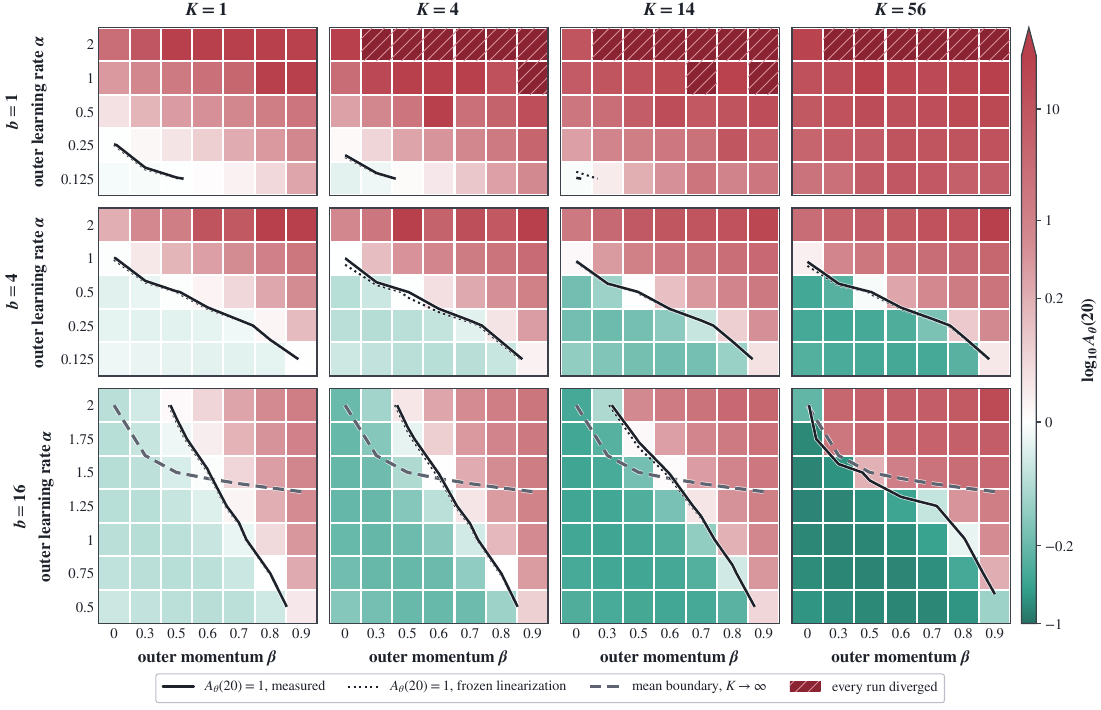}   %
\caption{\textbf{The $(\alpha,\beta)$ plane by inner batch and block length at the CIFAR MLP checkpoint.} $\log_{10}A_\theta(20)$ at step $120{,}000$ with one worker, for $b=1$, $4$ and $16$ (rows) and $K=1$, $4$, $14$ and $56$ (columns). The $b=16$ row samples $\alpha$ from $0.5$ to $2$ in steps of $0.25$. The solid line is the measured boundary $A_\theta(20)=1$, the dotted line that of the frozen linearization, and the gray dashed line in the $b=16$ row the mean condition~\eqref{eq:mean-condition} at $\lambda=1$, $\alpha=2(1+\beta)/(1+2\beta)$, the boundary for $K\to\infty$ (Proposition~\ref{thm:blocklength}(i)). At a hatched setting every run diverged within the $20$ outer steps.}
\label{fig:phase-batch}
\end{figure}

\paragraph{The CIFAR MLP.} The contracting region grows with the number of workers, as in the right panel of Figure~\ref{fig:phase}, and at every $W$ its boundary falls from large $\alpha$ at small $\beta$ to small $\alpha$ at large $\beta$. The frozen linearization has the sign of the measured amplification at every setting. Leaving out the settings whose runs left the checkpoint's basin, the spread cost rises with $\beta$ and with $\alpha$ and falls with $W$.

\paragraph{Inner batch and block length.} Figure~\ref{fig:phase-batch} draws the plane with one worker for inner batches $b=1$, $4$ and $16$ and block lengths $K=1$, $4$, $14$ and $56$, the two quantities varied in the left and middle panels of Figure~\ref{fig:phase}. As $K$ grows, the contracting region shrinks to nothing at $b=1$ and barely moves at $b=4$. At $b=16$ it keeps its size, and by $K=56$ its boundary runs along the mean boundary for $\beta$ up to $0.5$, as in the left panel of Figure~\ref{fig:phase}, where the mean-square boundary approaches a mean boundary that itself moves in as $K$ grows. The frozen linearization has the sign of the measured amplification at almost every setting.

\FloatBarrier

\section{Continuations}
\label{app:continuations}

Figure~\ref{fig:other-spread} continues training from a checkpoint of each network at three outer settings with $K=14$ and $b=4$, for $400$ outer steps on the CIFAR networks and $100$ on Pythia, once with one worker and once with two. It shows whether the amplification measured at the checkpoint reaches the training loss.

\begin{figure}[!htbp]
\centering
\includegraphics[width=\linewidth]{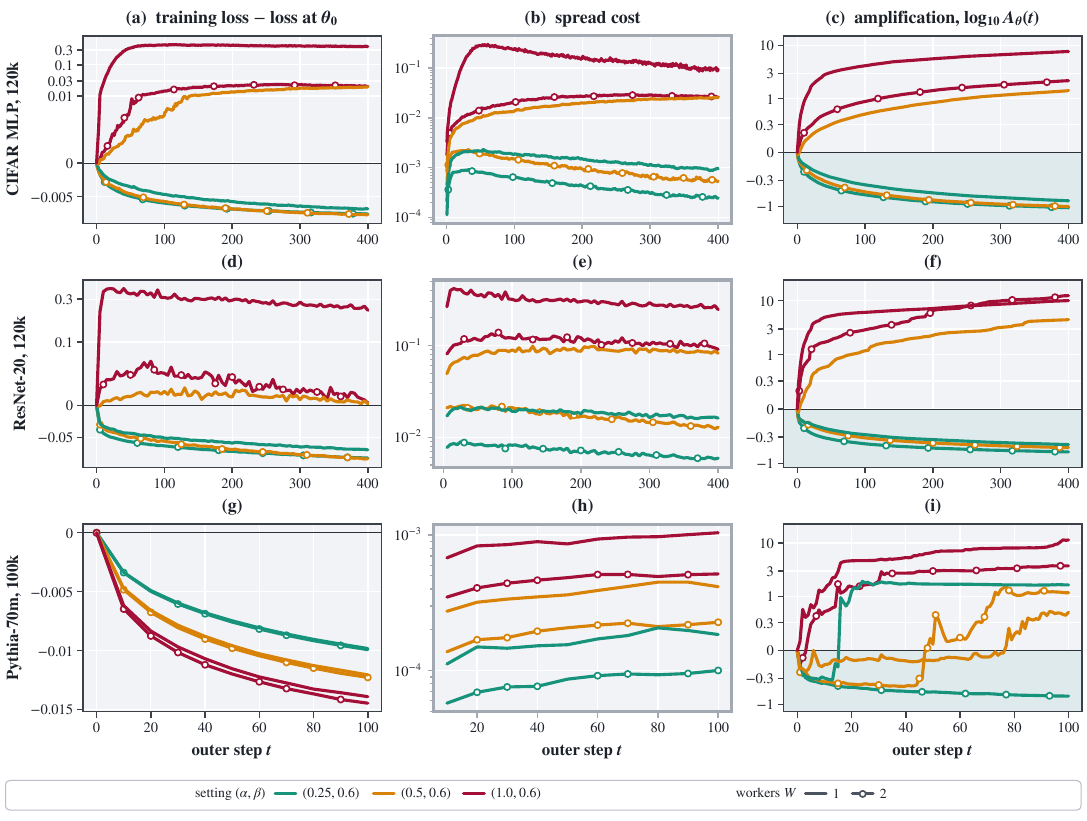}   %
\caption{\textbf{Training loss, spread cost and amplification over a continuation, on the CIFAR MLP (\uline{top}), ResNet-20 (\uline{middle}) and Pythia-70m (\uline{bottom}).} $K=14$ and $b=4$ at three outer settings, with one worker as a plain line and two with hollow circles. The columns show the training loss relative to its value at the checkpoint, the spread cost and $\log_{10}A_\theta(t)$.}
\label{fig:other-spread}
\end{figure}

Where a setting contracts, training carries on as usual. At $(0.25,0.6)$ the loss of both CIFAR networks keeps falling and the spread cost stays small, with one worker and with two.

On the CIFAR networks, amplification reaches the loss through the spread cost. At $(0.5,0.6)$ one worker amplifies, and the mean loss of the runs ends above the loss at the checkpoint, three times above it on the MLP, while the loss at the runs' average keeps falling. The rise comes from the runs moving apart, which is what the spread cost measures.

A second worker undoes this. With two workers the same setting contracts on both networks and the loss falls as in a contracting setting, and averaging over workers lowers the spread cost at every setting, as implication (iii) predicts.

On Pythia the loss falls at every setting over the $100$ outer steps. This is the normal course of training from a checkpoint taken before the end of training, at step $100{,}000$ of $143{,}000$. Over this horizon, the spread cost that the amplification causes stays far below the progress of training.

\FloatBarrier

\section{Examples of What Mean-Stability and Bounded-Noise Analyses Miss}
\label{app:related}

\subsection{Mean-stability analyses}
\label{app:affine-example}

Some analyses keep the mean of the block operator and the covariance of the pseudo-gradient at a reference point but leave out the block noise. Example~\ref{ex:counterexample} shows that these quantities alone cannot decide stability.

\begin{example}[Same observables, opposite stability]
\label{ex:counterexample}
For every $\sigma > 0$ there are two scalar two-step inner laws with
$\E[D_\omega] = 1/2$, $\E[p_\omega] = 0$ and
$\Var(p_\omega) = \sigma^2$, such that outer relaxation with $\alpha = 18/5$
has a finite stationary second moment for one law and an unbounded second moment
for the other.
\end{example}

One law has $D_\omega \equiv 1/2$, and the other has $D_\omega \in \{1/4, 3/4\}$ with equal probability. The two laws have the same mean block operator and the same pseudo-gradient covariance, so no analysis based on those quantities can distinguish them at the reference point. The outer update is relaxation ($\beta = 0$). The sample losses below do not share a minimizer, so the pseudo-gradient is affine rather than linear in the iterate, and the construction lies outside the exact model of Section~\ref{sec:setup-model}.

Set the scalar inner step size to one. At each outer iteration draw fresh
$q_\omega\in(0,1)$ and $\zeta_\omega\in\{-\sigma,+\sigma\}$, independently
of the current iterate. Use the two strongly convex inner losses
\[
f_{0,\omega}(x)=\frac{1-q_\omega}{2}x^2,
\qquad
f_{1,\omega}(x)=\frac{1-q_\omega}{2}x^2+\zeta_\omega x.
\]
The two gradient steps are
\[
x_1=q_\omega x,
\qquad
x_2=q_\omega x_1-\zeta_\omega
=q_\omega^2x-\zeta_\omega,
\]
so
\begin{equation}
P_\omega(x)=x-x_2=D_\omega x+\zeta_\omega,
\qquad D_\omega=1-q_\omega^2.
\label{eq:counterexample-map}
\end{equation}
For the first law take $q_\omega=2^{-1/2}$, hence $D_\omega=1/2$. For the
second law take $D_\omega\in\{1/4,3/4\}$ with equal probability,
equivalently $q_\omega\in\{\sqrt3/2,1/2\}$. In both cases draw
$\zeta_\omega$ independently and with equal probability from
$\{-\sigma,+\sigma\}$. Both laws satisfy
\[
\E[D_\omega]=\frac12,
\quad
\E[P_\omega(x)]=\frac12x,
\quad
\Var(P_\omega(0))=\sigma^2.
\]

Under outer relaxation,
$x_{t+1}=(1-\alpha D_{\omega_t})x_t-\alpha\zeta_{\omega_t}$.
Since $D_{\omega_t}$ and $\zeta_{\omega_t}$ are independent of $x_t$ and of each other, and $\E[\zeta_\omega]=0$, the cross term vanishes and
\begin{equation}
\E[x_{t+1}^2]
=\kappa\E[x_t^2]+\alpha^2\sigma^2,
\qquad
\kappa=\E[(1-\alpha D_\omega)^2].
\end{equation}
At $\alpha=18/5$ the first law has $\kappa=16/25<1$ and the second has
$\kappa=16/25+\alpha^2\Var(D_\omega)=29/20>1$. The first law therefore has
stationary second moment $\alpha^2\sigma^2/(1-\kappa)=36\sigma^2$, and the
second diverges in mean square.

The first law is an instance of Proposition~\ref{prop:bridge}(b) in one dimension. At $\beta=0$ the buffer drops out of the error recurrence, $\Lop$ is multiplication by $\kappa=16/25$, $\Bm$ is $-\alpha=-18/5$ on the error block, $S_p=\sigma^2$, and the floor $(\mathcal I-\Lop)^{-1}(\Bm S_p\Bm^\top)=\alpha^2\sigma^2/(1-\kappa)=36\sigma^2$ is the stationary second moment found above. For the second law $\rho(\Lop)=\kappa=29/20>1$ and the forcing has weight on the only mode, so part (c) gives the divergence.

\subsection{Bounded-noise guarantees}
\label{app:bounded-noise-example}

\citet{khaled2025outer} bound the expected suboptimality of local SGD with an outer learning rate, with heavy-ball momentum, and with Nesterov's accelerated gradient method with round-dependent coefficients. They assume a convex and smooth objective with uniformly bounded gradient variance, and their worst-case guarantees use the gradient noise only through this uniform bound. The stability boundary, however, depends on how the noise of the next block grows with the current error, and the block noise of \eqref{eq:decomposition} describes this growth.

\begin{example}[Same guarantee, opposite local stability]
\label{ex:bounded-noise}
For $c>0$ let $\zeta_c$ equal $-1$ with probability $c/(c+1)$ and $c$ with probability $1/(c+1)$, and let the sample loss and its gradient be
\[
\ell_c(x)=\frac{x^2}{2}+\frac{\zeta_c}{c}\log\cosh(\sqrt c\,x),
\qquad
g_c(x)=\ell_c'(x)=x+\frac{\zeta_c}{\sqrt c}\tanh(\sqrt c\,x).
\]
Then, for every $c$: (i) $\E[g_c(x)]=x$, so the expected objective is $x^2/2$, convex with $1$-Lipschitz gradient and minimizer $0$; (ii) $\Var(g_c(x))=\tanh^2(\sqrt c\,x)\le1$ for every $x$; (iii) every sample loss is convex with minimizer $0$. Consequently the worst-case guarantees of \citet{khaled2025outer} take the same value for $c=1$ and $c=31$ at any common initialization. Yet one SGD step at rate $\eta$ multiplies an error at the minimizer by $1-\eta(1+\zeta_c)$, whose mean is $1-\eta$ for every $c$ while its second moment is
\[
q_c=\E\bigl[(1-\eta(1+\zeta_c))^2\bigr]=1-2\eta+\eta^2(1+c),
\]
equal to $0.78125$ for $c=1$ and $1.25$ for $c=31$ at $\eta=1/8$. The first problem therefore contracts errors in mean square at the minimizer, and the second amplifies them.
\end{example}

\begin{proof}
$\E[\zeta_c]=-c/(c+1)+c/(c+1)=0$ gives (i), and $\E[\zeta_c^2]=c/(c+1)+c^2/(c+1)=c$ gives $\Var(g_c(x))=\E[\zeta_c^2]\tanh^2(\sqrt c\,x)/c$, which is (ii). For (iii), $\ell_c''(x)=1+\zeta_c\operatorname{sech}^2(\sqrt c\,x)$ is at least $1-\operatorname{sech}^2(\sqrt c\,x)\ge0$ when $\zeta_c=-1$ and at least $1$ when $\zeta_c=c$, and $g_c(0)=0$ for both values. At the minimizer $g_c'(0)=1+\zeta_c$, with $\E[1+\zeta_c]=1$ and $\E[(1+\zeta_c)^2]=1+c$, which gives $q_c$.
\end{proof}

For a block of $K$ steps at the minimizer, \eqref{eq:block-moments} gives $d_1(K)=1-(1-\eta)^K$ for both problems and $d_2(K)=1-2(1-\eta)^K+q_c^K$, so the two problems have the same mean block operator at every $K$ but different second moments. Their guarantee for the outer update with a learning rate and no momentum applies here, since its step-size condition $\eta L\bigl(1+(\gamma-1)_+H\bigr)\le1/4$ holds with $L=1$, $\gamma=1$ and $\eta=1/8$. At $(\alpha,\beta)=(1,0)$ the outer step is $\theta_{t+1}=\theta_{K,\omega_t}$ and $\rho(\Lop_K)=q_c^K$, which is $0.37$ for $c=1$ and $2.44$ for $c=31$ at $K=4$, and $0.14$ and $5.96$ at $K=8$.

The same split appears under outer Nesterov momentum with constant coefficients, as in \eqref{eq:snoo}, once the blocks are long enough. At $\alpha=0.15$, $\beta=0.6$ and $\eta=1/8$, $\rho(G)$ of \eqref{eq:snoo-second-moment-matrix} is $0.637$ at $K=4$ and $0.545$ at $K=8$ for $c=1$, and $0.990$ and $1.196$ for $c=31$. Both problems are therefore stable at $K=4$, but the second becomes unstable at $K=8$. This instability is local, because the noise saturates away from the minimizer. It does not contradict that guarantee, which allows an error level that depends on the noise.

\end{document}